\documentclass{article}

\usepackage[final,main]{neurips_2026}

\usepackage[utf8]{inputenc} % allow utf-8 input
\usepackage[T1]{fontenc}    % use 8-bit T1 fonts
\usepackage{hyperref}       % hyperlinks
\usepackage{url}            % simple URL typesetting
\usepackage{booktabs}       % professional-quality tables
\usepackage{amsfonts}       % blackboard math symbols
\usepackage{nicefrac}       % compact symbols for 1/2, etc.
\usepackage{microtype}      % microtypography
\usepackage[dvipsnames]{xcolor} % colors (with dvipsnames for AutoML definitions)

\usepackage{amsmath}
\usepackage{amsthm}
\usepackage{mathtools}
\usepackage{bm}

\usepackage{algorithm}
\usepackage[noend]{algcompatible}
\usepackage{verbatim}

\usepackage{graphicx}
\usepackage{wrapfig}
\usepackage{subcaption}
\usepackage{csquotes}
\usepackage{pifont} % for cmark/xmark

\usepackage{tikz}
\usepackage{pgfplots}
\usetikzlibrary{shapes.geometric, arrows, arrows.meta, positioning, colorbrewer, patterns, decorations.pathreplacing, fit, backgrounds, shapes, calc, shadows}

\tikzstyle{action} = [rectangle, rounded corners, minimum width=3cm, minimum height=1cm,text centered, draw=sta-orange, fill=sta-orange!30]
\tikzstyle{instance} = [rectangle, rounded corners, minimum width=3cm, minimum height=1.5cm,text centered, draw=sta-purple, fill=sta-purple!30]
\tikzstyle{arrow} = [thick,->,>=stealth, auto, line width=0.5mm]
\tikzstyle{line} = [draw, -latex']

\newcommand{\SumNode}{\mathsf{S}}
\newcommand{\ProductNode}{\mathsf{P}}
\newcommand{\Node}{\mathsf{N}}
\newcommand{\Leaf}{\mathsf{L}}
\newcommand{\graph}{\mathcal{G}}
\newcommand{\ch}[1]{\operatorname{ch}(#1)}

\newcommand{\solidline}[1]{\protect \tikz[baseline=-0.5ex] \protect \draw[very thick, #1] (0,0) -- (0.4,0);}
\newcommand{\dashedline}[1]{\protect \tikz[baseline=-0.5ex] \protect \draw[very thick, dashed, #1] (0,0) -- (0.4,0);}

\newcommand{\dashdottedline}[1]{\protect \tikz[baseline=-0.5ex] \protect \draw[very thick, dash pattern=on 4pt off 2pt on 1pt off 2pt, #1] (0,0) -- (0.5,0);}

\definecolor{grey}{HTML}{424241}
\definecolor{boTPE}{HTML}{56048C}
\definecolor{boRF}{HTML}{AB0392}
\definecolor{DisCoMBO}{HTML}{931147}
\definecolor{DisCoMBOearly}{HTML}{B6175A}
\definecolor{DisCoMBOmany}{HTML}{E2AEC8}
\definecolor{DisCoMBOrecover}{HTML}{D687AC}
\definecolor{IBOearly}{HTML}{0D47A1}
\definecolor{IBO}{HTML}{0B3D89}
\definecolor{dynabo}{HTML}{16501B}
\definecolor{PiBOcol}{HTML}{014d08}
\definecolor{GPintro}{HTML}{474747}
\definecolor{mygray}{RGB}{190,190,190}

\newtheorem{defin}{Definition}
\newtheorem{prop}{Proposition}

\newtheorem{lem}{Lemma}

\title{DisCoMBO: Steering Expert-in-the-Loop Black Box Optimization via Distributional Conformance}

\author{%
  Jonas Seng\thanks{Equal contribution.} \\
  Computer Science Department\\
  TU Darmstadt \\
  \texttt{jonas.seng@tu-darmstadt.com} \\
  \And
  Bennet Wittelsbach\footnotemark[1] \\
  Computer Science Department\\
  TU Darmstadt \\
  \And
  Kristian Kersting \\
  TU Darmstadt, Hessian.AI, DFKI \\
}

\begin{document}

\maketitle

\begin{abstract}
Sequential Model-Based Optimization (SMBO) traditionally relies on Bayesian or ensembling surrogates for uncertainty quantification. While historically treated as fully data-driven, SMBO increasingly integrates external domain expertise to accelerate discovery. To overcome the opaque guidance and diminished integration fidelity of standard acquisition re-weighting, Probabilistic Circuits (PCs) have emerged as a generative surrogate alternative, enabling direct knowledge injection via conditional sampling. However, these generative routines lack the formal exploration-exploitation semantics required for rigorous optimization. We introduce the \textbf{Dis}tributional \textbf{Co}nformance Score (DisCo), a novel metric that unifies the flexibility and efficiency of PCs with a formal uncertainty framework. DisCo provides a bounded, $[0, 1]$-normalized measure of model "surprise" that (1) recovers properties comparable to kernel-based uncertainty known from, e.g., Gaussian Processes, while maintaining linear-time inference, and (2) enables accurate assessment of conformance of external knowledge w.r.t. model evidence. We then present DisCoMBO, a framework leveraging these properties for robust, knowledge-aware optimization. We prove that DisCoMBO is a zero-regret algorithm and demonstrate its effectiveness across diverse benchmarks from AutoML, material optimization, and wind park optimization.
\end{abstract}

\section{Introduction} \label{sec:Intro}
Black-Box Optimization (BBO) is a cornerstone of modern scientific discovery and serves as an optimization engine throughout various disciplines and industries. Examples are material discovery~\citep{kusne2020bomatsci}, wind park layout optimization~\citep{ju2019windoptim}, and hyperparameter tuning in machine learning~\citep{schede_2022}. Given the high cost of evaluating these objective functions, Sequential Model-Based Optimization (SMBO) has emerged as the dominant paradigm. By alternating between learning a surrogate model and acquiring the next candidate configuration, SMBO seeks to find the global optimum while navigating the search space with maximum sample efficiency.

\begin{figure*}[t]
\centering
    \begin{subfigure}[c]{0.48\textwidth}
        \centering
        \includegraphics[width=\textwidth]{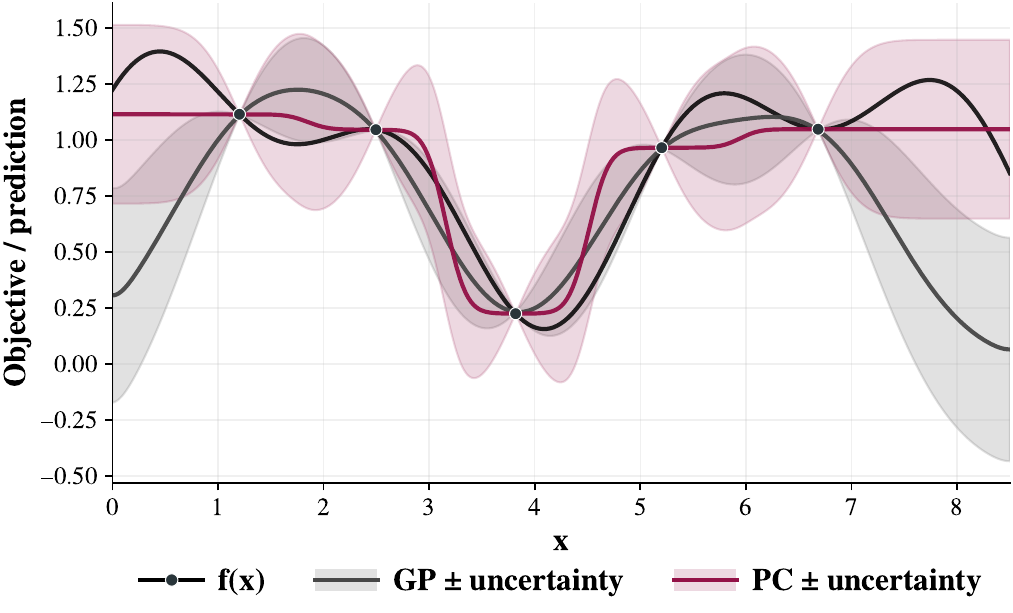}
        \caption{}
    \end{subfigure}%
    \hfill
    \begin{subfigure}[c]{0.48\textwidth}
        \centering
        \includegraphics[width=\textwidth]{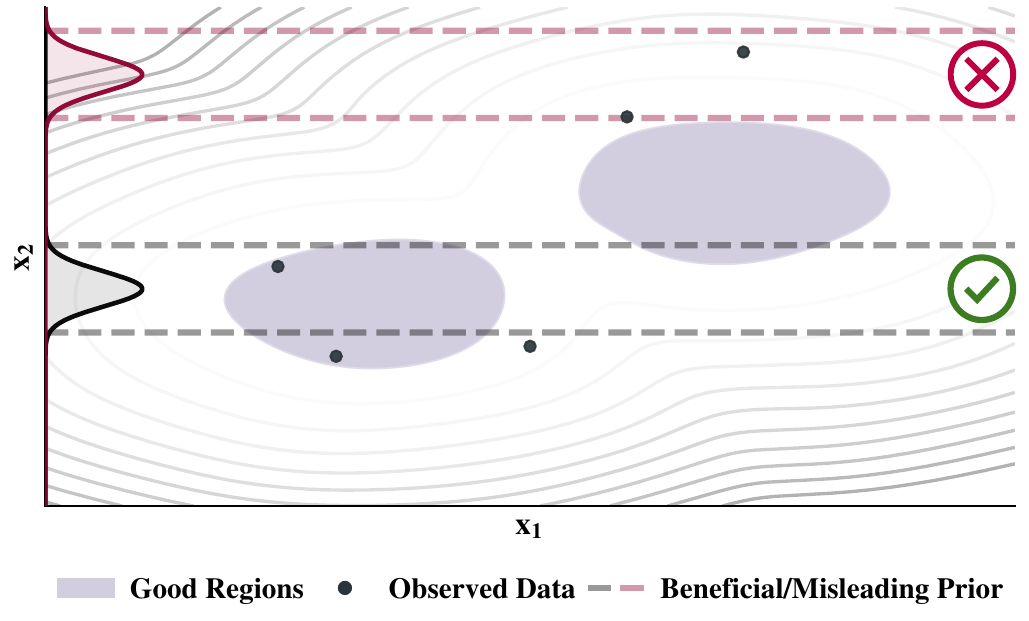}
        \caption{}
    \end{subfigure}
    \caption{\textbf{DisCO provides uncertainty estimates for PC-based SMBO and allows for accurate conformance assessment of external knowledge -- both with linear inference time.}  
    (a) Given a learned PC from observations, DisCo (\solidline{DisCoMBO}) provides uncertainty estimates with \textbf{linear inference cost} based on how well a given point $\mathbf{x}$ conforms with the distribution represented by the PC, resulting in uncertainty quantification akin to Gaussian Processes (GPs; \solidline{GPintro}).
    (b) DisCo can assess whether externally provided knowledge conforms to the PC's expectation for improving upon $y^*$, serving as a scale-independent basis for acceptance or rejection of given knowledge.}
    \label{fig:disco_concept}
\end{figure*}
Recently, different fields %disciplines 
called for enabling the integration of external knowledge, e.g., from humans or Large Language Models (LLMs). Examples of such paradigms include human-centered AutoML~\citep{Wang2019ATMSeer,Park2021HyperTendril,LindauerPosition24} and optimization in the physical sciences~\citep{kusne2020bomatsci} to increase sample efficiency, build user trust, and prevent optimizers from exploring implausible solutions. While traditional frameworks like $\pi$BO~\citep{hvarfner2022pibo} and DynaBO~\citep{fehring2026dynabo} integrate knowledge in the form of priors via acquisition re-weighting, this indirect mediation can result in "blurred`` guidance. \citet{seng2025ihpo} introduced Probabilistic Circuits (PCs) as a novel surrogate in the SMBO landscape, enabling highly targeted knowledge integration by conditioning candidate generation on externally provided priors, thereby circumventing opaque reweighting logic. However, two fundamental challenges hinder the broader adoption of PCs in (expert-guided) SMBO: (1) a lack of native epistemic uncertainty estimates, which forces a reliance on pure likelihood sampling to obtain new candidates and precludes formal uncertainty-guided exploration; and (2) the absence of a robust and accurate metric for assessing whether external guidance contradicts model evidence, acting as a safeguard against misleading priors. Existing methods for assessing conformance of priors w.r.t. model evidence either rely solely on heuristics, such as relative acquisition changes~\citep{fehring2026dynabo}, or introduce structural complexity via auxiliary preference models~\citep{xu2024principled}. To address the limitations of PC surrogates and existing safeguard mechanisms, we introduce the \textbf{Dis}tributional \textbf{Co}nformance Score (DisCo). While fundamentally a metric for evaluating the conformance of a sample against a learned circuit distribution, DisCo naturally serves as an uncertainty proxy under the premise that low conformance directly implies high epistemic uncertainty. We theoretically demonstrate that under specific PC parameterizations, this conformance-based proxy is rank-preserving w.r.t. kernel-like uncertainty estimates, maintaining the utility of traditional exploration-exploitation strategies while preserving the linear-time inference characteristic of PCs (see Fig. \ref{fig:disco_concept}).

Building on this foundation, we present DisCoMBO (\textbf{DisCo}-informed \textbf{M}odel-\textbf{B}ased \textbf{O}ptimization), a framework that operationalizes DisCo to unify the tractability and mixed-variable flexibility of PCs with rigorous decision-theoretic semantics. By leveraging this conformance score, DisCoMBO simultaneously establishes the robust exploration mechanism required for BBO and provides a scale-independent safeguard to detect external priors that contradict empirical model evidence. The resulting procedure bridges the gap between flexible generative modeling and traditional optimization rigor, all without sacrificing the targeted integration capabilities of the generative paradigm.

\textbf{Contributions:} \textbf{(1)} The Distributional Conformance (DisCo) score, a novel and mathematically grounded score providing an uncertainty proxy for PCs at linear inference cost. 
\textbf{(2)} The DisCoMBO framework, a novel SMBO algorithm that utilizes DisCo to enable principled exploration-exploitation trade-offs. The framework facilitates targeted knowledge integration via generative sampling and provides a robust signal for assessing the conformance of external guidance with empirical data. Theoretical analysis establishes DisCoMBO as a zero-regret algorithm.
\textbf{(3)} Extensive empirical evaluation across AutoML, material discovery, and wind park optimization benchmarks, demonstrating that DisCoMBO significantly outperforms fully generative PC-based SMBO and achieves performance competitive with state-of-the-art BO baselines in both unassisted and informed settings.

\section{DisCoMBO: DisCo-guided Model-Based Optimization}
To bridge the gap between purely generative Probabilistic Circuit (PC) driven SMBO and a rigorous uncertainty-guided exploration-exploitation trade-off, we first introduce the Distributional Conformance Score (DisCo) to measure model surprise in PCs. Building on this, we then present DisCoMBO, demonstrating how DisCo enables principled exploration-exploitation trade-offs and robust, bounded, and scale-invariant assessment of externally provided knowledge.

\begin{figure}
    \centering
    \includegraphics[width=1.0\linewidth]{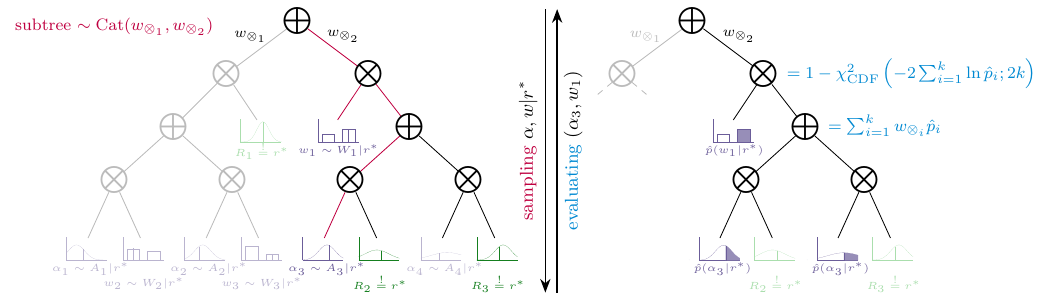}
    \caption{\textbf{DisCoMBO performs DisCo-guided candidate generation.} To generate new candidate configurations, DisCoMBO samples a large set of configurations from the conditional $p(\mathbf{x} | y^*)$ \textit{(Left)} along with random candidates from the search space. These configurations are considered potentially promising and are scored using DisCo in a single bottom-up pass \textit{(Right)}. Resulting DisCo scores are used to guide exploration and exploitation as well as to rate conformance of potentially given priors.}
    \label{fig:mchpo_sampling}
\end{figure}

\subsection{Probabilistic Circuits (PCs)}
Since DiscCo is defined based on PCs, let us first %formally 
introduce PCs following~\citet{choi_2020}.
A PC is a computational graph encoding a distribution over a set of random variables $\bm{\mathcal{X}}$. It is defined as a tuple $(\graph, \phi)$ where $\graph = (V, E)$ is a rooted, directed acyclic graph and $\phi: V \rightarrow 2^{\bm{\mathcal{X}}}$ is the \textit{scope} function assigning a subset of random variables to each node in $\graph$. For each internal node $\Node$ of $\graph$, the scope is defined as the union of scopes of its children, i.e., $\phi(\Node) = \cup_{\Node' \in \ch{\Node}} \phi(\Node')$. Each leaf node $\Leaf$ computes a distribution/density over its scope $\phi(\Leaf)$. Internal nodes of $\graph$ are either a (weighted) sum node $\SumNode$ or a product node $\ProductNode$ where each sum node computes a convex combination of its children, i.e., $\SumNode = \sum_{\Node \in \ch{\SumNode}} w_{\SumNode, \Node}\Node$, and each product node computes a product of its children, i.e., $\ProductNode = \prod_{\Node \in \ch{\ProductNode}}\Node$~\citep{choi_2020}.
In this work, we will consider \textit{decomposable} and \textit{smooth} PCs. Decomposability means that for product node $\ProductNode$, the scopes of its children are disjoint: $\forall \Node_i, \Node_j \in \ch{\ProductNode}, i \neq j: \phi(\Node_i) \cap \phi(\Node_j) = \emptyset$ whereas smoothness means that for every sum node $\SumNode$, its children share the same scope: $\forall \Node \in \ch{\SumNode}: \phi(\Node) = \phi(\SumNode)$. Both are required to ensure tractable marginalization and conditioning in PCs~\citep{choi_2020}. See Fig. \ref{fig:mchpo_sampling} (left) for an illustration and App. \ref{app:PCs} for more details.
 
\subsection{Distributional Conformance Score (DisCo)}\label{sec:disco}
In their vanilla form, PCs lack native epistemic uncertainty estimates, which are essential for well-defined exploration-exploitation trade-offs. A common heuristic is to treat the likelihood $p(\mathbf{x})$ as a proxy for confidence; however, $p(\mathbf{x})$ is an unreliable measure of model "surprise." Because the magnitude of $p(\mathbf{x})$ is determined by leaf parameterization and the dimensionality of $\bm{\mathcal{X}}$, a low density does not necessarily imply high uncertainty, but may simply reflect a broad, well-understood distribution. To address this, we introduce the Distributional Conformance Score (DisCo). 

%Inspired by the logic of 
Akin to $p$-values, DisCo aggregates the cumulative probability of events that are "more extreme" than the current realization $\mathbf{x}$ under the model $p$, allowing to quantify the degree to which $\textbf{x}$ is within the modeled distribution. Formally, DisCo provides a normalized metric in $[0, 1]$ where values near $0$ indicate high surprise and values near $1$ indicate that the model is well-conformed to the observation.
A main advantage of DisCo is its computational efficiency; given a PC $p$ and a realization $\mathbf{x}$, the score is computed in a single bottom-up pass, guaranteeing inference time linear in the circuit size. We define the computation recursively across the graph structure:

\textbf{Leaf Nodes.} For a leaf node $\Leaf$ with scope $s = \phi(\Leaf)$, the localized conformance $\rho_{\Leaf}$ is defined by the two-tailed probability of the observation under the leaf's marginal distribution:\begin{equation}\rho_{\Leaf} = 2 \cdot \min\big(F_{\Leaf}(\mathbf{x}_s), 1 - F_{\Leaf}(\mathbf{x}_s)\big)
\end{equation}
where $F_{\Leaf}$ denotes the cumulative distribution function (CDF) of the leaf. Note that this ensures that $\rho_{\Leaf}$ bounded in $[0, 1]$.

\textbf{Sum Nodes.} Sum nodes maintain their latent variable interpretation as mixture models. To compute the mixture's conformance, we compute the convex combination of the scores of its children:\begin{equation}\rho_{\SumNode} = \sum\nolimits_{\Node \in \ch{\SumNode}} w_{\SumNode, \Node} \cdot \rho_{\Node}\end{equation}
Since $\sum_{\Node \in \ch{\SumNode}} = 1$ by the definition of PCs, the score is kept bounded in an interval $[0, 1]$.

\textbf{Product Nodes.} For a product node $\ProductNode$ with $k$ children, decomposability ensures that each child represents an independent random variable. To aggregate evidence across these independent dimensions, we leverage Fisher’s method~\citep{fisher1932statisticalmethods}, which satisfies three key desiderata for our setting: (1) values are bounded in $[0, 1]$, (2) surprise in a single dimension reliably propagates to the aggregated score, and (3) it scales linearly in time (see App.~\ref{app:fisher_discussion} for further discussion). Although we do not perform formal hypothesis testing or compute statistically valid $p$-values, we exploit the property that the sum of log-transformed independent tail probabilities follows a $\chi^2$ distribution with $2k$ degrees of freedom to construct our proxy score for "model surprise":
\begin{equation}
    \rho_{\ProductNode} = 1 - F_{\chi^2} \Big( - 2 \cdot \sum\nolimits_{i=1}^k \ln(\rho_i); 2k\Big).
\end{equation}
Here, $F_{\chi^2}(\cdot; 2k)$ is the $\chi^2$ CDF. Since any PC can be expanded into its induced tree representation~\citep{zhaoa16CollapsedVarInf}, the global DisCo score $\rho_p$ of a PC $p$ can be expressed in closed form:
\begin{equation}\label{eq:induced_tree}
\rho_p(\mathbf{x}) = \sum\nolimits_{t=1}^{\tau_s} \prod\nolimits_{(k, j) \in \mathcal{T}_{t}} w_{k j} \cdot \Big(1 - F_{\chi^2} \big( - 2 \cdot \sum\nolimits_{i=1}^n \ln(\rho_i); 2n\big)\Big).
\end{equation}
Here, $\tau_s$ is the number of induced trees, $\mathcal{T}_t$ is the $t$-th unique induced tree of $p$ and $\rho_i$ is the DisCo score of leaf $\Leaf_i$ in $\mathcal{T}_t$. Each tree $\mathcal{T}_t$ contains the root of the PC, for each sum node it contains exactly one child and its corresponding edge, and for each product $\mathcal{T}_t$ contains all children. This admits a mixture representation of PCs with depth one (see App. \ref{app:PCs} for more details). 

In the following proposition, we use this representation to show that DisCo can be mapped to a behavior similar to kernel-based uncertainty estimates under some circumstances.

\begin{prop}[DisCo is order-isomorphic to symmetric kernels]\label{prop:order-iso}
Let $p$ be a PC defined over real-valued random variables $X_1, \dots, X_n$, where each $X_i$ is modeled by $m$ leaves. Each leaf $\Leaf_{ij}$ is characterized by a symmetric density $f$ with a location parameter $\theta_{ij}$ and a monotone cumulative distribution function (CDF) $\Phi$. Let $d: \bm{\mathcal{X}} \times \bm{\mathcal{X}} \rightarrow \mathbb{R}_{\geq 0}$ be a $p$-norm and $k: \bm{\mathcal{X}} \times \bm{\mathcal{X}} \rightarrow \mathbb{R}$ be a symmetric, strictly increasing bounded kernel. Assume that both the leaf densities and $k$ are strictly monotonic functions of $d$ up to constant factors, define $\mathbf{\Theta} = \{\bm{\theta}_j\}_{j=1}^m$. Then, for any two points $\mathbf{x}_1, \mathbf{x}_2 \in \bm{\mathcal{X}}$ and $\bm{\theta} \in \bm{\Theta}$, $\rho_p$ is order-isomorphic to the kernel $k(\cdot, \cdot)$, such that $d(\mathbf{x}_1, \bm{\theta}) > d(\mathbf{x}_2, \bm{\theta})$ implies both $(1 - \rho_p(\mathbf{x}_1; \bm{\theta})) > (1 - \rho_p(\mathbf{x}_2; \bm{\theta}))$ and $k(\mathbf{x}_1; \bm{\theta}) > k(\mathbf{x}_2; \bm{\theta})$.
\end{prop}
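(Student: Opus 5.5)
The plan is to reduce the statement to a single induced tree from the closed form in Eq.~\eqref{eq:induced_tree}, and then chain monotone maps. Fix $\bm{\theta}=\bm{\theta}_j\in\bm{\Theta}$ and read $\rho_p(\cdot;\bm{\theta})$ as the DisCo score of the induced tree $\mathcal{T}_t$ whose leaves have locations $\theta_{1j},\dots,\theta_{nj}$. With a single induced tree, the weight products do not depend on $\mathbf{x}$ and act as a positive constant, so they do not affect the ordering. If the full mixture is intended, the same argument applies term by term to each tree $\mathcal{T}_t$, which gives monotonicity in $d(\mathbf{x},\bm{\theta}_t)$ for every $t$. For this reason the statement should be read per component $\bm{\theta}$, as written.

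The first step is the leaf level. Because $f$ is symmetric about $\theta_{ij}$ and $\Phi$ is strictly monotone, $\Phi(\theta_{ij})=1/2$. Hence
\[
\rho_{\Leaf_{ij}}=2\min(\Phi(x_i),1-\Phi(x_i))=2\bigl(1-\Phi(\theta_{ij}+|x_i-\theta_{ij}|)\bigr),
\]
which is a strictly decreasing function $g(|x_i-\theta_{ij}|)$ of the coordinate distance.

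The second step is the product node:
\[
1-\rho_{\ProductNode}=F_{\chi^2}\Bigl(-2\sum_i\ln g(|x_i-\theta_{ij}|);2n\Bigr).
\]
Both $F_{\chi^2}(\cdot;2n)$ and $-\ln$ are strictly monotone, so $1-\rho_p$ is strictly increasing in the statistic $S(\mathbf{x})=\sum_i -\ln g(|x_i-\theta_{ij}|)$.

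The main obstacle is linking $S$ to the $p$-norm $d$. For general leaves, a sum of per-coordinate transforms is not a function of $\|\mathbf{x}-\bm{\theta}\|_p$. The plan is to use the hypothesis that the leaf densities are strictly monotone functions of $d$ up to constant factors. The intended reading is that the tree-level likelihood $\prod_i f(x_i-\theta_{ij})$, together with the induced per-coordinate tail terms, depends on $\mathbf{x}$ only through $d(\mathbf{x},\bm{\theta})$. Gaussian leaves with $p=2$ are the canonical example, where $-\ln f$ is additive in $|x_i-\theta_i|^2$. I would state explicitly that $S$ is taken to be a strictly increasing function $h\circ d$; this is the content of the hypothesis. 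Under that reading, $d(\mathbf{x}_1,\bm{\theta})>d(\mathbf{x}_2,\bm{\theta})$ gives $S(\mathbf{x}_1)>S(\mathbf{x}_2)$, and therefore $1-\rho_p(\mathbf{x}_1;\bm{\theta})>1-\rho_p(\mathbf{x}_2;\bm{\theta})$.

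The kernel half follows directly from the assumption that $k$ is a strictly monotone function of $d$ with the orientation stated, so the same distance inequality yields $k(\mathbf{x}_1;\bm{\theta})>k(\mathbf{x}_2;\bm{\theta})$. Composing the two strictly monotone maps of $d$ then gives an order isomorphism between $1-\rho_p(\cdot;\bm{\theta})$ and $k(\cdot;\bm{\theta})$ on the image of $d$. Sums of strictly increasing functions of the same variable remain strictly increasing, which is what the mixture extension of the first paragraph needs if every component is centred consistently.
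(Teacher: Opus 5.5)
Your chain (reduce to one induced tree, show the leaf score $2\bigl(1-\Phi(\theta_{ij}+|x_i-\theta_{ij}|)\bigr)$ is strictly decreasing in $|x_i-\theta_{ij}|$, then push through the strictly monotone maps $-\ln$ and $F_{\chi^2}(\cdot;2n)$) is the paper's three-step route, and it is sound up to the Fisher statistic. The gap is exactly at the step you flagged, and declaring $S=h\circ d$ to be ``the content of the hypothesis'' does not close it. The hypothesis constrains the leaf \emph{densities}, whereas $S(\mathbf{x})=\sum_i -\ln\bigl(2(1-\Phi(\theta_i+|x_i-\theta_i|))\bigr)$ is built from log \emph{tail probabilities}. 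So you are really adding a new and much stronger assumption.

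That assumption fails for the very example you cite. Take $n=2$, $m=1$, standard Gaussian leaves centred at $\bm{\theta}=\mathbf{0}$ and $d(\mathbf{x},\mathbf{y})=\|\mathbf{x}-\mathbf{y}\|_2$; each leaf density and the product density are strictly decreasing in distance. Let $\mathbf{x}_1=(3.2,0)$ and $\mathbf{x}_2=(3/\sqrt{2},3/\sqrt{2})$. Then $d(\mathbf{x}_1,\bm{\theta})=3.2>3=d(\mathbf{x}_2,\bm{\theta})$, yet $S(\mathbf{x}_1)\approx 6.59<6.77\approx S(\mathbf{x}_2)$. Hence $1-\rho_p(\mathbf{x}_1;\bm{\theta})<1-\rho_p(\mathbf{x}_2;\bm{\theta})$; the kernel plays no role in this half of the conclusion. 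The reason is that $-\ln\bigl(2(1-\Phi(r))\bigr)$ is not a multiple of $r^2$, even though $-\ln f$ is. So the implication cannot be derived from the stated assumptions; the statement is false at this generality. The paper's proof has the same hole: it asserts that the $p$-norm inequality implies the inequality between Fisher statistics ``by 1.'', which would require coordinatewise domination $|x_{1,i}-\theta_i|\ge|x_{2,i}-\theta_i|$ for every $i$.

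To obtain a true statement you must change either the hypothesis or the conclusion, and say so openly rather than attribute it to the given assumption.
On the hypothesis side, let $h(r)=-\ln\bigl(2(1-\Phi(\theta+r))\bigr)$, so $h(0)=0$. Demanding that $\sum_i h(r_i)$ be a function of $(\sum_i r_i^p)^{1/p}$ for all $r\in\mathbb{R}_{\ge 0}^n$ with $n\ge 2$ turns into Cauchy's equation for $s\mapsto h(s^{1/p})$. This forces $h(r)=c\,r^p$, i.e.\ leaf tails $\tfrac12 e^{-c r^p}$. The only such case whose density is also decreasing in distance is Laplace leaves of common scale $b$ with the $\ell_1$ norm; there $S=\|\mathbf{x}-\bm{\theta}\|_1/b$ exactly and your argument goes through verbatim.
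On the conclusion side, your leaf and Fisher steps already prove the ordering, for any $p$-norm, whenever $|x_{1,i}-\theta_i|\ge|x_{2,i}-\theta_i|$ for all $i$ with at least one strict inequality. Points on a common ray $\bm{\theta}+t\mathbf{u}$ are an example.
A minor point besides: you use that $\Phi$ is \emph{strictly} increasing, which ``monotone'' in the statement does not give. For a leaf with bounded support the tail reaches $0$, and both the strict ordering and $\ln 0$ break down.
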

\begin{proof}[Proof (Sketch)]
    To show that $k$ and $1 - \rho_p(\cdot)$ are order-isomorphic, we show that due to the coupling of $d$ to $k$ and $\rho_p$, the values of $k$ increase when $\rho_p$ decreases and vice versa. Intuitively, we make use of applying the $\min$-operator to find out whether a datapoint lies in the left or right half of the symmetric leaf density of the PC. We then proceed by showing that applying \citet{fisher1932statisticalmethods} to obtain DisCo scores of product nodes exactly inverts the ranking, which is kept by the subsequent mixture aggregation of the root node in the induced tree representation. Since we use $(1 - \rho_p(\cdot))$ as an uncertainty proxy, the ranking is inverted once again, yielding the same ranking as $k$, which is assumed to only be a constant scale and shift of $d$. See App. \ref{app:iso-order-kernel} for the full proof.
\end{proof}
Thus, DisCo provides an uncertainty estimate functionally similar to a symmetric kernel commonly used in BO algorithms when the circuit structure scales with the number of observations. Crucially, by exploiting the decomposability and smoothness of PCs, DisCoMBO enables exact inference in $O(n)$ relative to the circuit size, bypassing the $O(n^3)$ bottleneck of GPs while maintaining comparable decision-theoretic benefits.

%\SetCustomAlgoRuledWidth{6.6cm}
%\newpage
\subsection{DisCo-Guided SMBO and Knowledge Integration}\label{sec:discombo}
%\begin{wrapfigure}[24]{R}{0.53\textwidth} % "R" for right-aligned, adjust width as needed
\begin{wrapfigure}[25]{R}{0.53\textwidth} % "R" for right-
\vspace{-0.25cm}
\begin{minipage}{0.53\textwidth} % Adjust the minipage width to fit within wrapfigure
\begin{algorithm}[H]
\caption{\textbf{DisCoMBO}}
\label{algo:DisCoMBO}
\begin{algorithmic}[1]
\STATE \textbf{Input:} Search space $\bm{\mathcal{X}}$, objective $f: \bm{\mathcal{X}} \rightarrow \mathbb{R}$, user prior $q(\hat{\bm{\mathcal{X}}} \subset \bm{\mathcal{X}})$ (can be provided at any time), decay $\gamma$, number of bins $K$, binning schedule $s$
\STATE Sample $J$ configurations $\{\mathbf{x}\}_{i=1}^J$ randomly
\STATE $\bm{\mathcal{D}} \gets \{(\mathbf{x}_i, f(\mathbf{x}_i) \}$ for $i \in \{1, ..., J\}$
\WHILE{not converged}
    \STATE Fit PC $p$ on $\bm{\mathcal{D}}$
    \STATE Set $y^* \gets \max Y \in \bm{\mathcal{D}}$
    \IF{$d \sim \text{Bernoulli}(\gamma^t p_{\text{use\_prior}}) = 1$}
        \STATE Sample $\hat{\mathbf{x}} \sim q(\hat{\bm{\mathcal{X}}})$ if $q$, else None
        \IF{$\rho_p(\hat{\mathbf{x})} < \tau$}
            $\hat{\mathbf{x}} \gets$ None
        \ENDIF
    \ENDIF
    \STATE Sample $\mathcal{C} = \{\mathbf{x} \sim p(\bm{\mathcal{X}} | \hat{\mathbf{x}}, y^*)\}_{i=1}^N$
    \IF{use random}
        \STATE $\mathcal{C} = \mathcal{C} \cup \{\mathbf{x} \sim \mathcal{U(\bm{\mathcal{X}})}\}_{i=1}^M$
    \ENDIF
    \STATE Compute DisCo scores $\mathbf{r} = [\rho_p(\mathbf{x}) \forall \mathbf{x} \in \mathcal{C}]$
    \STATE Compute annealing $\beta_t = s(t)$
    \STATE Compute bins $\mathbf{b} = [(\frac{b}{K})^{\beta_t}]_{b=0}^{K-1}$
    \STATE Assign DisCo scores to corresponding bins
    \STATE Sample $\rho$ from randomly chosen bin
    \STATE Define $\mathbf{X} = \{\mathbf{x}: \mathbf{x} \in \mathcal{C} | \rho_p(\mathbf{x}) = \rho \}$
    \STATE $\bm{\mathcal{D}} \gets \bm{\mathcal{D}} \cup \{(\mathbf{x}', f(\mathbf{x}')) \}$ for random $\mathbf{x}' \sim \mathbf{X}$
\ENDWHILE
\end{algorithmic}
\end{algorithm}
\end{minipage}
%\vspace{-0.3cm}
\end{wrapfigure}
With DisCo at hand, we are ready to propose DisCoMBO, the first PC-based SMBO framework guided by rigorous uncertainty quantification. DisCoMBO synthesizes the natural knowledge integration of generative PC-based SMBO with the exploration-exploitation semantics of traditional SMBO.
This unification enables a formal detection and feedback mechanism for external expertise 
that contradicts empirical evidence. The resulting procedure (Algorithm \ref{algo:DisCoMBO}) ensures robust optimization by treating expert-in-the-loop priors as informative yet fallible inputs, preserving convergence guarantees even under misleading priors.

Specifically, DisCoMBO operates in two phases: (1) \textit{knowledge-informed candidate generation}, utilizing the tractability of PCs and DisCo-based prior quality assessment, and (2) the \textit{final candidate selection}, leveraging DisCo to manage the exploration-exploitation trade-off.

\textbf{(1) Knowledge-informed Candidate Generation:} 
At each iteration $t$, we use LearnSPN~\citep{gens2013learnSPN} to fit a PC $p(\bm{\mathcal{X}}, Y)$ to the observed data $\bm{\mathcal{D}}$ \textbf{(Line 5)}. To construct the candidate set $\mathcal{C}$, we leverage the tractability of PCs to perform \textit{prior-conditioned sampling}. Given external knowledge as a prior $q(\hat{\bm{\mathcal{X}}})$ over a (possibly empty) subspace $\hat{\bm{\mathcal{X}}} \subset \bm{\mathcal{X}}$, we sample $\hat{\mathbf{x}} \sim q$ and condition the PC on both this prior and the best-observed performance $y^* = \max Y \in \bm{\mathcal{D}}$:
\begin{equation} \label{eq:conditional}p(\bm{\mathcal{X}}' | \hat{\mathbf{x}}, y^*) = \frac{p(\bm{\mathcal{X}}', \hat{\bm{\mathcal{X}}} = \hat{\mathbf{x}}, Y = y^*)}{ \int_{\bm{\mathcal{X}}'} p(\bm{\mathcal{X}}', \hat{\bm{\mathcal{X}}} = \hat{\mathbf{x}}, Y = y^*)}
\end{equation}
where $\bm{\mathcal{X}}' = \bm{\mathcal{X}} \setminus \hat{\bm{\mathcal{X}}}$. We then sample $N$ configurations $\mathbf{x} = (\hat{\mathbf{x}}, \mathbf{x}')$ where $\mathbf{x}' \sim p(\bm{\mathcal{X}}' | \hat{\mathbf{x}}, y^*)$ (\textbf{Line 6-9}). This strategy introduces a proximity bias towards high-performing regions weighted by the expert prior, while $\mathcal{C}$ is optionally augmented with $M$ random samples from $\bm{\mathcal{X}}$ to maintain global coverage (\textbf{Line 10-11}). To safeguard against misleading priors, we employ a two-fold recovery mechanism. The first layer consists of a semantic prior filter that utilizes the DisCo score to quantify how well a sample $\hat{\mathbf{x}} \sim q$ conforms to the learned surrogate. By marginalizing the PC over the remaining search space $\bm{\mathcal{X}}'$ and conditioning the PC on $Y = y^*$, we obtain $p'(\hat{\bm{\mathcal{X}}} | y^*)$ and compute the conformance $\rho_p(\hat{\mathbf{x}})$ as described in Sec. \ref{sec:disco}. If $\rho_p(\hat{\mathbf{x}}) < \tau$ for a threshold $\tau \in [0, 1]$, the sample is rejected for failing to align with empirical evidence (\textbf{Line 9}). This is complemented by a fallback mechanism following \citet{seng2025ihpo}, which decays the probability of utilizing the external prior $q$ by a factor $\gamma$ at each iteration, ensuring the optimizer asymptotically reverts to the default SMBO regime without external priors (\textbf{Line 7-8}).

\textbf{(2) Final Candidate Selection:} 
From the generated set $\mathcal{C}$, we select the most promising configuration for evaluation by computing the DisCo score $\rho_p(\mathbf{x})$ for each $\mathbf{x} \in \mathcal{C}$ (\textbf{Line 13}). To calculate $\rho_p(\mathbf{x})$ in the absence of an observed $Y$, we apply a proxy strategy: setting $Y = y^*$ yields scores reflecting conformance with high-performing regions (useful to bias towards proximity to $\mathbf{x}^*$ during exploitation), while marginalizing $Y$ allows DisCo to quantify general "novelty" relative to all prior observations (useful to foster more aggressive exploration). Which strategy is applied can either be fixed (e.g., on use the $Y=y^*$ strategy) or can depend on the schedule $s(t)$ (e.g., alternating both strategies). Selection is governed by an annealing parameter $\beta_t$ derived from a schedule $s(t)$, which modulates the exploration-exploitation trade-off at iteration $t$ (\textbf{Line 14}). We partition candidates into bins within the unit interval $[0, 1]$ based on their $\rho_p(\mathbf{x})$ values. Under this scheme, $\beta_t \gg 1$ causes the selection probability to concentrate around bins near $0$ (high uncertainty/exploration), while $\beta_t \rightarrow 0$ shifts focus toward bins near $1$ (low uncertainty/exploitation) (\textbf{Line 15}). The final candidate $\mathbf{x}'$ is then sampled from a randomly selected bin according to this distribution (\textbf{Line 16-19}). This work employs a sinusoidal schedule $s(t)$ ensuring smooth transitions between exploration ($\beta_t \gg 1$) and exploitation ($\beta_t \rightarrow 0$). However, the framework admits any functional form, including dynamic schedules based on surrogate "energy" or uncertainty decay. We leave the exploration of different scheduling alternatives for future work.

\paragraph{DisCoMBO is Zero Regret and Efficient} 
After presenting DisCoMBO, we analyze its convergence behavior. We find that DisCoMBO is a zero regret algorithm as stated in the following proposition.
\begin{prop}[DisCoMBO is a Zero Regret Algorithm]
    Assume some $f \in \mathcal{H}_k$ with $\mathcal{H}_k$ being a Reproducing Kernel Hilbert Space (RKHS) with squared exponential kernel $k = \phi \circ \rho$ such that $f: \bm{\mathcal{X}} \rightarrow \mathbb{R}$ and $\mathbf{x}^*$ being the optimum of $f$. Further, assume a periodic schedule $s(t) \in \mathbb{R}_{\geq 0}$ that determines whether focus is set on exploitation or exploration, s.t. exploitation periods get longer over time, i.e. $\pi_t = c \cdot \pi_{t-1}$ for some $c > 1$ and period length $\pi_t$. Let the number of bins $K \rightarrow \infty$ and $p(y | \mathbf{x})$ be unbiased where $p$ is a PC learned on observations $\mathcal{D} = \{(\mathbf{x}_1, y_1), \dots, (\mathbf{x}_n, y_n) \}$. Then, DisCoMBO has sub-linear cumulative regret $R_T$, i.e. $\frac{R_T}{T} \rightarrow 0$ with $T \rightarrow \infty$.
\end{prop}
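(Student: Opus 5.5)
The plan is to reduce DisCoMBO, in the regime described by the proposition, to a kernel-based optimistic selection rule. The argument is only as rigorous as the assumptions allow, and the cumulative regret $R_T = \sum_{t=1}^T \big(f(\mathbf{x}^*) - f(\mathbf{x}_t)\big)$ will be split into exploration rounds and exploitation rounds according to the periodic schedule $s(t)$.

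\textbf{Step 1: translate DisCo into kernel uncertainty.} Using Proposition~\ref{prop:order-iso} together with $k = \phi\circ\rho$, I would argue that $1-\rho_p(\mathbf{x})$ induces the same ordering on candidates as a kernel-based uncertainty proxy around the observed locations $\bm{\Theta}$. With $K\to\infty$ the bins become arbitrarily fine. Then:
\begin{itemize}
\item in exploitation phases ($\beta_t\to 0$) the selection concentrates on the candidate of maximal conformance with $y^*$;
\item in exploration phases ($\beta_t \gg 1$) it concentrates on the candidate of minimal conformance, i.e.\ maximal kernel-type uncertainty.
\end{itemize}
Because $p(y\mid\mathbf{x})$ is assumed unbiased, conditioning on $Y=y^*$ places candidates where the posterior mean attains the incumbent value. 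Exploitation steps therefore behave like greedy mean maximization, and exploration steps like pure variance maximization $\arg\max_{\mathbf{x}} \sigma_{t-1}(\mathbf{x})$.

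\textbf{Step 2: bound the exploration rounds.} For pure uncertainty sampling in an RKHS with squared exponential kernel, the standard information-gain argument (Srinivas et al.) gives $\sum_{t\in E_T} \sigma_{t-1}^2(\mathbf{x}_t) \le C\,\gamma_T$, with $\gamma_T = O((\log T)^{d+1})$. This forces the maximal posterior standard deviation over $\bm{\mathcal{X}}$ after the exploration rounds to be $O(\sqrt{\gamma_T/|E_T|})$. Since the period lengths grow geometrically ($\pi_t = c\,\pi_{t-1}$), the number of exploration rounds $|E_T|$ grows without bound but only logarithmically in $T$. I would therefore bound the exploration regret crudely by $2\|f\|_{\mathcal{H}_k}\,|E_T|$, which is $O(\log T)$ and hence $o(T)$.

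\textbf{Step 3: bound the exploitation rounds, the main obstacle.} In exploitation rounds I would use the RKHS confidence bound $|f(\mathbf{x})-\mu_{t-1}(\mathbf{x})|\le \beta^{1/2}\sigma_{t-1}(\mathbf{x})$. Combined with greedy selection of the (unbiased) mean, this gives the instantaneous bound
\[
r_t \le 2\beta^{1/2}\max_{\mathbf{x}}\sigma_{t-1}(\mathbf{x}).
\]
The difficulty is that this requires $\max\sigma$ to shrink, yet exploration rounds become relatively rare. The plan is to choose the schedule so that both requirements hold at once:
\begin{itemize}
\item the total number of exploration rounds $|E_T|\to\infty$;
\item $|E_T|/T\to 0$;
\item the uniform posterior variance after $|E_T|$ exploration rounds still tends to zero, which needs $\gamma_{|E_T|}/|E_T|\to 0$ and holds for SE kernels.
\end{itemize}
Then $\frac{1}{T}\sum_{t\notin E_T} r_t \le 2\beta^{1/2}\sup_{t}\max_{\mathbf{x}}\sigma_{t-1}(\mathbf{x})$ averages to zero, using Cesàro on a null sequence.

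One further point needs care. The candidate set is generated by sampling from the PC plus uniform random points, so the argmax over candidates only approximates the argmax over $\bm{\mathcal{X}}$. I would handle this with the uniform random augmentation: with $M$ random points and Lipschitzness of $f$ and $\sigma$ (from the SE kernel), the resulting discretization error can be made summable or $o(1)$ per round.

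Adding the bounds from Step 2 and Step 3 then gives $R_T/T\to 0$.
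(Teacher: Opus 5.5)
Your route differs from the paper's. The paper treats both phases as UCB-like. For the selected point in \emph{both} exploration and exploitation it asserts the optimism inequality $\mathbb{E}[p(y\mid\mathbf{x}_t)]+\beta_t\phi(\rho(\mathbf{x}_t))\ge\mathbb{E}[p(y\mid\mathbf{x}^*)]+\beta_t\phi(\rho(\mathbf{x}^*))$. It then invokes Lemma~5.2 of Srinivas et al.\ to get $r_t\le L\epsilon_t+2\beta_t\phi(\rho(\mathbf{x}_t))$, sums squared regrets against $\gamma_T$, and concludes $R_T\le\sqrt{TC\beta_T\gamma_T}$ by Cauchy--Schwarz. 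The growth of exploitation periods plays no quantitative role in that bound. You instead charge exploration rounds crudely, make them rare, and control exploitation by greedy selection plus a uniform shrinkage of $\max_{\mathbf{x}}\sigma_{t-1}(\mathbf{x})$ driven by the exploration rounds. This buys something real: a pure max-uncertainty pick ignores the mean, so it need not satisfy the optimism inequality the paper relies on, and your decomposition does not need it. Your greedy-mean reading of exploitation is the same leap the paper makes when it asserts $\mathbb{E}[p_t(y\mid\mathbf{x}_t^*)]\ge\mathbb{E}[p_t(y\mid\mathbf{x}^*)]$. The price is that everything rests on a rate balance between $|E_T|$ and the confidence width, and as written that balance does not close.

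There are two concrete problems.

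\textbf{Step~2 as stated is vacuous.} With $|E_T|=\Theta(\log T)$ and $\gamma_T=O((\log T)^{d+1})$, the bound $\max_{\mathbf{x}}\sigma=O(\sqrt{\gamma_T/|E_T|})$ is $O((\log T)^{d/2})$. Step~3 silently switches to $\gamma_{|E_T|}/|E_T|$. That is the right quantity, but it needs an argument. Posterior variance only decreases under extra conditioning, so at the $j$-th exploration round $t_j$,
\[
\max_{\mathbf{x}}\sigma_T(\mathbf{x})\le\sigma_{t_j-1}(\mathbf{x}_{t_j})\le\tilde\sigma_{j-1}(\mathbf{x}_{t_j}),
\]
where $\tilde\sigma$ is the posterior built on exploration points alone. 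Applying the information-gain bound to that subsequence then gives $|E_T|\max_{\mathbf{x}}\sigma_T^2(\mathbf{x})\le C\gamma_{|E_T|}$. This also requires the exploration pick to be a near-exact maximizer of $\sigma_{t-1}$. With a fixed number $M$ of uniform candidates that holds only probabilistically, not as an $o(1)$ per-round error.

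\textbf{The constant $\beta^{1/2}$ in Step~3 is only valid without noise.} For noise-free evaluations, $|f(\mathbf{x})-\mu_{t-1}(\mathbf{x})|\le\|f\|_{\mathcal{H}_k}\sigma_{t-1}(\mathbf{x})$ holds deterministically. The paper's coverage lemma, however, assumes sub-Gaussian noise, and your points are chosen adaptively. In that setting the standard uniform RKHS width is $\beta_t^{1/2}=B+R\sqrt{2(\gamma_{t-1}+1+\ln(1/\delta))}$, with $B\ge\|f\|_{\mathcal{H}_k}$ and $R$ the noise scale. This width is driven by \emph{all} rounds, hence mostly by exploitation. Your exploitation bound then becomes $r_t=O(\sqrt{\gamma_t\,\gamma_{|E_t|}/|E_t|})$. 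For $|E_t|=\Theta(\log t)$ and the SE kernel this grows like $(\log t)^{d/2}$ up to $\log\log$ factors, so the Ces\`aro step yields nothing. (Separately, $\sup_t\max_{\mathbf{x}}\sigma_{t-1}$ does not vanish; you mean the running average.)

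To repair the argument you must do one of two things:
\begin{itemize}
\item Assume noiseless observations. Then $\beta^{1/2}=\|f\|_{\mathcal{H}_k}$, and $\gamma_{|E_T|}/|E_T|\to 0$ suffices.
\item Let exploration grow polynomially, $|E_T|\asymp T^a$ with $0<a<1$. Then $\gamma_T\gamma_{|E_T|}/|E_T|\to 0$, while the crude exploration cost $O(T^a)$ is still $o(T)$. This is a stronger schedule assumption than the logarithmic exploration count you derived from $\pi_t=c\,\pi_{t-1}$.
\end{itemize}
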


\begin{proof}[(Proof Sketch)]
    To show that DisCoMBO is a zero regret algorithm, we exploit the fact that DisCo is order-isomorphic to symmetric kernels (see Prop. \ref{prop:order-iso}). Given that we assume a periodic exploration-exploitation schedule $s(t)$ which controls the exploration and exploitation, we show that DisCoMBO's cumulative regret grows sub-linear in exploration and exploitation. With $T \rightarrow \infty$, we switch infinitely often between exploration and exploitation, thus converging to zero regret eventually. See App. \ref{app:regret_analysis} for the full proof.
\end{proof}

Furthermore, since DisCoMBO leverages PCs as surrogates, all inference and sampling operations (including computation of DiSco) are linear in inference time w.r.t. the circuit size. Therefore, DisCoMBO also scales linearly with increasing sample and search space sizes.
%The proof of Prop. \ref{prop:conditioning_transform} is given in App. \ref{app:proofs}.
%We now proceed and evaluate IBO-HPC empirically.

\section{Experimental Evaluation} \label{sec:Experiments}
We provide an extensive empirical evaluation of DisCoMBO to investigate the following research questions: \textbf{(Q1) Competitive Performance:} Can DisCoMBO bridge the performance gap between purely generative PC-based methods and state-of-the-art Bayesian or ensembling surrogates in settings without external knowledge guidance? \textbf{(Q2) Effectiveness of Informed Optimization:} Does DisCoMBO effectively translate external domain expertise into increased sample efficiency, and does it maintain the effectiveness of generative knowledge integration? \textbf{(Q3) Robustness and Recovery:} Does the DisCo score provide a reliable, accurate signal for detecting misleading expertise, and does it enable principled recovery in the presence of misleading priors?

\paragraph{Experimental Setup}To evaluate DisCoMBO as a general-purpose optimizer (\textbf{Q1}), we compared it against established SMBO frameworks with various surrogates, including BO with Random Forests (RF)~\citep{skopt}, BO with Tree-Parzen Estimators (TPE)~\citep{akiba2019optuna}, SMAC~\citep{hutter20111SMAC, lindauer2022SMAC}, HEBO (using a GP surrogate)~\citep{cowenrivers2022hebo}, and CMA-ES~\citep{hansen2023cmaes}. To assess its performance as a generative surrogate, we compared it against IBO-HPC~\citep{seng2025ihpo}. For knowledge integration tasks (\textbf{Q2}, \textbf{Q3}), we evaluated against IBO-HPC, $\pi$BO~\citep{hvarfner2022pibo}, and DynaBO~\citep{fehring2026dynabo}.
Our evaluation spans 9 tasks and 6 search spaces—including continuous, discrete, graph, and mixed domains—across multiple fields. For AutoML, we utilize NAS-Bench-201~\citep{nasbench201}, JAHS~\citep{jahsbench201}, and HPO-B~\citep{arango2021hpoblargescalereproduciblebenchmark} to cover NAS and HPO. In the Physical Sciences, we employ the PV crossed-barrel benchmark~\citep{gongora2020crossedbarrel} for material discovery and the Floris Wake Modeling suite for windpark optimization~\citep{Fleming2020FLORIS}. We repeat each experiment with 50 different seeds to ensure statistical significance of our results. We report normalized regret against wall-clock time whenever wall-clack time measurements are provided by benchmarks (iterations if not). See App. \ref{app:benchmarks} for a detailed overview.

\begin{figure}[t]
    \centering
    \includegraphics[scale=0.35]{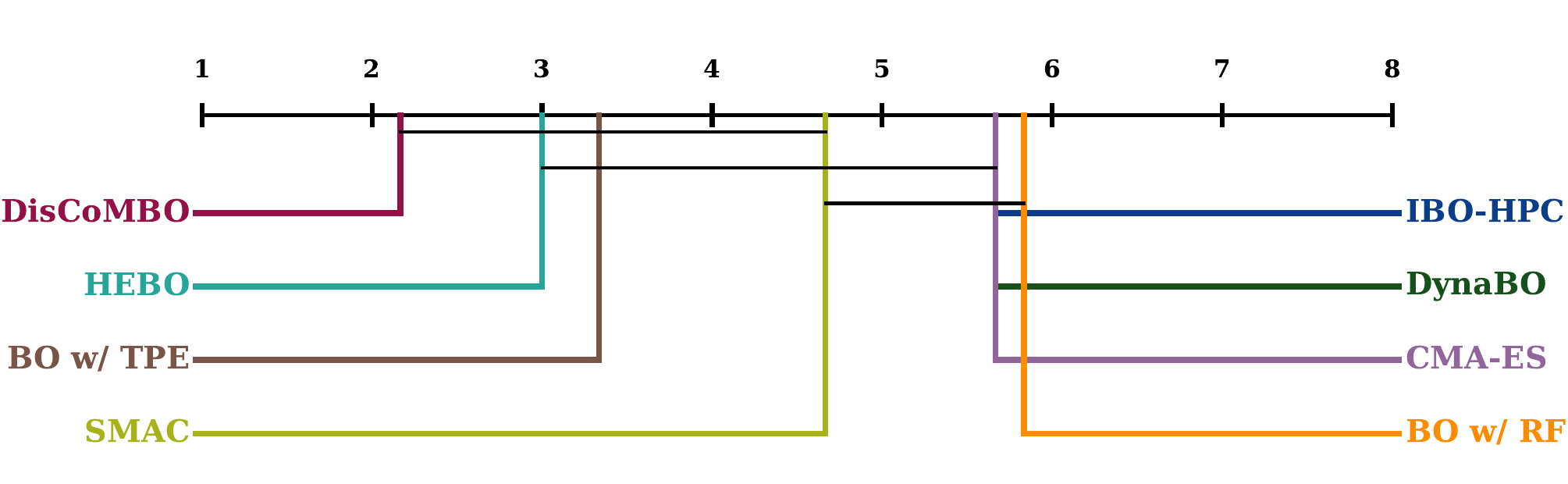}
    \caption{\textbf{DisCoMBO is competitive to top-tier SMBO methods.} DisCoMBO ranks \textbf{first} across 9 tasks when \textbf{no prior} is given, being statistically indistinguishable to strong baselines like HEBO. DisCoMBO significantly outperforms IBO-HPC, its PC-based counterpart, lacking a well-defined exploitation-exploration trade-off. Assessment is based on Wilcoxon signed-rank test ($p=0.05$).}
    \label{fig:cd_standard}
\end{figure}

\begin{figure}[t]
    %\centering
    %% First row of subfigures
    %\begin{subfigure}[c]{0.24\textwidth}
    %    \centering
    %    \includegraphics[width=\textwidth]{images/discombo/standard/jahs_co.pdf}
    %    \caption{JAHS (C. Histology)}
    %    \label{fig:jahs_co}
    %\end{subfigure}
    %\hfill
    %\begin{subfigure}[c]{0.24\textwidth}
    %    \centering
    %    \includegraphics[width=\textwidth]{images/discombo/standard/hpob_6767_9914.pdf}
    %    \caption{HPO-B (6767:9914)}
    %    \label{fig:jahs_co}
    %\end{subfigure}
    %\hfill
    %\begin{subfigure}[c]{0.24\textwidth}
    %    \centering
    %    \includegraphics[width=\textwidth]{images/discombo/standard/nas201.pdf}
    %    \caption{NAS-Bench-201}
    %    \label{fig:jahs_fm}
    %\end{subfigure}
    %\hfill
    %\begin{subfigure}[c]{0.24\textwidth}
    %    \centering
    %    \includegraphics[width=\textwidth]{images/discombo/standard/pv_crossed_barrel.pdf}
    %    \caption{PV (Crossed Barrel)}
    %    \label{fig:nas101}
    %\end{subfigure}
    \centering
    \includegraphics[width=1.0\linewidth, height=3.5cm]{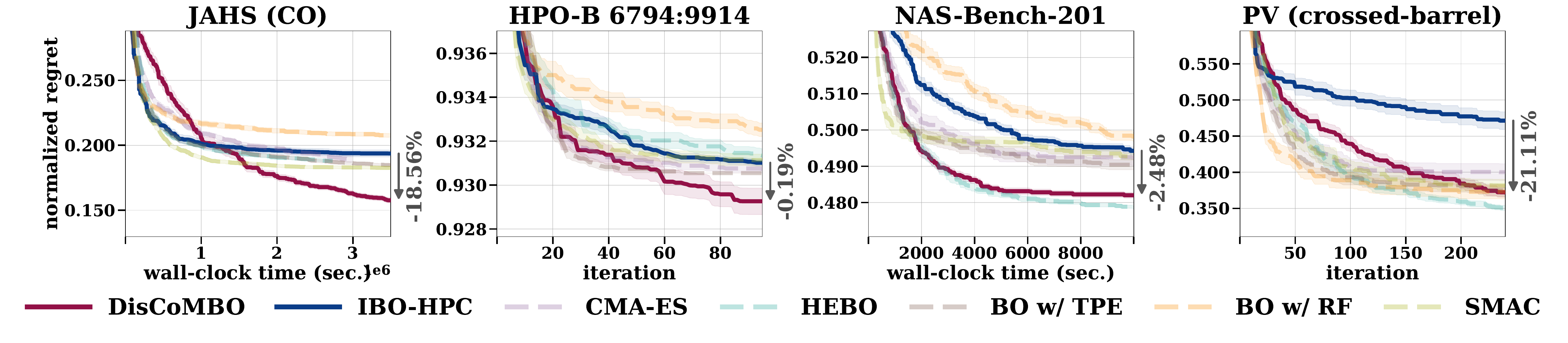}
    \caption{\textbf{DisCoMBO is effective on diverse tasks.} DisCoMBO (\solidline{DisCoMBO}) effectively optimizes black-box functions across tasks with diverse search spaces and optimization landscapes. In \textbf{8/9} tasks, DisCoMBO is competitive to top-tier SMBO algorithms (see App.~\ref{app:further_results} for remaining tasks). We achieve significant improvement upon IBO-HPC (\solidline{IBO}) across \textbf{9/9} tasks.}
    \label{fig:smbo_default}
    \vspace{-0.5cm}
\end{figure}

\paragraph{Knowledge Integration Protocol}To evaluate the efficacy of knowledge integration, we followed the protocol in~\citet{seng2025ihpo} to define \textit{beneficial} and \textit{misleading} priors over search space subspaces. External knowledge is provided as priors over a subset of dimensions. Beneficial knowledge favors values near the global optima $\mathbf{x}^+$ for a small subset of dimensions. Conversely, misleading knowledge favors regions near poor-performing configurations $\mathbf{x}^-$ across a larger subset of dimensions to rigorously test the robustness of the DisCo rejection mechanism. This setup allows us to quantify the surrogate's ability to benefit from helpful hints while maintaining safety when faced with misleading domain expertise (see App.~\ref{app:interaction_definition} for exact definitions).
\begin{figure*}[t]
     \centering
     \includegraphics[width=1.0\linewidth, height=3.5cm]{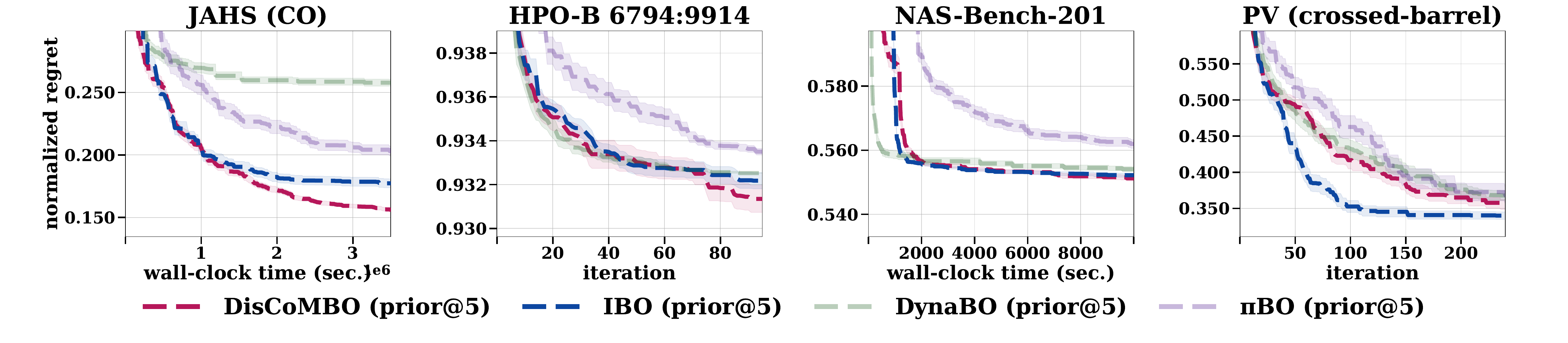}
     %\vspace{-0.3cm}
    \caption{\textbf{DisCoMBO leverages beneficial priors effectively.} We find that DisCoMBO (\dashedline{DisCoMBOearly}) retains IBO-HPC's (\dashedline{IBOearly}) efficacy in incorporating priors. We outperform baselines or perform competitively on \textbf{7/9} tasks. See App.~\ref{app:further_results} for additional results and discussion.}
    \label{fig:smbo_int_bene}
    \vspace{-0.5cm}
\end{figure*}
\subsection{\textbf{(Q1)} DisCoMBO is competitive to traditional SMBO}\label{subsec:Q1}
To evaluate whether DisCoMBO successfully bridges the gap between generative modeling and decision-theoretic rigor, we compared it against six established baselines—including state-of-the-art SMBO methods and the generative IBO-HPC—in the unassisted setting.
As shown in the critical difference diagram (Fig.~\ref{fig:cd_standard}), DisCoMBO significantly outperforms the generative baseline (IBO-HPC) and ranks first across the aggregated benchmark tasks. Statistically, DisCoMBO is indistinguishable from top-tier SMBO baselines, whereas IBO-HPC ranks significantly lower. These results, supported by the regret analysis in Fig.~\ref{fig:smbo_default} and Fig.~\ref{fig:full_regret_standard}, confirm that DisCoMBO (\solidline{DisCoMBO}) achieves competitive performance to traditional SMBO while maintaining the unique flexibility of the PC architecture.
DisCoMBO demonstrates multi-domain robustness, excelling across highly distinct optimization landscapes. It achieves top-tier performance on the materials discovery benchmark (PV Crossed-Barrel) and on structurally diverse AutoML tasks, natively navigating classical hyperparameter spaces (HPO-B), topological graphs for neural architectures (NAS-Bench-201), and complex joint spaces combining both (JAHS). This consistent performance highlights DisCoMBO's ability to capture diverse search space semantics without degradation.
The only exception to this dominant profile occurs on Floris, a purely continuous landscape where DisCoMBO substantially outperforms IBO-HPC but falls slightly short of traditional SMBO. We attribute this to the topological properties of the Floris objective, which naturally favors discriminative surrogates explicitly optimized for continuous regression (see App.~\ref{app:further_results} for an extended discussion). Nonetheless, DisCoMBO consistently outperforms prior PC-based optimization on \textbf{9/9} tasks and remains highly competitive with the state of the art on \textbf{8/9} tasks. We thus answer \textbf{(Q1)} affirmatively, establishing DisCoMBO as a robust, principled successor to purely generative PC-based methods.

%\subsection{\textbf{(Q2, Q3)} Interactive and Resilient HPO \& NAS with IBO-HPC}
\subsection{\textbf{(Q2, Q3)} DisCoMBO Integrates Priors Effectively and is Resilient}
\label{sec:Exp_InteractiveHPO}
We demonstrate that DisCoMBO retains and refines the capabilities for targeted and robust integration of external priors. We evaluate DisCoMBO under beneficial and adversarial expert guidance; detailed specifications for these prior distributions are provided in App.~\ref{app:interaction_definition}. Lastly, we show that DisCo accurately discriminates between beneficial and misleading priors.

\paragraph{Beneficial Priors}
As illustrated in Fig.~\ref{fig:smbo_int_bene} and Fig.~\ref{fig:full_regret_early}, DisCoMBO consistently retains or enhances the targeted knowledge integration characteristic of generative PC-based approaches when guided by beneficial priors. Across our evaluation, DisCoMBO matches or outperforms the purely generative IBO-HPC baseline. Furthermore, although optimizing assisted convergence was not the primary objective of this framework, DisCoMBO outperforms or matches baselines on \textbf{7/9} tasks. This robust behavior is replicated across the remaining JAHS, HPO-B, and Floris benchmarks (see App.~\ref{app:further_results}), confirming that our novel uncertainty semantics does not degrade the efficacy of generative prior guidance. Given these results, we answer \textbf{(Q2)} affirmatively.

\paragraph{Recovery and Misleading Prior Detection}
\begin{figure}[t]
    \vspace{0.3cm}
    \includegraphics[width=1.\linewidth, height=3.5cm]{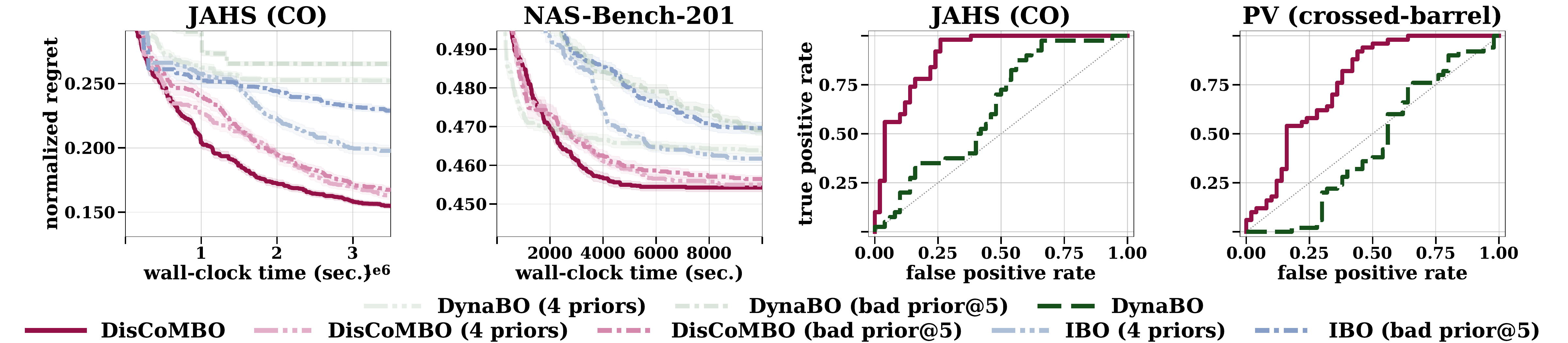}
    \caption{\textbf{DisCoMBO successfully recovers from and detects misleading priors.} (\textit{Left}) DisCoMBO effectively recovers (\dashdottedline{DisCoMBOrecover}) from misleading priors, even in worst-case scenarios in which misleading priors are always accepted. Also, it effectively handles sequences of priors (\dashdottedline{DisCoMBOmany}) with contradicting information. (\textit{Right}) We find that our DisCo-based detection mechanism's accuracy (\solidline{DisCoMBO}) is significantly higher compared to DynaBO's heuristic-based detection mechanism (\dashedline{dynabo}), indicating DisCoMBO is able to detect misleading priors.}
    \label{fig:roc_misleading}
\end{figure}
We evaluate DisCoMBO's resilience against misleading priors across three operational dimensions: the prior detection mechanism, the fallback recovery layer, and the capacity to handle sequences of contradicting priors. As shown in Fig.~\ref{fig:roc_misleading} (Left) and Fig.~\ref{fig:full_regret_recover}, when a severely misleading prior is initially accepted, DisCoMBO successfully recovers over time (\dashdottedline{DisCoMBOrecover}), demonstrating robust fallback behavior under worst-case assumptions. Furthermore, the framework effectively handles sequences of alternating beneficial and highly misleading priors introduced at different optimization phases (\dashdottedline{DisCoMBOmany}), maintaining stability under these adversarial conditions (see App.~\ref{app:interaction_definition} for exact prior definitions).To isolate the prior detection mechanism from downstream optimization dynamics, we evaluate its accuracy against the heuristic-driven safeguard of DynaBO—the only baseline equipped with a comparable detection layer. Specifically, we measure detection accuracy after 10\%, 20\%, and 30\% of the allowed budget. Fig.~\ref{fig:roc_misleading} (Right) and Fig.~\ref{fig:mis_prior_detection} illustrate the ROC curve aggregated across iterations on JAHS (CO) and PV to ensure task diversity. DisCo-based detection yields a consistently higher AUC compared to DynaBO's heuristic approach, demonstrating that DisCo enables robust and accurate detection of misleading priors. We thus answer \textbf{(Q3)} affirmatively.

\section{Related Work}
In this section, we contextualize DisCoMBO within the broader landscape of sequential optimization, examining the evolution of external knowledge integration in SMBO and the emerging role of Probabilistic Circuits as a flexible alternative to traditional surrogates.

\paragraph{Sequential Model Based Optimization (SMBO) with External Priors} SMBO seeks to optimize expensive black-box functions $f: \bm{\mathcal{X}} \rightarrow \mathbb{R}$ by alternating between a probabilistic surrogate and a selection policy determining the next query point $\mathbf{x}'$~\citep{ShahriariSWAF16}. Although effective, SMBO often significantly benefits from additional information available to simplify the problem at hand or provide additional guidance through the search process. To improve sample efficiency, especially in complex spaces, various strategies for incorporating domain expertise have emerged. Existing approaches either impose "hard" constraints to prune the search space \textit{a priori}~\citep{perrone2019LearnSpacesBO, DBLP:journals/kbs/RamachandranGRL20}, which risks excluding global optima, or leverage "warm start" transfer learning from structured historical logs~\citep{Salinas2020AQA, bai_transfer_2023}. More flexible "soft" integration methods, such as those proposed by ~\citet{souza2021bopro}, ~\citet{hvarfner2022pibo}, and ~\citet{fehring2026dynabo}, re-weight the acquisition function according to user-specified priors. However, this re-weighting mechanism can result in an opaque selection process where the prior's influence is mediated indirectly through the acquisition landscape. To tackle this, ~\citet{seng2025ihpo} exploits the conditional sampling of Probabilistic Circuits (PCs) to allow for more targeted knowledge integration. Yet, existing generative routines lack the formal exploration--exploitation semantics intrinsic to traditional SMBO and provide no normalized, scale-invariant metric to evaluate the quality of the external guidance. Our work, DisCoMBO, bridges these gaps by introducing a formal conformance score that establishes a rigorous exploration-exploitation trade-off for PC-based surrogates while enabling the robust rejection of misleading external guidance.

\paragraph{Probabilistic Circuits (PCs)} 
%To navigate the trade-offs between the flexibility of tree-based models and the formal rigor of Bayesian models like Gaussian Processes while retaining targeted external knowledge integration, we leverage 
Probabilistic Circuits (PCs) are computational graphs that represent multivariate distributions while guaranteeing tractable exact inference for a broad range of probabilistic queries~\citep{choi_2020}. By imposing architectural constraints such as \textit{decomposability} and \textit{smoothness}, PCs reduce computationally expensive marginalization tasks to linear-time operations over tractable leaf distributions~\citep{molina_2018, perharz2020einsum}. Beyond their efficiency, PCs are uniquely suited for integrating external knowledge; unlike traditional regressive surrogates, their generative nature allows for the direct injection of domain expertise through conditional sampling, providing a more targeted alternative to heuristic re-weighting of acquisition functions or model predictions~\citep{seng2025ihpo}. While PC architectures range from GPU-optimized fixed structures~\citep{perharz2020einsum} to those learned directly from data~\citep{gens2013learnSPN}, the latter are particularly effective for the small-to-medium datasets typical of SMBO. These learned structures are derived by recursively partitioning data via clustering and independence tests, naturally capturing the structural dependencies of the search space. Given the hybrid nature of real-world optimization domains, DisCoMBO specifically utilizes Mixed Sum-Product Networks (MSPNs)~\citep{molina_2018}, which extend the PC framework to mixed continuous and discrete variables using piecewise polynomial leaves (see App.~\ref{app:PCs} for formal details).

\section{Conclusion}
\label{sec:conclusion}
We introduced DisCo, a novel conformance score that quantifies model surprise for Probabilistic Circuits (PCs), and developed DisCoMBO, a principled PC-based SMBO framework. By leveraging DisCo as an uncertainty proxy, DisCoMBO simultaneously equips PC-based optimization with formal exploration-exploitation semantics and provides an accurate, scale-independent detection mechanism for misleading priors. Empirically, DisCoMBO successfully addresses the performance limitations of purely generative PC-based methods, operating on par with top-tier SMBO baselines across a diverse suite of real-world benchmarks. Crucially, it achieves this decision-theoretic rigor while fully retaining—and expanding upon—the targeted, robust external prior integration capabilities inherent to generative PC architectures.

\paragraph{Limitations \& Future Work}
Despite these advancements, several promising avenues for future research remain. First, standard PC architectures are structurally constrained by a discrete latent variable formulation, which can introduce a discretization bias on smooth, fully continuous manifolds and limit regression accuracy. Advancing underlying PC parameterizations to natively support continuous topologies would directly enhance DisCoMBO's optimization efficiency on such landscapes. Furthermore, a deeper theoretical analysis of DisCo's kernel-like properties offers an exciting frontier. Refining this connection may ultimately allow PC-based optimization to bridge the gap between linear-time probabilistic inference and the elegant, mathematically rigorous properties characteristic of kernel methods like Gaussian Processes.

%\paragraph{Impact Statement} After careful reflection, the authors have determined that this work presents no notable negative impacts to society or the environment.

\section*{Acknowledgements}
This work was supported by the National High-Performance Computing Project for Computational Engineering Sciences (NHR4CES) and by the German Research Foundation (DFG) under Germany´s Excellence Strategy (EXC-3057/1 „Reasonable
Artificial Intelligence“).

\bibliography{references}
\bibliographystyle{plainnat}

\newpage

\appendix

\section{Probabilistic Circuits}\label{app:PCs}
Since probabilistic circuits (PCs) are a key component of our method, we provide more details on these models in the following. Let us first start with a rigorous definition of PCs.

\begin{defin}
    A probabilistic cricuit (PC) is a computational graph encoding a distribution over a set of random variables $\mathbf{X}$. It is defined as a tuple $(\graph, \phi)$ where $\graph = (V, E)$ is a rooted, directed acyclic graph and $\phi: V \rightarrow 2^{\mathbf{X}}$ is the \textit{scope} function assigning a subset of random variables to each node in $\graph$.
    For each internal node $\Node$ of $\graph$, the scope is defined as the union of scopes of its children, i.e. $\phi(\Node) = \cup_{\Node' \in \ch{\Node}}$. Each leaf node $\Leaf$ computes a distribution/density over its scope $\phi(\Leaf)$. All internal nodes of $\graph$ are either a sum node $\SumNode$ or a product node $\ProductNode$ where each sum node computes a convex combination of its children, i.e.,
    $\SumNode = \sum_{\Node \in \ch{\SumNode}} w_{\SumNode, \Node}\Node$, and each product computes a product of its
    children, i.e., $\ProductNode = \prod_{\Node \in \ch{\ProductNode}}\Node$. 
\end{defin}

With this definition at hand, we describe the tractable key operations of PCs relevant to our method in more detail.

\paragraph{Inference}
Inference in PCs is a bottom-up procedure.
To compute the probability of given evidence $\mathbf{X} = \mathbf{x}$, the densities of the leaf nodes are evaluated first. This yields a density value for each leaf. The leaf densities are then propagated bottom-up by computing all product/sum nodes.
Eventually, the root node holds the probability/density of $\mathbf{x}$. 
Note that typically, multiple leaf nodes correspond to the same random variable. Thus, if the children of a sum node have the same scope, we can interpret sum nodes as mixture models. Conversely, if the children of a product node have \textit{non}-overlapping scopes, a product node can be interpreted as a product distribution of two (independent) random variables. We call these two properties smoothness and decomposability. 
More formally, \textit{smoothness} means that for each sum node $\SumNode \in V$ it holds that $\phi(\Node) = \phi(\Node')$ for $\Node, \Node' \in \ch{\SumNode}$. \textit{Decomposability} means that for each product node $\ProductNode \in V$ it holds that $\phi(\Node) \cap \phi(\Node') = \emptyset$ for $\Node, \Node' \in \ch{\ProductNode}$, $\Node \neq \Node'$.
Hence, PCs can be interpreted as hierarchical mixture models.

\paragraph{Marginalization} Decomposability implies that marginalization is tractable in PCs and can be done in linear time of the circuit size. This is because integrals that can be rewritten by nesting single-dimensional integrals can be computed only in terms of leaf integrals, which are assumed to be tractable as they follow certain distributions (e.g., Gaussian). Computing such nested integrals only in terms of leaf integrals is possible because single-dimensional integrals commute with the sum operation and affect only a single child of product nodes. For more details on the computational implications of decomposability, refer to \citep{PeharzTPD15}.

Practically, there are two ways to marginalize certain variables from the scope of a PC. One approach is structure-preserving, and marginalization is achieved by setting all leaves corresponding to the set of random variables that are supposed to be marginalized to $1$. The second approach constructs a new PC representing the marginal distribution, i.e. the structure of the PC is changed. The second approach is beneficial if samples should be drawn from the marginalized PC because the sampling procedure remains the same, i.e. the PC is adopted to obtain the marginal distribution, not vice versa.

\paragraph{Conditioning}
Computing a conditional distribution $p(\mathbf{X}_1 | \mathbf{X}_2) = \frac{p(\mathbf{X})}{\int_{\mathbf{X}_2} p(\mathbf{X})}$ where $\mathbf{X}_1 \cup \mathbf{X}_2 = \mathbf{X}$ and $\mathbf{X}_1 \cap \mathbf{X}_2 = \emptyset$ is achieved by combining marginalization (denominator) and inference (numerator). Since inference is tractable for PCs in general and marginalization is tractable for decomposable PCs, conditioning is also tractable. 
%Conditioning essentially yields another PC over $\mathbf{X}_1$ for each evidence $\mathbf{X}_2 = \mathbf{x}_2$. 

\paragraph{Sampling}
Sampling in PCs is a top-down procedure and recursively samples a sub-tree, starting at the root. Each sum node $\SumNode$ holds a parameter vector $\mathbf{w}$ s.t. $\sum_{i=0}^{|\ch{\SumNode}|} \mathbf{w}_i = 1$. Based on the distribution induced by $\mathbf{w}$, one of the children of $\SumNode$ is sampled as a sub-tree. By decomposability, the scope of the children of a product node is non-overlapping; thus, sampling from a product node corresponds to sampling from all its child nodes. If a leaf node is reached, a sample is obtained from the distribution at that leaf.

\paragraph{Learning}
Learning PCs consists of two steps: Identifying the structure of the PC and learning the parameters of the PC. A common approach to learning both the structure and parameters is LearnSPN~\citep{gens2013learnSPN}. We employ LearnSPN to learn the PC after obtaining new data. The basic idea of LearnSPN is to split the data by alternating clustering (i.e., split the data along the sample dimension) and independence tests (i.e., split the data along the features dimension). In other words, the data matrix is split by rows (samples) and columns (features). Usually, rows are clustered when the independence test fails in splitting the features. Clusters correspond to sum nodes in the learned PC, while product nodes correspond to successfully passed independence tests (assessing that two subsets of features are statistically independent). The parameters (i.e., weights of sum nodes) are set proportional to the cluster sizes of clusters represented by the child nodes of a sum node. Leaf parameters are commonly defined via maximum likelihood estimation.

LearnSPN is efficient on small-to-medium datasets (e.g., $50$ variables and $1000$ samples), as it is based on K-Means and RDC for independence testing, resulting in a quadratic runtime complexity. Since we are interested in high sample efficiency in SMBO, we operate in the regime of small to medium-sized datasets.

\paragraph{Induced Tree Representation} Here, we briefly describe the induced tree representation from \citet{zhaoa16CollapsedVarInf}. We use this representation in our proofs.
\begin{defin}
    \label{def:induced_pc}
        Induced Trees~\citep{zhaoa16CollapsedVarInf}. Given a complete and decomposable PC $s$ over $\bm{\mathcal{H}} = \{H_1, \dots, H_n\}$, $\mathcal{T} = (\mathcal{T}_V, \mathcal{T}_E)$ is called an induced tree PC from $s$ if
    \begin{enumerate}
        \item $\Node \in \mathcal{T}_V$ where $\Node$ is the root of $s$.
        \item for all sum nodes $\SumNode \in \mathcal{T}_V$, exactly one child of $\SumNode$ in $s$ is in $\mathcal{T}_V$, and the corresponding edge is in $\mathcal{T}_E$.
        \item for all product node $\ProductNode \in \mathcal{T}_V$, all children of $\ProductNode$ in $s$ are in $\mathcal{T}_V$, and the corresponding edges in $\mathcal{T}_E$.
    \end{enumerate}
\end{defin}

We can use Def. \ref{def:induced_pc} to represent decomposable and complete PCs as mixtures~\citep{zhaoa16CollapsedVarInf}.

\begin{prop}[Induced Tree Representation]
    \label{prop:induced_tree}
        Let $\tau_s$ be the total number of induced trees in $s$. Then the output at the root of $s$ can be written as $\sum_{t=1}^{\tau_s} \prod_{(k, j) \in \mathcal{T}_{t E}} w_{k j} \prod_{i=1}^n p_t(H_i = \bm{\theta}_i)$, where $\mathcal{T}_t$ is the $t$-th unique induced tree of $s$ and $p_t(H_i)$ is a univariate distribution over $H_i$ in $\mathcal{T}_t$ as a leaf node.
\end{prop}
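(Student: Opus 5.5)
The plan is to prove the Induced Tree Representation by structural induction on the (decomposable, complete/smooth) PC $s$, showing a slightly stronger claim: for every node $\Node$ of $s$, the value computed at $\Node$ equals the sum, over all induced trees $\mathcal{T}$ of the sub-circuit rooted at $\Node$, of the product of the sum-edge weights in $\mathcal{T}$ times the product of the leaf values of $\mathcal{T}$. Applying this to the root gives the statement. First, I will observe that every induced tree is in fact a tree, even though $\graph$ is a DAG. Decomposability ensures that no node is reached twice through different children of a product node, since their scopes are disjoint. Each sum node contributes only one child, so it cannot be the source of a second path either. For the same reason, the leaves of an induced tree rooted at the PC root have pairwise disjoint scopes whose union is $\bm{\mathcal{H}}$. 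With univariate leaves, this means exactly one leaf per variable $H_i$, which justifies writing the leaf product as $\prod_{i=1}^n p_t(H_i=\bm{\theta}_i)$.

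For the induction, the base case is a leaf $\Leaf$. Its only induced tree is $\{\Leaf\}$, with an empty weight product, and the formula returns its value. For a sum node $\SumNode=\sum_{\Node} w_{\SumNode,\Node}\Node$, the induced trees of $\SumNode$ are in bijection with pairs consisting of a child $\Node$ and an induced tree of $\Node$. Plugging in the inductive hypothesis for each child and distributing $w_{\SumNode,\Node}$ multiplies each sub-tree term by the weight of the added edge $(\SumNode,\Node)$, which is exactly the claimed expression.

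For a product node $\ProductNode=\prod_{j=1}^k \Node_j$, the induced trees of $\ProductNode$ are in bijection with tuples $(\mathcal{T}^{(1)},\dots,\mathcal{T}^{(k)})$ of induced trees of the children. Here I again use decomposability: the children's sub-trees share no nodes, so the union is well defined and the map is injective. I expand the product of $k$ sums by distributivity. Each resulting term corresponds to one tuple, and its weight product is the product of the children's weight products; product-node edges carry no weights. Its leaf product is the product of the children's leaf products.

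The main obstacle is the counting/bijection bookkeeping in the DAG case. Shared sub-circuits could in principle cause the same induced tree to arise from different choices, or a node to be visited twice. I would handle this by defining induced trees as sets of edges and verifying injectivity of the decomposition maps above using the scope-disjointness arguments. Smoothness (completeness) is needed only so that every induced tree of the root covers all of $\bm{\mathcal{H}}$, which makes each term a full joint product over $H_1,\dots,H_n$.
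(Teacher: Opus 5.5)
Your proposal is correct and follows the standard argument that this paper relies on. The paper states the result without proof and defers to \citet{zhaoa16CollapsedVarInf}, where decomposability shows each induced tree is a genuine tree with one leaf per variable, and induction over the circuit (a sum case, and a product case expanded by distributivity) yields the mixture form. Defining induced trees by their edges is a worthwhile precision: under the literal node-based definition, a child of a sum node that is also a descendant of a sibling child would be wrongly excluded, which breaks the term-to-tree bijection.
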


\section{Proofs}\label{app:proofs}
This section provides proofs for all propositions in the main paper.

\subsection{Order-Isomorphism between Kernels and DisCo}
\label{app:iso-order-kernel}

Here, we provide the proof that DisCo is order-isomorphic to symmetric kernels under certain parameterizations of PCs.

\begin{prop} \label{proof:order-iso}
Let $p$ be a PC defined over real-valued random variables $X_1, \dots, X_n$, where each $X_i$ is modeled by $m$ leaves. Each leaf $\Leaf_{ij}$ is characterized by a symmetric density $f$ with a location parameter $\theta_{ij}$ and a monotone cumulative distribution function (CDF) $\Phi$. Let $d: \bm{\mathcal{X}} \times \bm{\mathcal{X}} \rightarrow \mathbb{R}_{\geq 0}$ be a $p$-norm and $k: \bm{\mathcal{X}} \times \bm{\mathcal{X}} \rightarrow \mathbb{R}$ be a symmetric, strictly increasing bounded kernel. Assume that both the leaf densities and $k$ are strictly monotonic functions of $d$ up to constant factors, define $\mathbf{\Theta} = \{\bm{\theta}_j\}_{j=1}^m$. Then, for any two points $\mathbf{x}_1, \mathbf{x}_2 \in \bm{\mathcal{X}}$ and $\bm{\theta} \in \bm{\Theta}$, $\rho_p$ is order-isomorphic to the kernel $k(\cdot, \cdot)$, such that $d(\mathbf{x}_1, \bm{\theta}) > d(\mathbf{x}_2, \bm{\theta})$ implies both $(1 - \rho_p(\mathbf{x}_1; \bm{\theta})) > (1 - \rho_p(\mathbf{x}_2; \bm{\theta}))$ and $k(\mathbf{x}_1; \bm{\theta}) > k(\mathbf{x}_2; \bm{\theta})$.
\end{prop}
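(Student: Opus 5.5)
Throughout, I read $\rho_p(\mathbf{x};\bm{\theta})$ as the DisCo score of the component of the induced-tree mixture (Prop.~\ref{prop:induced_tree}) whose leaves have locations $\bm{\theta}=(\theta_{1j},\dots,\theta_{nj})$. More precisely, it is the score of the product node whose children are the leaves $\Leaf_{1j},\dots,\Leaf_{nj}$; the mixture weights $\prod w_{kj}$ are positive constants and do not depend on $\mathbf{x}$. The argument then follows the bottom-up recursion of Sec.~\ref{sec:disco}, checking monotonicity at each layer: leaves, then the product node via Fisher's method, then the weighted sum. A final comparison with the kernel closes it.

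\textbf{Step 1 (leaves).} For a leaf with a symmetric density $f$ centred at $\theta_{ij}$, the CDF satisfies $\Phi(\theta_{ij}+u)=1-\Phi(\theta_{ij}-u)$. Hence $\min(\Phi(x_i),1-\Phi(x_i)) = 1-\Phi(\theta_{ij}+|x_i-\theta_{ij}|)$. So $\rho_{\Leaf_{ij}}(x_i)=2\bigl(1-\Phi(\theta_{ij}+|x_i-\theta_{ij}|)\bigr)=:g(|x_i-\theta_{ij}|)$, where $g$ is strictly decreasing in the one-dimensional distance. I will assume $\Phi$ is strictly increasing, i.e.\ the density has full support.

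\textbf{Step 2 (product node).} The Fisher statistic $S(\mathbf{x})=-2\sum_i\ln g(|x_i-\theta_i|)$ is strictly increasing in each coordinate distance, because $-\ln$ is decreasing. Therefore $\rho_{\ProductNode}=1-F_{\chi^2}(S;2n)$ is strictly decreasing in each $|x_i-\theta_i|$, since $F_{\chi^2}$ is strictly increasing on $(0,\infty)$. Then $1-\rho_{\ProductNode}=F_{\chi^2}(S;2n)$ increases with every coordinate distance.

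\textbf{Step 3 (the obstacle).} I must pass from coordinatewise monotonicity to monotonicity in the $p$-norm $d(\mathbf{x},\bm{\theta})$. In general this fails: two points with $d(\mathbf{x}_1,\bm{\theta})>d(\mathbf{x}_2,\bm{\theta})$ need not be coordinatewise ordered. I will use the hypothesis that the leaf-induced quantities are "strictly monotonic functions of $d$ up to constant factors". Concretely, I take it to mean that the aggregated leaf term $\sum_i -\ln g(|x_i-\theta_i|)$ equals $h(d(\mathbf{x},\bm{\theta}))$ up to an affine transformation, for some strictly increasing $h$. For example, Gaussian-type kernels with $p=2$ give $-\ln f$ summing to a function of $\|\cdot\|_2^2$; I would first try to make this rigorous via a tail approximation or by choosing the matching family. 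Under this reading, $S=\alpha h(d)+\beta$ with $\alpha>0$. Hence $1-\rho_{\ProductNode}$ is a strictly increasing function of $d$. If this identification cannot be justified for the literal two-tailed CDF, I will restrict to pairs $\mathbf{x}_1,\mathbf{x}_2$ lying on a common ray from $\bm{\theta}$. Along such a ray every coordinate distance scales together, so Step 2 already suffices.

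\textbf{Step 4 (sum/root and kernel).} Fixing $\bm{\theta}$ fixes the induced tree, so the root score is $c\,\rho_{\ProductNode}$ with $c>0$, and strict monotonicity in $d$ is preserved. Therefore $d(\mathbf{x}_1,\bm{\theta})>d(\mathbf{x}_2,\bm{\theta})$ implies $1-\rho_p(\mathbf{x}_1;\bm{\theta})>1-\rho_p(\mathbf{x}_2;\bm{\theta})$. By hypothesis $k$ is a strictly increasing function of $d$ up to constant scale and shift, so the same premise gives $k(\mathbf{x}_1;\bm{\theta})>k(\mathbf{x}_2;\bm{\theta})$. Both orderings are strictly monotone images of the order induced by $d$, so they coincide, which is the claimed order-isomorphism.
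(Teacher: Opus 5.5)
\textbf{Same decomposition as the paper, but Step 3 is a real gap.} Your proof follows the paper's route: leaf, then Fisher at the product node, then the root with $\bm{\theta}$ fixing a single induced tree. Steps 1, 2 and 4 are sound. Step 3 is a genuine gap, and neither of your two patches closes it for the proposition as stated. The paper's proof makes exactly the leap you flagged. It asserts that $d(\mathbf{x}_1,\bm{\theta})<d(\mathbf{x}_2,\bm{\theta})$ implies $-2\sum_i\ln\rho_i(x_{1,i})<-2\sum_i\ln\rho_i(x_{2,i})$ ``by 1.\ and the fact that $\ln$ is monotonic'', with no further argument.

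\textbf{The statement is false for $n\ge 2$ under the literal hypotheses.} For a fixed product node, $\rho_{\ProductNode}$ is an increasing function of $\prod_i\rho_{\Leaf_i}$. Take standard Gaussian leaves at $\bm{\theta}=\mathbf{0}$ and $d=\ell_2$. Each leaf density, and even the product density $\propto e^{-d(\mathbf{x},\bm{\theta})^2/2}$, is strictly decreasing in distance, so every reading of the hypothesis is met. Let $\mathbf{x}_1=(4.05,0)$ and $\mathbf{x}_2=(2\sqrt{2},2\sqrt{2})$, so $d(\mathbf{x}_1,\bm{\theta})=4.05>4=d(\mathbf{x}_2,\bm{\theta})$. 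Yet $\prod_i\rho_{\Leaf_i}(\mathbf{x}_1)=2\bigl(1-\Phi(4.05)\bigr)\approx 5.1\cdot 10^{-5}$ exceeds $\prod_i\rho_{\Leaf_i}(\mathbf{x}_2)=\bigl(2(1-\Phi(2\sqrt{2}))\bigr)^2\approx 2.2\cdot 10^{-5}$. Hence $1-\rho_p(\mathbf{x}_1;\bm{\theta})<1-\rho_p(\mathbf{x}_2;\bm{\theta})$, the opposite of the claim.

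Your tail-approximation idea cannot rescue the Gaussian case. Fisher's statistic uses $-\ln$ of tail masses, not of densities, and $-\ln\bigl(2(1-\Phi(u))\bigr)=u^2/2+\ln u+O(1)$. So at equal norm, the gap in Fisher's statistic between $(a,0)$ and $(a/\sqrt{2},a/\sqrt{2})$ grows like $\ln a$. The same axis-versus-diagonal comparison, after a small outward perturbation of the appropriate point, gives counterexamples for every $\ell_p$.

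\textbf{What your two patches actually prove.} Reading (a) is not an interpretation of the hypothesis but a much stronger assumption. Write $\psi(t):=-\ln\rho_{\Leaf}(\theta+t)$. By continuity, order preservation for all pairs forces $\sum_i\psi(|x_i-\theta_i|)$ to be constant on $\ell_p$-spheres. Evaluating at $t\mathbf{e}_i$ and at points with two nonzero coordinates turns this into Cauchy's equation for $s\mapsto\psi(s^{1/p})$. So all leaves must satisfy $\rho_{\Leaf}(x)=e^{-c|x-\theta|^p}$ with a common $c$; for $p=\infty$ there is no solution at all. The corresponding density at distance $t$ is proportional to $t^{p-1}e^{-ct^p}$, which is decreasing in $t$ only for $p=1$. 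So the only case reading (a) covers is Laplace leaves of equal scale $b$ with $d=\ell_1$, where indeed $-2\sum_i\ln\rho_{\Leaf_i}=\tfrac{2}{b}\|\mathbf{x}-\bm{\theta}\|_1$.

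Your fallback (b) is correct but proves a weaker proposition. It gives order preservation along rays from $\bm{\theta}$, or more generally for pairs with $|x_{1,i}-\theta_i|\ge|x_{2,i}-\theta_i|$ for all $i$ and strict inequality for some $i$. Your Steps 1, 2 and 4 do prove any of these restricted versions, as well as the case $n=1$. State the restriction explicitly as part of the proposition rather than presenting it as a reading of the original hypothesis, because no argument can close Step 3 under the hypotheses as written.
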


\begin{proof}
    To show order-isomorphism between $k(\cdot)$ and $1 - \rho_p(\cdot)$, we exploit the induced tree representation from Eq. \ref{eq:induced_tree}. Further, we use that - by definition - $k$ is an affine transformation of $d$, thus $k$ preserves the order induced by $d$.

    We now have to show three things:
    \begin{enumerate}
        \item DisCo preserves the order of $d$ in all leaf nodes, i.e., $\rho_{\Leaf}$ has the same order as $d$.
        \item Since product nodes combine DisCo scores of children using Fisher's method~\citep{fisher1932statisticalmethods}, we have to show that Fisher is order-preserving w.r.t $d$, i.e., all $\rho_{\ProductNode}$ inherit the order induced by $d$.
        \item The final weighted sum in the root node of the induced tree representation preserves the order of incoming child DisCo scores, i.e., $\rho_p$ inherits the order of $d$.
    \end{enumerate}

    \textbf{$\rho_{\Leaf}$ is order-preserving.} We start by showing that $\rho_{\Leaf}$ preserves the order of $d$. Remember that each leaf $\Leaf_{i j}$ has a symmetric density $f_{i j}$ that is an affine transformation of $d$, i.e. $f_{i j}(\cdot) = a_{i j} \cdot d(\cdot)$ with $a_{i j} > 0$ and s.t. $\int f_{i j}(x; \theta_{i j})dx = 1$ where $\theta_{i j}$ parameterizes the density. Remember
    \begin{equation*}
        \rho_{\Leaf_{i j}}(x) = 2 \cdot \min\Big(\int_{-\infty}^x f_{i j}(x; \theta_{i j}), \int_{x}^{\infty} f_{i j}(x; \theta_{i j})\Big).
    \end{equation*}
    Since $f_{i j}$ is symmetric, where the symmetry point is given by its parametrization $\theta_{i j}$, $\rho_{\Leaf_{i j}}$ is also symmetric at $\theta_{i j}$ because the probability mass is exactly split in tow halves at the symmetry point and $\min$ only selects which side we consider.
    Thus, for $x_1, x_2 \in \text{supp}(f_{i j})$ we have to consider two cases:
    \begin{enumerate}
        \item $x_1 < \theta_{i j}$ and $x_2 < \theta_{i j}$
        \item $x_1 < \theta_{i j}$ and $x_2 > \theta_{i j}$
    \end{enumerate}
    Starting with 1., since $d(x_1, \theta_{i j}) < d(x_2, \theta_{i j})$ is assumed, we find that
    \begin{equation*}
        \int_{-\infty}^{x_1} f_{i j}(x; \theta_{i j})dx > \int_{-\infty}^{x_2} f_{i j}(x; \theta_{i j})dx
    \end{equation*}
    and thus $\rho_{\Leaf_{i j}}(x_1) >\rho_{\Leaf_{i j}}(x_2)$, therefore $1 - \rho_{\Leaf_{i j}}(x_1) < 1 - \rho_{\Leaf_{i j}}(x_2)$.

    Proceeding with 2., consider
    \begin{align*}
        \rho_{\Leaf_{i j}}(x_1) = \int_{-\infty}^{x_1} f_{i j}(x; \theta_{i j})dx \\
        \rho_{\Leaf_{i j}}(x_2) = \int_{x_2}^{\infty} f_{i j}(x; \theta_{i j})dx.
    \end{align*}
    This is because the $\min$ operator always selects the minimum area under the density curve which spans from $-\infty$ to $x_1$ for $x_1$ and spans from $x_2$ to $\infty$ for $x_2$.
    Since $f_{i j}$ is symmetric and $d(x_1, \theta_{i j}) < d(x_2, \theta_{i j})$ is assumed, $\rho_{\Leaf_{i j}}(x_1)$ "collects`` more mass than $\rho_{\Leaf{i j}}(x_2)$, therefore $\rho_{\Leaf_{i j}}(x_1) >\rho_{\Leaf_{i j}}(x_2)$, and thus $1 - \rho_{\Leaf_{i j}}(x_1) < 1 - \rho_{\Leaf_{i j}}(x_2)$.

    \textbf{Fisher preserves order.} Next, we show that applying Fisher to incoming DisCo values at product nodes preserves the order between points $\mathbf{x}_1, \mathbf{x}_2 \in \text{supp}(\ProductNode)$ induced by $d$. Note that we now consider vectors instead of scalar values since product nodes model product distributions over multiple independent random variables. Remember that for a product $\ProductNode$, DisCo is computed as
    \begin{equation*}
        \rho_{\ProductNode}(\mathbf{x}) = \Big(1 - F_{\chi^2}\big(-2 \cdot \sum_{i=1}^j \text{ln}(\rho_i(\mathbf{x}_i)); 2j\big)\Big)
    \end{equation*}
    where $j$ is the number of children of $\ProductNode$. Again, we assume $d(\mathbf{x}_1, \bm{\theta}_{\ProductNode}) < d(\mathbf{x}_2, \bm{\theta}_{\ProductNode})$ where $\bm{\theta}_{\ProductNode}$ is the parameter vector of all $j$ leaf nodes connected to $\ProductNode$. Note that we can safely assume that $\ProductNode$ is only connected to exactly these $j$ children due to the induced tree representation. 
    Since $d$ is a $p$-norm, we have
    \begin{equation*}
        \Big(\sum_{i=1}^j | \mathbf{x}_{1, i} - \bm{\theta}_{\ProductNode_i} |^p\Big)^{1/p} < \Big(\sum_{i=1}^j | \mathbf{x}_{2, i} - \bm{\theta}_{\ProductNode_i} |^p\Big)^{1/p}
    \end{equation*}
    which implies 
    \begin{equation*}
        - 2 \cdot \sum_{i = 1}^j \text{ln}(\rho_i(\mathbf{x}_{1, i})) < - 2 \cdot \sum_{i = 1}^j \text{ln}(\rho_i(\mathbf{x}_{2, i}))
    \end{equation*}
    by 1. and the fact that $\text{ln}$ is a monotonic function. Since $F_{\chi^2}$ is a monotonically increasing function as well and bound in $[0 ,1]$, applying $1 - F_{\chi^2}(\cdot)$ yields $\rho_{\ProductNode}(\mathbf{x}_1) > \rho_{\ProductNode}(\mathbf{x}_2)$ and thus $1 - \rho_{\ProductNode}(\mathbf{x}_1) < 1 - \rho_{\ProductNode}(\mathbf{x}_2)$.

    \textbf{Root node preserves order.} Finally, let us proceed with showing that under our assumptions, the weighted sum of the root node preserves the order induced by its children (and thus $d$). Since we assume that $\mathbf{x}_1$ and $\mathbf{x}_2$ share the same infimum $\bm{\theta} \in \mathbf{\Theta}$, that is the same neighbor, the sum reduces to an affine transformation
    \begin{equation*}
        \prod_{(j, k) \in \mathcal{T}_t} w_{j, k} \cdot \rho_{\ProductNode_{j, k}}(\mathbf{x}).
    \end{equation*}
    Therefore, the order is preserved and $1 - \rho_{p}(\mathbf{x}_1) < 1 - \rho_{p}(\mathbf{x}_2)$ holds.
\end{proof}

\paragraph{Discussion of Assumptions}
While we acknowledge that some of our assumptions might be violated in practice, especially in mixed domains, this result shows an interesting and promising bridge between uncertainty estimation in PCs and kernel-based methods.

From a modeling perspective, pairing symmetric leaf distributions in the PC with a symmetric kernel is a natural and well-justified choice. In classical Bayesian optimization, symmetric kernels (such as Gaussian or Matérn) encode the fundamental stationary prior that information propagates uniformly in all directions from an observed point. By utilizing symmetric leaf distributions (e.g., isotropic Gaussians or symmetric categorical priors), the PC surrogate mirrors this isotropic inductive bias locally within its tractable density estimation architecture. This alignment ensures that "model surprise" (DisCo) scales monotonically with the distance from existing data clusters, providing a coherent uncertainty proxy.
While out of the scope of this paper, different PC parameterizations might allow for relaxed assumptions regarding the form of the kernel.

Beyond our theoretical analysis, we provide empirical evidence demonstrating that DisCo remains an effective uncertainty proxy even when the assumptions of Prop.~\ref{prop:order-iso} are relaxed—specifically within mixed or discrete search spaces. First, the regret analysis in App.~\ref{app:further_results} shows that DisCo-based guidance consistently identifies high-performing regions across diverse landscapes. Furthermore, Fig.~\ref{fig:mis_prior_detection} demonstrates that DisCo reliably discriminates between beneficial and misleading priors across continuous, mixed, and discrete domains. This indicates that the score provides a robust signal for distinguishing conformant from non-conformant samples, regardless of the underlying domain structure.

\subsection{Regret Analysis}
\label{app:regret_analysis}

In this section, we provide the regret analysis of DisCoMBO.

\begin{lem}[Coverage]\label{proof:coverage}
    Let $f$ be a function from a Reproducing Kernel Hilbert Space (RKHS) $\mathcal{H}_k$ such that kernel $k = \phi \circ \rho$ where $\phi$ is a mapping such that $k$ is a valid symmetric kernel. Assume a PC $p$ such that $p(y | \mathbf{x})$ is unbiased, $f(\mathbf{x}) - \mathbb{E}[p(y | \mathbf{x})]$ is sub-Gaussian and $y = f(\mathbf{x}) + \epsilon$ with $\epsilon \sim \text{SubGauss}(\sigma)$. Then, $\mathbb{E}[p(y | \mathbf{x})] + \beta_t \cdot \phi(\rho(\mathbf{x}))$ entails $f$ with probability $1 - \delta_t$, for a given $\beta_t \geq \sqrt{2\text{ln}(\frac{2}{\delta_t})}$.
\end{lem}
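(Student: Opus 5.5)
The plan is a standard sub-Gaussian tail bound at a single point, written in the scale set by the kernel feature $\phi(\rho(\mathbf{x}))$. Write $\mu(\mathbf{x}) := \mathbb{E}[p(y \mid \mathbf{x})]$ for the PC's predictive mean and $\sigma(\mathbf{x}) := \phi(\rho(\mathbf{x}))$. The claim "entails $f$" will be read as the two-sided confidence statement
\begin{equation*}
|f(\mathbf{x}) - \mu(\mathbf{x})| \leq \beta_t\, \sigma(\mathbf{x}) \quad \text{with probability at least } 1-\delta_t .
\end{equation*}
So the goal is to control the deviation $Z(\mathbf{x}) := f(\mathbf{x}) - \mu(\mathbf{x})$.

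The key steps, in order, are as follows.
\begin{enumerate}
\item By unbiasedness of $p(y\mid\mathbf{x})$ and $y = f(\mathbf{x}) + \epsilon$, we get $\mathbb{E}[Z(\mathbf{x})] = 0$.
\item By assumption $Z(\mathbf{x})$ is sub-Gaussian. I will identify its variance proxy with $\sigma(\mathbf{x})^2 = \phi(\rho(\mathbf{x}))^2$, which is where $k = \phi\circ\rho$ enters. Since $k$ is a valid symmetric kernel, $\phi(\rho(\mathbf{x}))$ plays the role of the posterior standard deviation, in the spirit of GP-UCB. Prop.~\ref{prop:order-iso} justifies that this quantity is ordered like a kernel-based uncertainty, so it is a legitimate scale for the deviation.
\item Apply the Chernoff bound for sub-Gaussian variables:
\begin{equation*}
\Pr\bigl(|Z(\mathbf{x})| > \beta_t \sigma(\mathbf{x})\bigr) \leq 2\exp\bigl(-\beta_t^2/2\bigr).
\end{equation*}
\item Set the right-hand side at most $\delta_t$. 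This gives exactly $\beta_t \geq \sqrt{2\ln(2/\delta_t)}$, matching the stated threshold.
\item If the $\epsilon$-noise is to be included rather than absorbed into $Z$, combine the two independent sub-Gaussian terms into a single sub-Gaussian variable, adding the variance proxies. Then rescale $\sigma$, or equivalently enlarge $\beta_t$ by a constant factor.
\end{enumerate}

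The main obstacle is step 2. The hypotheses only say that $Z(\mathbf{x})$ is sub-Gaussian, not that its proxy is $\phi(\rho(\mathbf{x}))^2$. I would make this an explicit normalization: choose $\phi$, which is free up to $k$ being a valid kernel, so that $\phi\circ\rho$ upper-bounds the proxy standard deviation pointwise. Prop.~\ref{prop:order-iso} makes this at least monotonically consistent.

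The statement is pointwise, for fixed $\mathbf{x}$ and $t$. If a uniform version is needed later for the regret proof, I would union-bound over the finite candidate set $\mathcal{C}$ (or a discretization of $\bm{\mathcal{X}}$) and over $t$, taking $\delta_t = 6\delta/(\pi^2 t^2 |\mathcal{C}|)$. This only inflates $\beta_t$ logarithmically.
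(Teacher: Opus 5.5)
Your proposal follows essentially the same route as the paper, and it is sound to the same extent: a two-sided sub-Gaussian tail bound for $f(\mathbf{x}) - \mathbb{E}[p(y \mid \mathbf{x})]$ at scale $\phi(\rho(\mathbf{x}))$, then solving $2\exp(-\beta_t^2/2) \le \delta_t$ for $\beta_t$. The only difference is presentational. The paper motivates the scale through the reproducing property $|f(\mathbf{x})| \le \|f\|_k \sqrt{k(\mathbf{x},\mathbf{x})}$ and then uses $\phi(\rho(\mathbf{x}))^2$ as the variance proxy, whereas you state that identification openly as a normalization assumption --- the more honest reading, since neither argument derives it from the stated hypotheses.
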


\begin{proof}
    For an RKHS we have that $|f(\mathbf{x})| = \langle f, k(\mathbf{x}, \cdot)\rangle \leq ||f||_k \cdot \sqrt{k(\mathbf{x}, \mathbf{x})}$.
    Given the existence of $\phi$, we have
    \begin{equation*}
        |f(\mathbf{x}) - \mathbb{E}[p(y | \mathbf{x})]| \leq ||f||_k \cdot \phi(\rho(\mathbf{x}))
    \end{equation*}
    Now, for $\beta_t$, we want that
    \begin{equation*}
        P(|f(\mathbf{x}) - \mathbb{E}[p(y | \mathbf{x})]| \leq \beta_t \cdot \phi(\rho(\mathbf{x}))) \leq \delta_t
    \end{equation*}
    holds. Since the error is assumed to be sub-Gaussian, we have
    \begin{equation*}
        P(|f(\mathbf{x}) - \mathbb{E}[p(y | \mathbf{x})]| \leq \beta_t \cdot \phi(\rho(\mathbf{x}))) \leq 2 \cdot \text{exp}\Big(- \frac{\beta_t \cdot \phi(\rho(\mathbf{x}))^2}{2 \phi(\rho(\mathbf{x}))^2}\Big).
    \end{equation*}
    We solve for $\beta_t$:
    \begin{align*}
        \delta_t = 2 \text{exp}(- \frac{\beta_t^2}{2}) \\
        \frac{\delta_t}{2} = \text{exp}(-\frac{\beta_t^2}{2}) \\
        \text{ln}(\frac{\delta_t}{2}) = - \frac{\beta_t^2}{2} \\
        \beta_t = \sqrt{2 \text{ln}(\frac{2}{\delta_t})}
    \end{align*}
\end{proof}

\begin{prop}[DisCoMBO is a Zero Regret Algorithm]
    Assume some $f \in \mathcal{H}_k$ with $\mathcal{H}_k$ being a Reproducing Kernel Hilbert Space (RKHS) with squared exponential kernel $k = \phi \circ \rho$ s.t. $f: \bm{\mathcal{X}} \rightarrow \mathbb{R}$ and $\mathbf{x}^*$ being the optimum of $f$. Further assume a periodic schedule $s(t) \in \mathbb{R}_{\geq 0}$ that determines whether focus is set on exploitation or exploration, s.t. exploitation periods get longer over time, i.e. $\pi_t = c \cdot \pi_{t-1}$ for some $c > 1$ and period length $\pi_t$. Let the number of bins $K \rightarrow \infty$ and $p(y | \mathbf{x})$ be unbiased where $p$ is a PC learned on observations $\mathcal{D} = \{(\mathbf{x}_1, y_1), \dots, (\mathbf{x}_n, y_n) \}$. Then, DisCoMBO has sub-linear cumulative regret $R_T$, i.e. $\frac{R_T}{T} \rightarrow 0$ with $T \rightarrow \infty$.
\end{prop}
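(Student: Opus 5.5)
The plan is to reduce DisCoMBO's selection rule to a GP-UCB-style argument on the exploitation phases. During the exploration phases, the regret will be controlled by a crude bound and then made negligible by the geometric growth of the period lengths. The two ingredients are the Coverage Lemma (Lemma \ref{proof:coverage}) and the order-isomorphism of Prop.~\ref{prop:order-iso}.

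\textbf{Step 1: confidence bands from DisCo.} By Prop.~\ref{prop:order-iso}, $1-\rho_p$ orders candidates exactly as the kernel-distance to the nearest leaf location does. Since $k=\phi\circ\rho$ is squared exponential, the quantity $\sigma_t(\mathbf{x}) := \phi(\rho_p(\mathbf{x}))$ plays the role of a posterior width.

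Lemma \ref{proof:coverage} then gives, with probability $1-\delta_t$,
\[
|f(\mathbf{x})-\mu_t(\mathbf{x})| \le \beta_t\,\sigma_t(\mathbf{x}), \qquad \mu_t(\mathbf{x}) := \mathbb{E}[p(y\mid\mathbf{x})].
\]
Choosing $\delta_t = 6\delta/(\pi^2 t^2)$ and taking a union bound makes this hold simultaneously for all $t$, with $\beta_t = O(\sqrt{\ln t})$.

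\textbf{Step 2: exploitation phases.} With $K\to\infty$, the bins in Algorithm \ref{algo:DisCoMBO} become infinitely fine. When $\beta_t\to 0$ in the schedule, the selected point is therefore the candidate of maximal conformance under the $Y=y^*$ conditioning. Using unbiasedness of $p(y\mid\mathbf{x})$ and Step 1, I would bound the instantaneous regret by
\[
r_t = f(\mathbf{x}^*)-f(\mathbf{x}_t) \le 2\beta_t\,\sigma_t(\mathbf{x}_t) + \bigl(\mu_t(\mathbf{x}^*)-\mu_t(\mathbf{x}_t)\bigr)_+ .
\]
I would then sum $\sigma_t$ using the standard information-gain bound for the SE kernel (Srinivas et al.), $\sum_t \sigma_t(\mathbf{x}_t)^2 \le C\gamma_T$ with $\gamma_T = O((\ln T)^{d+1})$. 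By Cauchy--Schwarz this yields $O(\sqrt{T\beta_T\gamma_T})$, which is sublinear.

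\textbf{Step 3: exploration phases.} Each exploration step costs at most $2\|f\|_\infty \le 2\|f\|_k$. Because $\pi_t = c\,\pi_{t-1}$ with $c>1$, both period types grow geometrically. I would take exploration period lengths to stay $O(\log T)$ in number, or otherwise be dominated by the exploitation periods. The fraction of time spent exploring up to $T$ then tends to zero, so the exploration contribution to $R_T/T$ vanishes.

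Exploration is still needed. Infinitely many exploration rounds, selecting the lowest-conformance points together with the uniform candidates, densify the design near $\mathbf{x}^*$. This drives $\sigma_t(\mathbf{x}^*)\to 0$, which kills the residual mean-gap term in Step 2.

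\textbf{Main obstacle.} The hard part is Step 2. DisCoMBO does not maximise a UCB; it picks by conformance alone. So I must argue that the conformance-maximising candidate drawn from $p(\mathbf{x}\mid y^*)$ has mean close to the best mean.

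My first attempt would use that conditioning on $Y=y^*$ concentrates candidates where $\mu_t \approx y^*$. With unbiasedness, together with the fact that exploration makes $y^*\to f(\mathbf{x}^*)$, the gap term $(\mu_t(\mathbf{x}^*)-\mu_t(\mathbf{x}_t))_+$ is bounded by $\beta_t\sigma_t$-type terms plus $f(\mathbf{x}^*)-y^*_t$. The latter is nonincreasing and tends to $0$, so its Cesàro mean vanishes.

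If the conformance-vs-mean link cannot be made rigorous, I would fall back to treating the exploitation phase as pure-exploitation with vanishing posterior width, which still gives $R_T/T\to 0$ (without a rate).
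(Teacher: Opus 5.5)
\textbf{Step 2 gap (exploitation).} The decisive gap is the one you flag as your main obstacle, and it already breaks the displayed bound in Step 2. The confidence bands alone give $r_t\le\beta_t\sigma_t(\mathbf{x}_t)+\beta_t\sigma_t(\mathbf{x}^*)+\mu_t(\mathbf{x}^*)-\mu_t(\mathbf{x}_t)$. Replacing $\sigma_t(\mathbf{x}^*)$ by $\sigma_t(\mathbf{x}_t)$ is exactly the UCB comparison that DisCoMBO never performs. In exploitation the selection rule actually gives the opposite inequality, because $\mathbf{x}_t$ \emph{minimises} the width over the candidate pool.

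Neither $\beta_t\sigma_t(\mathbf{x}^*)$ nor the mean gap is controlled by the information-gain sum, which only bounds widths at queried points. Your repair through $y_t^*$ needs $\mu_t(\mathbf{x}_t)\approx y_t^*$ at a high-conformance draw from $p(\mathbf{x}\mid y^*)\propto p(y^*\mid\mathbf{x})\,p(\mathbf{x})$. Unbiasedness of $p(y\mid\mathbf{x})$ does not give this: such a point can sit where $p(\mathbf{x})$ is large while the conditional mean is far from $y^*$. Moreover, under the sub-Gaussian noise of Lemma~\ref{proof:coverage}, $y_t^*$ is a running maximum of noisy values and does not converge to $f(\mathbf{x}^*)$. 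So the two pieces $f(\mathbf{x}^*)-y_t^*$ and $y_t^*-\mu_t(\mathbf{x}_t)$ need not both vanish. Your fallback does not escape this either, since selection is still by conformance and not by $\mu_t$.

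The paper gets past this step by positing premises rather than deriving them. It takes maximal-conformance sampling from $p(\mathbf{x}\mid y^*)$ to act near the incumbent $\mathbf{x}_t^*$. It takes the model to rank the incumbent first, $\mathbb{E}[p_t(y\mid\mathbf{x}_t^*)]\ge\mathbb{E}[p_t(y\mid\mathbf{x}^*)]$, and assumes $\beta_t\phi(\rho(\mathbf{x}))\to0$ for all $\mathbf{x}$. Together these yield a UCB inequality at the selected point, so Lemma~5.2 of \citet{Srinivas2012UCB} applies, and exploration is only needed to make the incumbent $L\epsilon$-optimal. Your Step 2 is missing a premise of this kind, or an actual proof that the conformance maximiser has near-maximal posterior mean.

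\textbf{Step 3: a different route with its own gap (exploration).} The paper claims sublinear regret \emph{inside} exploration phases as well. It takes $\mathbf{x}_t$ as the DisCo-maximiser over the set $\bm{\mathcal{X}}'$ of points with sampling probability at least $\epsilon_t$ under $p(\mathbf{x}\mid y^*)$ and pays a Lipschitz slack, $r_t\le L\epsilon_t+2\beta_t\phi(\rho(\mathbf{x}_t))$. Hence it never needs exploration to be a vanishing fraction of time. Its proof also draws the pool from $p(\mathbf{x}\mid y^*)$ alone, whereas you rely on the optional uniform candidates for coverage.

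Your crude bound of $2\|f\|_k$ per exploration step is simpler and avoids recasting max-uncertainty selection as a UCB step. But it is sublinear only if the total exploration time $E_T$ is $o(T)$, and the hypothesis does not supply that: it only makes exploitation periods grow. Your own assertion that both period types grow geometrically defeats it. With a common factor $c$, the ratio $E_T/T$ stays bounded away from zero and the exploration regret is linear. Note that ``$O(\log T)$ in number'' bounds the number of exploration periods, not the number of exploration steps.

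You need to assume explicitly that exploration phases have bounded length, or lengths that are $o$ of the surrounding exploitation phases. You then also need to check that such sparse exploration still makes the incumbent near-optimal, since your Step 2 depends on it.
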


\begin{proof}
    In order to prove zero-regret, we have to prove that DisCoMBO  (1) has sub-linear regret during exploration, (2) has sub-linear regret during exploitation, and (3) spends almost all iterations in exploitation with $T \rightarrow \infty$. We assume that the candidate pool is constructed by sampling from $p(\mathbf{x} | y^*)$ at each iteration.

    \textbf{DisCoMBO has sub-linear regret during exploration.} We start by showing that $R_T$ is sub-linear during exploration. With $K \rightarrow \infty$ and when $s(t) \rightarrow 0$, our bucketing mechanism defines a density over DisCo scores that follows a Beta-distribution $\text{Beta}(s(t), 1)$. Since $s(t) \rightarrow 0$, we concentrate all probability mass at $0$, i.e., minimum conformance scores (corresponding to maximizing uncertainty). By Prop. \ref{proof:order-iso}, $\rho$ is order-isomorphic to any symmetric kernel, allowing us to assume that there exists $\phi$ such that $k = \phi \circ \rho$ is a squared exponential kernel. We know that for all $\mathbf{x} \in \bm{\mathcal{X}}$ that
    \begin{equation*}
        f(\mathbf{x}) \leq \mathbb{E}[p(y | \mathbf{x})] + \beta_t \cdot \phi(\rho(\mathbf{x}))
    \end{equation*}
    holds for all $\mathbf{x}$ with probability $1 - \delta_t$ (see Lemma \ref{proof:coverage}) as well as
    \begin{equation*}
        f(\mathbf{x}^*) \leq \mathbb{E}[p(y | \mathbf{x}^*)] + \beta_t \cdot \phi(\rho(\mathbf{x}^*))
    \end{equation*}
    holds. Now, let $\bm{\mathcal{X}}' = \{\mathbf{x} \in \bm{\mathcal{X}} | \int_{\mathbf{x}} p(\mathbf{x} | y^*) > \epsilon_t \}$ the set of candidates with probability of at least $\epsilon_t$ of being sampled into the candidate pool. Intuitively, this defines an $\epsilon_t$-ball around the best observations. We know that $f$ is $L$-Lipschitz smooth (implied by RKHS assumption). Let $\mathbf{x}_t = \arg \max_{\mathbf{x}' \in \bm{\mathcal{X}}'} \phi(\rho(\mathbf{x}'))$ be the candidate maximizing DisCo. Then, 
    \begin{align*}
        & \mathbb{E}[p(y | \mathbf{x}_t)] + \beta_t \cdot \phi(\rho(\mathbf{x}_t)) \geq \mathbb{E}[p(y | \mathbf{x}^*)] + \beta_t \cdot \phi(\rho(\mathbf{x}^*)) \\
        \equiv & \mathbb{E}[p(y | \mathbf{x}^*_t) - L \epsilon_t] + \beta_t \cdot \phi(\rho(\mathbf{x}_t)) \geq \mathbb{E}[p(y | \mathbf{x}^*)] + \beta_t \cdot \phi(\rho(\mathbf{x}^*))
    \end{align*}
    holds with probability $1 - \delta_t$ (Lemma \ref{proof:coverage}). Following Lemma 5.2 from ~\citet{Srinivas2012UCB}, we can conclude that the regret $r_t$ at iteration $t$ is bound by $L \epsilon_t + 2\beta_t\phi(\rho(\mathbf{x}))$. To obtain the cumulative regret bound, we note that $k$ can be assumed to be a squared exponential kernel; we know that the information gain $\gamma_T$ grows at a rate of $O(\text{log}^{d+1}(T))$ with $d$ being the search space dimension. Also, with growing coverage, we can shrink $\epsilon_t \rightarrow 0$ at the same rate. Therefore, we can again follow ~\citet{Srinivas2012UCB} and conclude that for all $T \geq 1$ we have
    \begin{equation*}
        \sum_{t=1}^T r_t^2 = (f(\mathbf{x}^*) - f(\mathbf{x}^*_t) + 2L \epsilon_t) \leq C\beta_T \gamma_T
    \end{equation*}
    by the Cauchy-Schwarz inequality in the exploration phase.

    \textbf{DisCoMBO has sub-linear regret during exploitation.} Next, we need to show that the sub-linear regret property also holds for the exploitation phase, i.e., when $s(t) \rightarrow \text{max}(s(t))$. Assuming we can choose a sufficiently high altitude for $s(t)$, our Beta-distribution $\text{Beta}(s(t), 1)$ concentrates all of the probability mass at $1$, i.e., maximum conformance scores (corresponding to minimum uncertainty). Since we then only sample $\mathbf{x} \in \bm{\mathcal{X}}$ with $\rho(\mathbf{x}) \rightarrow 1$ from $p(\mathbf{x} | y^*)$, this amounts to minimizing uncertainty near the current incumbent. Since $f$ is L-Lipschitz smooth (a direct consequence from the RKHS), with long enough exploration we cover enough space s.t.
    \begin{equation*}
        f(\mathbf{x}^*) \leq f(\mathbf{x}_t^*) + L\epsilon
    \end{equation*}
    holds for some incumbent $\mathbf{x}_t^*$ at $t$ and some $\epsilon > 0$. Note that we can assume that we explore "enough`` for this condition to hold since we switch infinitely often between exploration and exploitation. Since we know that
    \begin{equation*}
        \mathbb{E}[p_t(y | \mathbf{x}_t^*)] \geq \mathbb{E}[p_t(y | \mathbf{x}^*)]
    \end{equation*}
    holds at $t$ (i.e., the model belief is that $\mathbf{x}_t^*$ is the optimum) and since $\beta_t \cdot \phi(\rho(\mathbf{x})) \rightarrow 0$ for all $\mathbf{x}$ as $t \rightarrow \infty$, we can conclude that
    \begin{equation*}
        \mathbb{E}[p_t(y | \mathbf{x}_t^*)] + \beta_t \cdot \phi(\rho(\mathbf{x}_t^*)) \geq \mathbb{E}[p_t(y | \mathbf{x}^*)] \beta_t \cdot \phi(\rho(\mathbf{x}^*)).
    \end{equation*}
    As before, following ~\citet{Srinivas2012UCB} implies that
    \begin{equation*}
        \sum_{t=1}^T r_t^2 = (f(\mathbf{x}^*) - f(\mathbf{x}^*_t) + 2L \epsilon_t) \leq C\beta_T \gamma_T
    \end{equation*}
    as $T \rightarrow \infty$ by Cauchy-Schwarz inequality.

    \textbf{DisCoMBO spends most time exploiting.} Finally, we show that DisCoMBO spends most of its time in exploitation. Since the period length is increased every iteration, $\pi_t \rightarrow \infty$ as $T \rightarrow \infty$. This shows that the exploitation period gets infinitely long.

    Combining these three results, we see that 
    \begin{equation*}
        \frac{\sqrt{\frac{1}{2} \cdot 2 \cdot T C \beta_T \gamma_T}}{T} \rightarrow 0 \quad \text{with} \quad  T \rightarrow \infty.
    \end{equation*}
\end{proof}

\textbf{Interpretation of DisCoMBO's Zero-Regret Property.} The zero-regret property of DisCoMBO shows that our algorithm ultimately finds the best solution in the limit of infinte compute budget and does not waste exploration budget in unpromising regions. However, note that it does not provide any finite-time convergence properties. Also, note that the exploration-exploitation trade-off in our current implementation is schedule-driven. While our proof considers periodic schedules, other ways to implement the exploration-exploitation trade-off are of course possible. For example, DisCoMBO can easily be adapted to fit into the classical acquisition function-based BO loop by dropping the schedule and using the DisCo score as an uncertainty signal, guiding the acquisition function's selection. 
However, our schedule-driven trade-off works similarly to a stochastic UCB version: The fact that we construct the candidate pool based on $p(\mathbf{x} | y^*)$ acts similarly to the mean-term in UCB, while maximizing/minimizing DisCo acts similarly to the exploration-exploitation trade-off in UCB.
Empirically, we show in \ref{sec:Experiments} that this strategy achieves performances on par with state-of-the-art BO algorithms, demonstrating that DisCo provides a useful uncertainty estimate for BO tasks.

\section{Experimental Details}\label{app:experiments}
Here, we present additional details of our empirical evaluation. The raw logs from our experiments are available at \url{https://figshare.com/s/f131750974b537c27808}, and our code is available at \url{https://github.com/bewit/mc-hpo}.

\subsection{Benchmarks}
\label{app:benchmarks}
Benchmarks serve as a critical component for the systematic evaluation of SMBO algorithms, providing a reproducible and computationally efficient framework for assessing performance in high-stakes or resource-intensive domains. A task within a benchmark is defined by a search space $\bm{\mathcal{X}}$ and an objective function $f: \bm{\mathcal{X}} \rightarrow \mathbb{R}$, such as maximizing annual energy production in wind farm layout design or minimizing validation error in hyperparameter optimization. To bypass the prohibitive costs and latency of real-world evaluations, benchmarks typically utilize three primary representations: tabular benchmarks, where a discrete subset of the search space is pre-evaluated and stored; surrogate benchmarks, which employ regression models to interpolate between observed points and enable optimization over continuous parameters; and simulation benchmarks, which leverage low-fidelity numerical proxies to capture complex system dynamics. By abstracting away the execution costs of physical experiments or large-scale model training, these benchmarks facilitate standardized, large-scale comparisons across diverse SMBO strategies under consistent budget and noise constraints.

In our experimental evaluation, we used the following benchmarks: JAHS~\citep{jahsbench201}, NAS-Bench-201~\citep{nasbench201}, HPO-B~\citep{arango2021hpoblargescalereproduciblebenchmark}, crossed barrel~\citep{gongora2020crossedbarrel}, and Floris~\citep{Fleming2020FLORIS}. We now briefly describe the characteristics of these benchmarks.

\textbf{JAHS.} JAHS aims to provide a reproducible benchmark for joint optimization of neural architectures and other hyperparameters on real-world tasks. It consists of a 14-dimensional search space (of which some dimensions, like the number of epochs, can also be treated as fidelities). The search space contains both discrete and continuous hyperparameters. Regarding tasks, JAHS provides training and evaluation statistics of models trained on CIFAR-10, Fashion-MNIST and Colorectal Histology. It further provides runtime statistics for each trained configuration (i.e., architecture and hyperparameter settings) in terms of the wall clock time. Since all configurations were trained on the same hardware, wall clock time serves as a proxy for how expensive a given configuration is. For a fair evaluation, we follow standard practice from the AutoML community and report regrets against wall clock time for each algorithm. All tasks are image classification tasks. See~\citep{jahsbench201} for more information.

\textbf{NAS-Bench-201.} NAS-Bench-201 provides a reproducible benchmark for neural architecture search. It comes with a dense encoding of architectures, leading to a 6-dimensional search space. NAS-Bench-201 provides validation accuracy, test accuracy and training time statistics for each architecture on the CIFAR-10, CIFAR-100, and Imagenet datasets. Like for JAHS, we follow standard practices from the AutoML community and report regret against wall clock time for a fair comparison of methods. In our experiments, we only used CIFAR-10. See~\citep{nasbench201} for more information.

\textbf{HPO-B.} HPO-B is a large-scale HPO benchmark based on a diverse set of OpenML tasks. It comes with 176 search spaces and 196 datasets. Since some search spaces have been evaluated on multiple datasets, many tasks are available. Most of the tasks are classification and regression tasks on different data modalities, such as tabular data or image data. In our evaluation, we used the credit-g and vehicle datasets (dataset IDs 31 and 9914), paired with search spaces over hyperparameters of a random search model (search space ID 6794) and a gradient boosting model (search space ID 6767). Both datasets are tabular datasets. We chose these since both provide many evaluations reported at OpenML (506k for credit-g and 31k for vehicle), allowing rigorous comparability. The search spaces we used contained discrete and continuous hyperparameters. The gradient boosting search space is defined over 17 hyperparameters, while the random forest search space is defined over 9 hyperparameters. For more information, refer to~\citep{arango2021hpoblargescalereproduciblebenchmark}.

\textbf{PV.} The crossed barrel benchmark is a mechanical design task where the objective is to maximize the toughness (energy absorption per unit volume) of a 3D-printed architected material. The structure consists of two parallel platforms connected by $n$ hollow, thin-walled columns. The geometry of these columns is defined by their outer radius ($r$), thickness ($t$), and a twist angle ($\theta$) between the platforms, which creates a non-linear, reentrant geometry. Performance is evaluated through physical mechanical testing, where the structure is compressed to measure its force-displacement response. The task is characterized by a complex, non-convex objective landscape where the highest toughness results from a balance between high strength and high ductility. See ~\citep{gongora2020crossedbarrel} for more details.

\textbf{Floris.} The Floris benchmark evaluates optimization performance in the design and control of wind power plants. The primary objective is to maximize the Annual Energy Production (AEP) or net revenue of a wind farm by mitigating wake-induced power losses. The design space includes the spatial coordinates of $N$ turbines (layout optimization) and their operational setpoints, such as yaw misalignment angles and induction factors (wake steering and control). The benchmark employs steady-state engineering wake models to simulate the complex, non-linear aerodynamic interactions where upstream turbines reduce the wind speed and increase turbulence for downstream units. This task is characterized by high-dimensional search spaces, significant multimodality due to the nonlinear nature of wake deficits, and the requirement to integrate over diverse wind roses (stochastic wind speeds and directions). See ~\citep{Fleming2020FLORIS} for more information.

\subsection{Search Space Extension of JAHS}
\label{app:jahs_extension}
To make the HPO problem on JAHS more challenging, we decided to extend the search space slightly as JAHS -- as a surrogate benchmark -- allows us to query hyperparameter values which were not tested explicitly in the benchmark. We defined three search spaces for JAHS which are presented in the following table.

\begin{table}[h]
\resizebox{\textwidth}{!}{
\begin{tabular}{l|lll}
                 & S1                          & S2                          & S3                          \\ \hline
Activation       & {[}Mish, ReLU, Hardswish{]} & {[}Mish, ReLU, Hardswish{]} & {[}Mish, ReLU, Hardswish{]} \\
Learning Rate    & {[}1e-3, 1e0{]}             & {[}1e-3, 1e0{]}             & {[}1e-3, 1e0{]}             \\
Weight Decay     & {[}1e-5, 1e-2{]}            & {[}1e-5, 1e-2{]}            & {[}1e-5, 1e-2{]}            \\
Trivial Argument & {[}True, False{]}           & {[}True, False{]}           & {[}True, False{]}           \\
Op1              & 0-6                         & 0-6                         & 0-6                         \\
Op2              & 0-6                         & 0-6                         & 0-6                         \\
Op3              & 0-6                         & 0-6                         & 0-6                         \\
Op4              & 0-6                         & 0-6                         & 0-6                         \\
Op5              & 0-6                         & 0-6                         & 0-6                         \\
Op6              & 0-6                         & 0-6                         & 0-6                         \\
N                & 1-15                        & 1-11                        & 1-5                         \\
W                & 1-31                        & 1-23                        & 1-16                        \\
Epoch            & 1-200                       & 1-200                       & 1-200                       \\
Resolution       & 0-1                         & 0-1                         & 0-1                        
\end{tabular}
}
\caption{\textbf{JAHS Search Space.} We define three versions of the JAHS search space, ranging from simpler to harder spaces.}
\end{table}

\newpage
\subsection{External Knowledge}
\label{app:interaction_definition}
To evaluate the robustness and adaptability of DisCoMBO, we define a structured suite of external knowledge profiles designed to simulate various expert-AI interaction scenarios. These profiles consist of both beneficial and misleading priors, characterized as probability distributions over the search space that favor specific configurations. A beneficial prior is constructed by identifying the best-performing configuration $\mathbf{x}^+$ from a random sample of $10^4$ evaluations and assigning it a sampling probability up to $1000$ times higher than the baseline for a subset of dimensions. Conversely, a misleading prior is centered on the worst-performing configuration $\mathbf{x}^-$. We emphasize strong priors to rigorously test the recovery mechanisms of IBO-HPC against baseline methods like $\pi$BO and DynaBO, which inherently favor high-strength priors in their selection policies. Our experimental design systematically explores four distinct interaction regimes: (1) sparse beneficial knowledge provided in the initial optimization stages to assess early-stage acceleration; (2) sparse beneficial knowledge introduced in later iterations to evaluate late-stage integration; (3) dense misleading knowledge covering a large majority of dimensions to challenge the algorithm’s ability to recover from deceptive guidance; and (4) a temporal sequence of contradictory priors that alternates between beneficial and misleading information. In the sparse scenarios, the prior influences fewer than 25\% of the total parameters, testing whether the model can effectively leverage localized information. In the misleading and sequential scenarios, we increase the dimensionality of the prior to demonstrate that DisCoMBO can reliably filter high-strength, low-accuracy information even when it dominates the input signal, thereby approximating the noisy and potentially inconsistent feedback loops found in real-world expert-in-the-loop systems.

\textbf{JAHS.} The following JSON code shows the priors given in our JAHS experiments.
\begin{verbatim}
[
    {
        "type": "bad",
        "intervention": {"Activation": 1, "LearningRate": 0.8201676371308472, "N": 15,
        "Op1": 3, "Op2": 4, "Op3": 1, "Op4": 2, "Resolution": 0.5096959403985494,
        "TrivialAugment": 0, "W": 14,
         "WeightDecay": 0.002697686639935806, "epoch": 10},
        "iteration": 5
    },
    {
        "type": "good",
        "intervention": {"N": 3, "W": 16, "Resolution": 1},
        "iteration": 15
    },
    {
        "type": "good",
        "intervention": null,
        "iteration": 20
    },
    {
        "type": "good",
        "kind": "dist",
        "intervention": {"N": {"dist": "cat", "parameters": 
        [1, 1, 1, 1e4, 1, 1, 1, 1, 1, 1, 1, 1, 1, 1, 1, 1]},
        "W": {"dist": "cat", "parameters": 
        [1, 1, 1, 1, 1, 1, 1, 1, 1, 1, 1, 1, 1, 1, 1, 1, 1e4]},
        "Resolution": {"dist": "uniform", "parameters": [0.98, 1.02]}},
        "iteration": 5
    }
]
\end{verbatim}

\textbf{NAS-Bench-201.} The following JSON code shows the priors provided in our experiments on NAS-Bench-201.

\begin{verbatim}
[
    {
        "type": "good",
        "kind": "point",
        "intervention": {"Op_0": 2, "Op_1": 2, "Op_2": 0},
        "iteration": 5
    },
    {
        "type": "bad",
        "kind": "point",
        "intervention": {"Op_0": 1, "Op_1": 2, "Op_2": 1},
        "iteration": 5
    },
    {
    
        "type": "good",
        "kind": "point",
        "intervention": null,
        "iteration": 20
    },
    {
        "type": "good",
        "kind": "dist",
        "intervention": {"Op_0": {"dist": "cat", "parameters": [1, 1, 1e4, 1, 1]},
                         "Op_1": {"dist": "cat", "parameters": [1, 1, 1e4, 1, 1]},
                         "Op_2": {"dist": "cat", "parameters": [1e4, 1, 1, 1, 1]}},
        "iteration": 5
    }
]
\end{verbatim}

\textbf{HPO-B.} The following shows the priors defined for our HPO-B experiments. Each of the three priors corresponds to one of the three tasks we tested from HPO-B. The order is 6767:31, 6794:31, 6794:9914. Note that we only applied beneficial priors in the case of HPO-B.

\begin{verbatim}
    {
        "type": "good",
        "kind": "dist",
        "intervention": {
            "eta": {"dist": "uniform", "parameters": [0.45, 0.55]}, 
            "subsample": {"dist": "uniform", "parameters": [0.55, 0.65]}, 
            "lambda": {"dist": "uniform", "parameters": [-150, 0]}, 
            "min_child_weight": {"dist": "uniform", "parameters": [-15, -10]}},
        "iteration": 1
    },
    {
        "type": "good",
        "kind": "dist",
        "intervention": {
            "num_trees": {"dist": "int_uniform", "parameters": [1690, 1720]}, 
            "mtry": {"dist": "int_uniform", "parameters": [30, 34]},
            "min_node_size": {"dist": "int_uniform", "parameters": [780, 800]}
        },
        "iteration": 1
    },
    {
        "type": "good",
        "kind": "dist",
        "intervention": {
            "num_trees": {"dist": "int_uniform", "parameters": [1690, 1720]}, 
            "mtry": {"dist": "int_uniform", "parameters": [280, 320]},
            "sample_fraction": {"dist": "uniform", "parameters": [0.5, 0.53]}
        },
        "iteration": 1
    }
\end{verbatim}

\textbf{PV.} The following shows the interactions defined for the task considered on crossed barrel.

\begin{verbatim}
        {
        "type": "good",
        "kind": "point",
        "intervention": {"theta": 142.46},
        "iteration": 5,
        "task": "crossed_barrel"
    },
    {
        "type": "bad",
        "kind": "point",
        "intervention": {"theta": 32.11, "n": 11.91, "r": 2.48},
        "iteration": 5,
        "task": "crossed_barrel"
    },
    {
        "type": "good",
        "kind": "dist",
        "intervention": {"theta": {"dist": "uniform", "parameters": [141, 144]}},
        "iteration": 5,
        "task": "crossed_barrel"
    },


    {
        "type": "good",
        "kind": "point",
        "intervention": {"theta": 142.46},
        "iteration": 15,
        "task": "crossed_barrel"
    },
    {
        "type": "bad",
        "kind": "point",
        "intervention": {"theta": 33.11, "n": 12.01, "r": 2.58},
        "iteration": 25,
        "task": "crossed_barrel"
    },
    {
        "type": "good",
        "kind": "point",
        "intervention": {"theta": 141.96},
        "iteration": 35,
        "task": "crossed_barrel"
    }
\end{verbatim}

\textbf{FLROIS.} The following shows the interactions defined for the task considered on FLROIS.

\begin{verbatim}
    {
        "type": "good",
        "kind": "point",
        "intervention": {
            "yaw_t000_deg": -10.83748690531058, 
            "yaw_t001_deg": 11.472652376984005, 
            "yaw_t002_deg": 13.89467307805834
        },
        "iteration": 5
    },
    {
        "type": "bad",
        "kind": "point",
        "intervention": {
            "yaw_t000_deg": -27.275959662325022, 
            "yaw_t001_deg": 1.4292151641649902, 
            "yaw_t002_deg": 26.63570488559511, 
            "yaw_t003_deg": -25.844124879037324, 
            "yaw_t004_deg": -18.74817789106919, 
            "yaw_t005_deg": -29.885699970885156, 
            "yaw_t006_deg": 9.70821606188883, 
            "yaw_t007_deg": -27.2264816726402, 
            "yaw_t008_deg": -21.44977989395875, 
            "yaw_t009_deg": 23.53069799422112, 
            "yaw_t010_deg": -8.328123875812214, 
            "yaw_t011_deg": -29.022544007776677, 
            "yaw_t012_deg": -15.702255207610008, 
            "yaw_t013_deg": -19.815213890373023, 
            "yaw_t014_deg": 23.418886185782853, 
            "yaw_t015_deg": -1.0822319543582601, 
            "yaw_t016_deg": 3.496271047466287
        },
        "iteration": 5    
    },
    {
        "type": "good",
        "kind": "dist",
        "intervention": {
                         "yaw_t000_deg": 
                         {"dist": "uniform", "parameters": [-11, -10]},
                         "yaw_t001_deg": 
                         {"dist": "uniform", "parameters": [10, 12]},
                         "yaw_t002_deg": 
                         {"dist": "uniform", "parameters": [12, 14]}
                        },
        "iteration": 5
    },

    {
        "type": "good",
        "kind": "point",
        "intervention": {
            "yaw_t000_deg": -10.83748690531058, 
            "yaw_t001_deg": 11.472652376984005, 
            "yaw_t002_deg": 13.89467307805834
        },
        "iteration": 15
    },
    {
        "type": "bad",
        "kind": "point",
        "intervention": {
            "yaw_t000_deg": -28.275959662325022, 
            "yaw_t001_deg": 1.6292151641649902, 
            "yaw_t002_deg": 27.53570488559511, 
            "yaw_t003_deg": -25.844124879037324, 
            "yaw_t004_deg": -18.74817789106919, 
            "yaw_t005_deg": -29.885699970885156, 
            "yaw_t006_deg": 9.70821606188883, 
            "yaw_t007_deg": -27.2264816726402, 
            "yaw_t008_deg": -21.44977989395875, 
            "yaw_t009_deg": 23.53069799422112, 
            "yaw_t010_deg": -8.328123875812214, 
            "yaw_t011_deg": -29.022544007776677, 
            "yaw_t012_deg": -15.702255207610008, 
            "yaw_t013_deg": -19.815213890373023, 
            "yaw_t014_deg": 23.418886185782853, 
            "yaw_t015_deg": -1.0822319543582601, 
            "yaw_t016_deg": 3.496271047466287
        },
        "iteration": 25    
    },
    {
        "type": "good",
        "kind": "point",
        "intervention": {
            "yaw_t000_deg": -11.23748690531058, 
            "yaw_t001_deg": 12.072652376984005, 
            "yaw_t002_deg": 14.09467307805834
        },
        "iteration": 35
    }
\end{verbatim}

\subsection{Further Results \& Ablations}
\label{app:further_results}

In this section, we describe further results we obtained during our experimental evaluation. 

\paragraph{Regret Analysis}
To examine the performance of DisCoMBO, we provide plots showing the normalized regret over time (either cumulative wall-clock time if provided by benchmarks or iterations). 

Fig~\ref{fig:full_regret_standard} compares DisCoMBO (\solidline{DisCoMBO}) with all our baselines in the default SMBO setting, that is, without any external priors to guide optimization. We find that DisCoMBO ranks first on \textbf{5/9} tasks, is competitive to top-tier baselines on \textbf{3/9} tasks, and only falls slightly short on one task (Floris). Overall, we find that DisCoMBO effectively identifies high-performing solutions. Notably, we achieve our goal of improving upon purely generative PC-based SMBO (IBO-HPC; \solidline{IBO}) on \textbf{9/9} tasks.

Fig.~\ref{fig:full_regret_early} compares the regret over time between DisCoMBO (\dashdottedline{DisCoMBO}) and our baselines $\pi$Bo, IBO-HPC, and DynaBO when a beneficial prior is provided after 5 iterations. We find that DisCoMBO outperforms or matches all baselines no \textbf{7/9} tasks, demonstrating that it successfully retains or even improves upon the efficacy of incorporating priors of our baselines, especially IB-HPC.

Fig.~\ref{fig:full_regret_recover} demonstrates that DisCoMBO (\dashedline{DisCoMBO}) successfully recovers from misleading priors on all tasks. In \textbf{4/6} tasks, we recover faster than our baselines, demonstrating robustness. We evaluate recovery on JAHS, NAS-Bench-201, Floris, and PV as a representative subset of tasks.

Fig.~\ref{fig:full_regret_many} demonstrates that DisCoMBO (\dashedline{DisCoMBO}) effectively handles sequences of priors, even if they contain contradicting information. We first provide a misleading prior, followed by a beneficial prior, followed by a misleading prior, and finally a beneficial prior again. It can be observed that DisCoMBO leverages beneficial priors effectively while not suffering from misleading ones. Priors are provided at iterations 5, 15, 25, and 35. Note that $\pi$BO is not considered here as it assumes that priors are available before optimization.

\paragraph{Ranking Analysis}
Another key metric when analyzing the performance of an SMBO algorithm is to consider the average rank over time. This shows at which phase algorithms excel and in which phases they flat out.

Fig.~\ref{fig:full_ranking_standard} shows the average rank of each algorithm over time across 50 seeds when no prior is provided. We find that in terms of final rank, DisCoMBO (\solidline{DisCoMBO}) wins on \textbf{6/9} tasks while being competitive to top-tier baselines on the remaining tasks except Floris (see App.~\ref{app:floris_discussion} for more information). On most tasks, DisCoMBO seems to find good solutions quickly as it ranks high early on all AutoML tasks. For PV (crossed-barrel), DisCoMBO seems to benefit from exploration, indicating that our exploration samples diverse candidates, helping DisCoMBO to exploit this knowledge effectively in later iterations. This confirms the effectiveness of DisCoMBO across our diverse task suite.

Fig.~\ref{fig:full_ranking_early} shows the average rank of each algorithm over time across 50 seeds when a beneficial prior is provided after 5 iterations. We find that in terms of final rank, DisCoMBO (\solidline{DisCoMBO}) wins on \textbf{7/9} tasks. On all tasks except Floris, DisCoMBO seems to effectively leverage provided priors. This confirms that DisCoMBO retains, or even improves, the efficacy of IBO-HPC in incorporating provided priors across our diverse task suite.

Fig.~\ref{fig:full_ranking_recover} shows that DisCoMBO (\solidline{DisCoMBO}) achieves the top final rank on \textbf{4/6} tasks. Across all benchmarks except Floris and PV (crossed-barrel), the framework successfully recovers from a misleading prior given after 5 iterations, demonstrating that DisCoMBO retains or enhances the targeted knowledge integration capabilities of purely generative approaches like IBO-HPC. Similar results can be obtained in Fig.~\ref{fig:full_ranking_many} when a sequence of priors is provided.

\paragraph{Relative Improvement}
Since one of the key goals of DisCoMBO is to improve upon existing PC-based SMBO to close the gap between PC-based and traditional SMBO, we show the relative improvement of DisCoMBO and our baselines against IBO-HPC. Fig.~\ref{fig:full_diff2pc_standard}, \ref{fig:full_diff2pc_early}, \ref{fig:full_diff2pc_recover}, and \ref{fig:full_diff2pc_many} show that DisCoMBO achieves significant improvement across all tasks when no prior is provided. If priors are provided, regardless of whether they are beneficial, misleading, or both, DisCoMBO improves upon IBO-HPC and most baselines in the majority of tasks. This demonstrates that DisCo effectively serves as an uncertainty measure, effectively guiding our exploration and exploitation, and it shows that DisCo handles priors well in various settings.

\paragraph{Critical Difference Analysis}
Our critical difference diagrams (see Fig.~\ref{fig:cd_standard_app}, \ref{fig:cd_early_app}, \ref{fig:cd_recover_app}, and \ref{fig:cd_many_app} reveal that DisCoMBO ranks first throughout all settings, namely: Standard SMBO (without priors), SMBO with beneficial priors, SMBO with misleading priors, and SMBO with multiple, contradicting priors. Especially noticeable is that DisCoMBO is statistically indistinguishable from established state-of-the-art BO baselines like HEBO. This demonstrates the efficacy of our uncertainty-guided exploration-exploitation strategy as well as the robustness and flexibility of DisCoMBO in diverse settings. We used the Wilcoxon signed-rank test at $p=0.05$ with a multiple-testing correction.

\paragraph{Ablation and Sensitivity Analysis of Hyperparameters}
Since DisCoMBO has a few hyperparameters as well, we provide an ablation on the three most critical ones ($\beta$, $n$, and $c$). $\beta$ is a tuple and controls the level of exploration and exploitation. High $\beta$ values mean high exploration, while $\beta \rightarrow 0$ means full exploitation. $c$ controls the cycle length of each exploration and exploitation phase. The parameter $n$ controls the number of candidate samples generated, which are then rated by DisCo and from which one is selected for evaluation.

Fig.~\ref{fig:ablation_jahs} shows 9 different settings of these parameters. It can be seen that DisCoMBO is robust to hyperparameter choices, demonstrating that it achieves high performance even with non-optimal hyperparameter settings.

\paragraph{Misleading Prior Detection}
As illustrated by the ROC AUC curves in Fig.~\ref{fig:mis_prior_detection}, we evaluate the isolating performance of DisCoMBO’s (\solidline{DisCoMBO}) prior detection layer against DynaBO's (\dashedline{dynabo}) heuristic safeguard after 5, 10, and 15 optimization iterations. Empirically, DisCoMBO consistently outperforms DynaBO across all \textbf{5/5} benchmark tasks, demonstrating a robust capability to reliably discriminate between beneficial and misleading external guidance in \textbf{4/5} cases. The singular exception to this strong discriminative performance occurs on the Floris task. Because the underlying PC density estimator occasionally struggles to model the unique and highly specialized topological properties of the Floris objective landscape, the resulting conformance scores become less distinctive, consequently degrading detection accuracy in this specific domain (see App.~\ref{app:floris_discussion} for an extended discussion).

\paragraph{DisCoMBO on JAHS}
DisCoMBO exhibits exceptionally strong performance on the JAHS benchmark (see Fig.~\ref{fig:full_regret_standard}). We attribute this efficacy to the inductive biases of Mixed Sum-Product Networks (MSPNs), the specific Probabilistic Circuit parameterization utilized in our evaluation. MSPNs are structurally engineered to handle heterogeneous domains by natively integrating continuous and discrete random variables without requiring artificial encodings or continuous relaxations.

This architectural alignment allows DisCoMBO to accurately capture the complex joint distribution and search space semantics inherent to mixed-variable domains. Consequently, the learned surrogate maintains high fidelity across the structural and numerical boundaries of the JAHS search space, facilitating a highly efficient traversal of the optimization landscape.

\paragraph{DisCoMBO on Floris}
\label{app:floris_discussion}
On the Floris benchmark, DisCoMBO yields significant performance gains over the purely generative IBO-HPC baseline but does not fully close the gap to state-of-the-art discriminative baselines (see Fig.~\ref{fig:full_regret_standard}). This behavior is fundamentally rooted in the architectural contrast between joint density estimators and specialized regression models, as well as in the properties of the search space. Floris' search space is a fully continuous search space governed by smooth target topologies. While models explicitly designed for regression naturally exploit surface continuity, Probabilistic Circuits natively function as discrete latent variable models. Consequently, standard PC parameterizations tend to partition continuous manifolds into coarse, localized densities, introducing a discretization bias that risks omitting high-performing regions during generative candidate sampling.

Despite this architectural constraint, DisCoMBO consistently outpaces IBO-HPC. We argue that this resilience stems directly from the formal uncertainty semantics introduced by the DisCo score. Because DisCo behaves analogously to symmetric kernels on continuous domains, it effectively injects a smoothness bias into the framework's uncertainty estimates. Even when the underlying PC density representation lacks continuous regression accuracy, the kernel-like properties of DisCo provide a well-calibrated exploration-exploitation signal. This robust mechanism mitigates the coarse discretization of the surrogate, allowing DisCoMBO to navigate smooth continuous topologies far more effectively than heuristics driven by pure likelihood sampling.

\paragraph{DynaBO on JAHS}
During our experiments, we observed that DynaBO~\citep{fehring2026dynabo} occasionally struggled to converge on specific landscapes, most notably on JAHS (CIFAR-10) (see Fig.~\ref{fig:full_regret_early}, \ref{fig:full_regret_recover}, and \ref{fig:full_regret_many}). Although we expected DynaBO's performance to closely parallel that of $\pi$BO, this discrepancy prompted us to consult directly with the authors of the original framework. Despite a collaborative review of the implementation, the precise root cause underlying this localized behavior could not be definitively isolated or resolved. Given that DynaBO performs as expected across the other evaluated benchmarks and given that we used the implementation of \citet{fehring2026dynabo}, we suspect this anomaly stems either from an undiscovered misconfiguration unique to the JAHS environment or from a subtle interplay between DynaBO’s safeguard mechanisms and the specific topology of the JAHS search space.

\paragraph{DisCoMBO with broader priors.} To provide a fair and comparable analysis, the prior definitions in our main results reused the prior definitions of \citet{seng2025ihpo}. Since these priors are quite narrow, we also provide a speed-up analysis for broader priors. It can be seen that also under broader priors, DisCoMBO clearly benefits from the provided priors (see Tab. \ref{tab:speedup_results}). The broader priors where defined by doubling the variance of the prior for continuous variables compared to the priors in App. \ref{app:interaction_definition}. For discrete parameters, the configurations were upweighted by a factor of $3$ instead of $1e^4$.

\begin{figure}[t]
    \centering
    \includegraphics[width=1.0\linewidth]{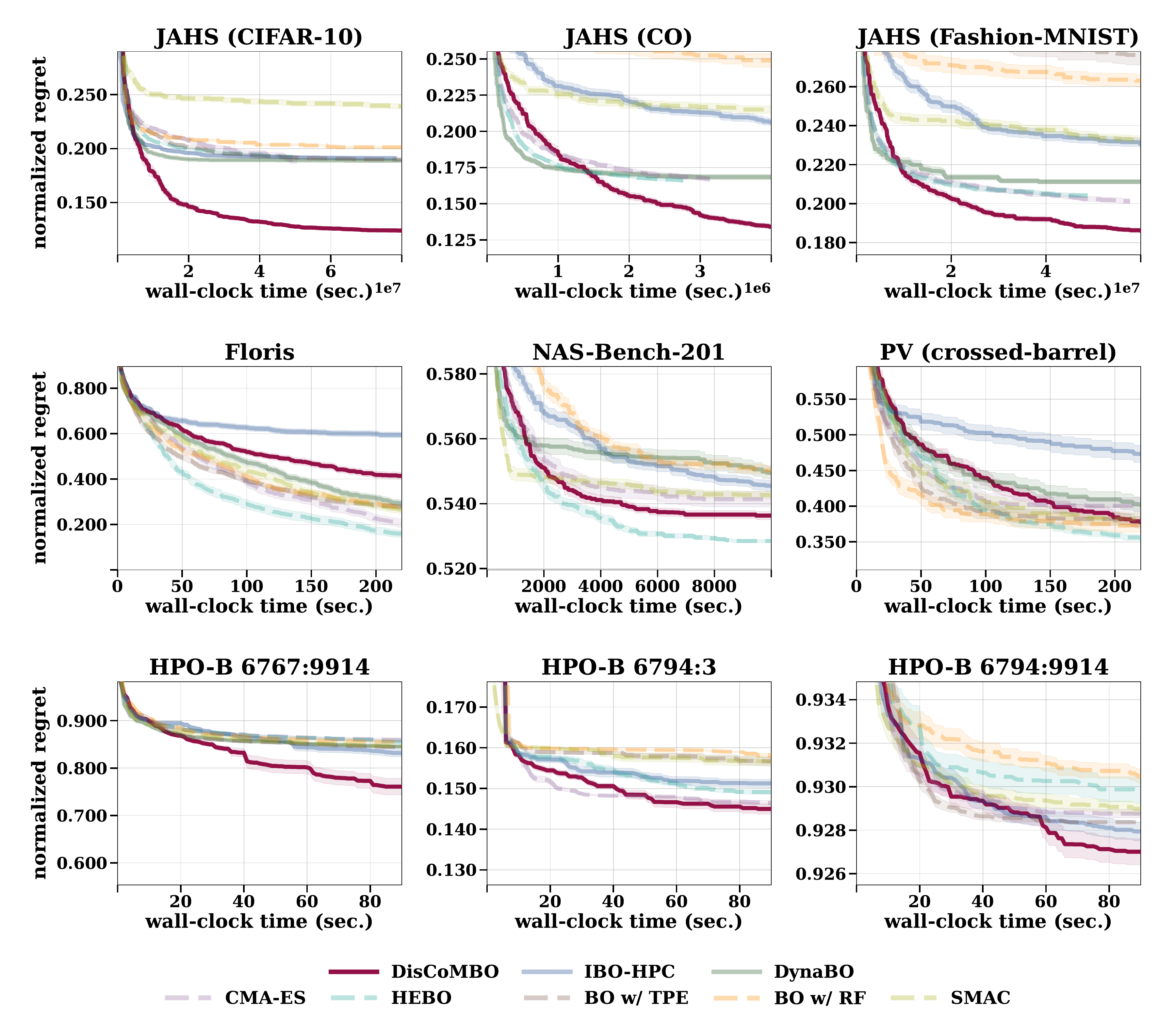}
    \caption{\textbf{DisCoMBO achieves top-tier performance without external priors.} Without guidance with external priors, DisCoMBO (\solidline{DisCoMBO}) achieves top-tier performance on \textbf{8/9} tasks. On Floris, we still significantly improve upon IBO-HPC (\solidline{IBO}), however, DisCoMBO cannot fully catch up with top-tier baselines (see App.~\ref{app:further_results}) for further discussion.}
    \label{fig:full_regret_standard}
\end{figure}
\begin{figure}
    \centering
    \includegraphics[width=1.0\linewidth]{images/discombo/critical_difference_redo/standard.pdf}
    \caption{\textbf{DisCoMBO is competitive to top-tier SMBO methods without priors.} DisCoMBO ranks \textbf{first} across 9 tasks when \textbf{no prior} is given, being statistically indistinguishable from strong baselines like HEBO. DisCoMBO significantly outperforms IBO-HPC, its PC-based counterpart, lacking a principled exploitation-exploration trade-off. Assessment is based on Wilcoxon signed-rank test ($p=0.05$), corrected for multiple tests.}
    \label{fig:cd_standard_app}
\end{figure}
\begin{figure}[t]
    \centering
    \includegraphics[width=1.0\linewidth]{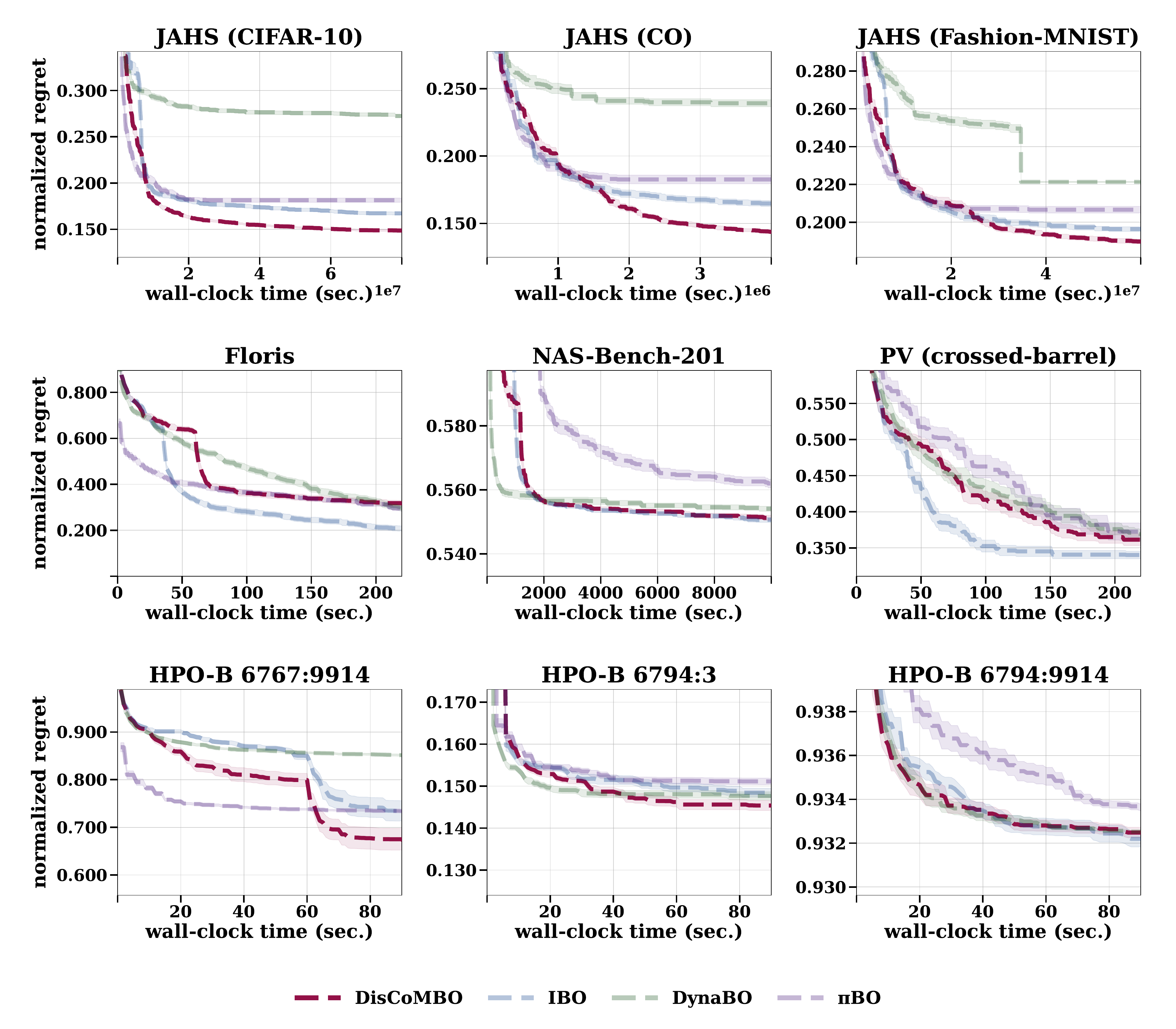}
    \caption{\textbf{DisCoMBO retains knowledge integration efficacy of generative PC-based SMBO.} When provided with external priors early in optimization, DisCoMBO (\dashedline{DisCoMBO}) retains the efficacy of IBO-HPC (\dashdottedline{IBO}) in leveraging that provided knowledge to converge faster and to better solutions. We outperform baselines on \textbf{7/9} tasks.}
    \label{fig:full_regret_early}
\end{figure}
\begin{figure}
    \centering
    \includegraphics[width=1.0\linewidth]{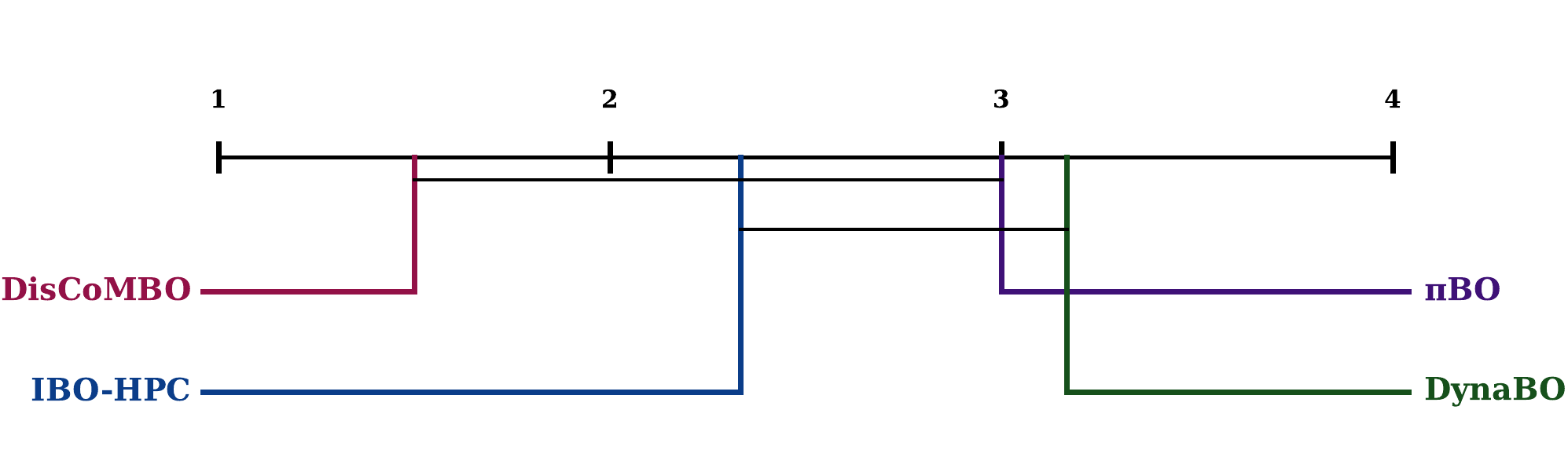}
    \caption{\textbf{DisCoMBO effectively incorporates priors into optimization.} DisCoMBO ranks \textbf{first} across 9 tasks when a prior is given, being statistically indistinguishable to IBO-HPC and $\pi$BO. This also demonstrates that DisCoMBO retains the efficacy of IBO-HPC in integrating external priors into the optimization process. Assessment is based on Wilcoxon signed-rank test ($p=0.05$) and corrected for multiple tests.}
    \label{fig:cd_early_app}
\end{figure}
\begin{figure}
    \centering
    \includegraphics[width=1.0\linewidth]{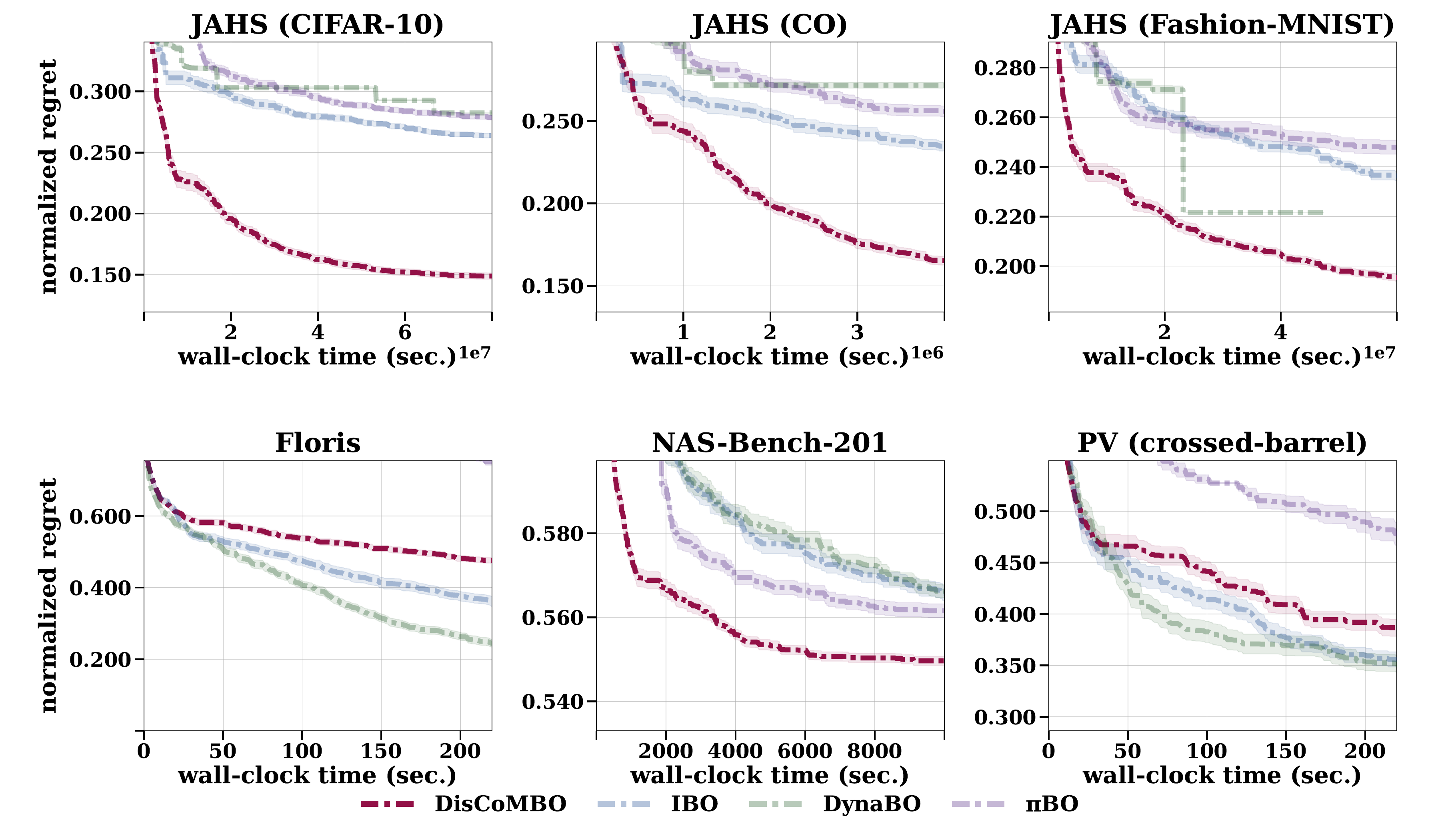}
    \caption{\textbf{DisCoMBO effectively recovers from misleading priors.} DisCoMBO (\dashedline{DisCoMBO}) successfully recovers from misleading priors on all tasks. In \textbf{4/6} tasks, we recover faster than our baselines, demonstrating robustness. We evaluate recovery on JAHS, NAS-Bench-201, Floris, and PV as a representative subset of tasks. On Floris, $\pi$BO was not able to recover from the misleading prior.}
    \label{fig:full_regret_recover}
\end{figure}
\begin{figure}
    \centering
    \includegraphics[width=1.0\linewidth]{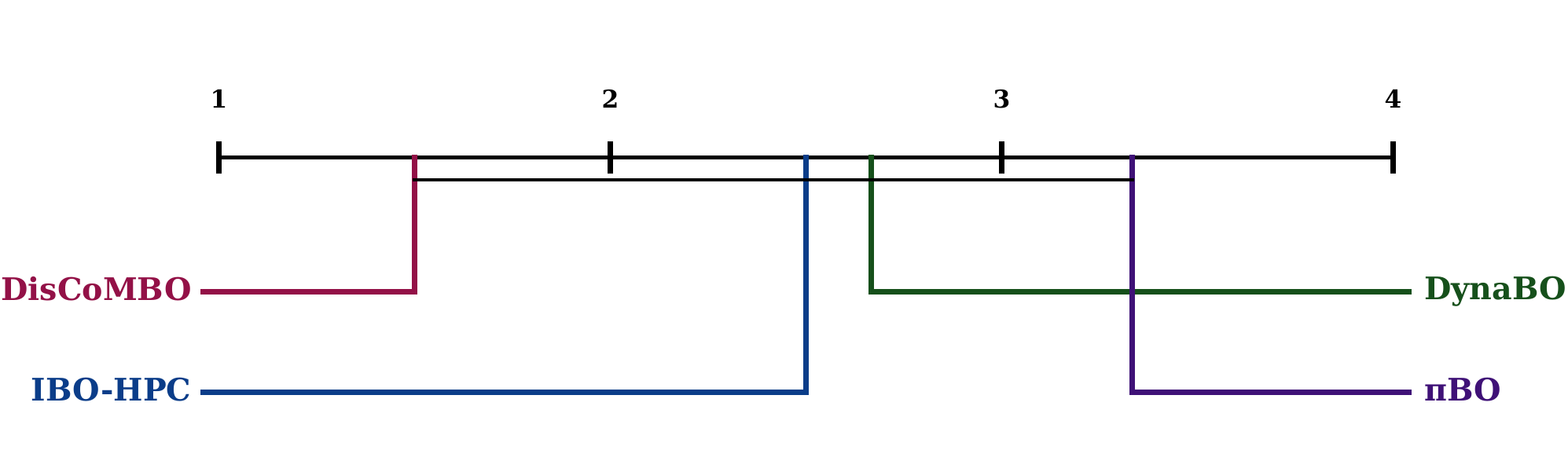}
    \caption{\textbf{DisCoMBO effectively recovers from misleading priors.} DisCoMBO ranks \textbf{first} across 9 tasks when \textbf{a misleading} prior is given. While DisCoMBO is statistically indistinguishable to IBO-HPC and DynaBO, the results show that DisCoMBO retains or even improves upon the robustness of all competing methods. Assessment is based on Wilcoxon signed-rank test ($p=0.05$) and corrected for multiple tests.}
    \label{fig:cd_recover_app}
\end{figure}
\begin{figure}
    \centering
    \includegraphics[width=1.0\linewidth]{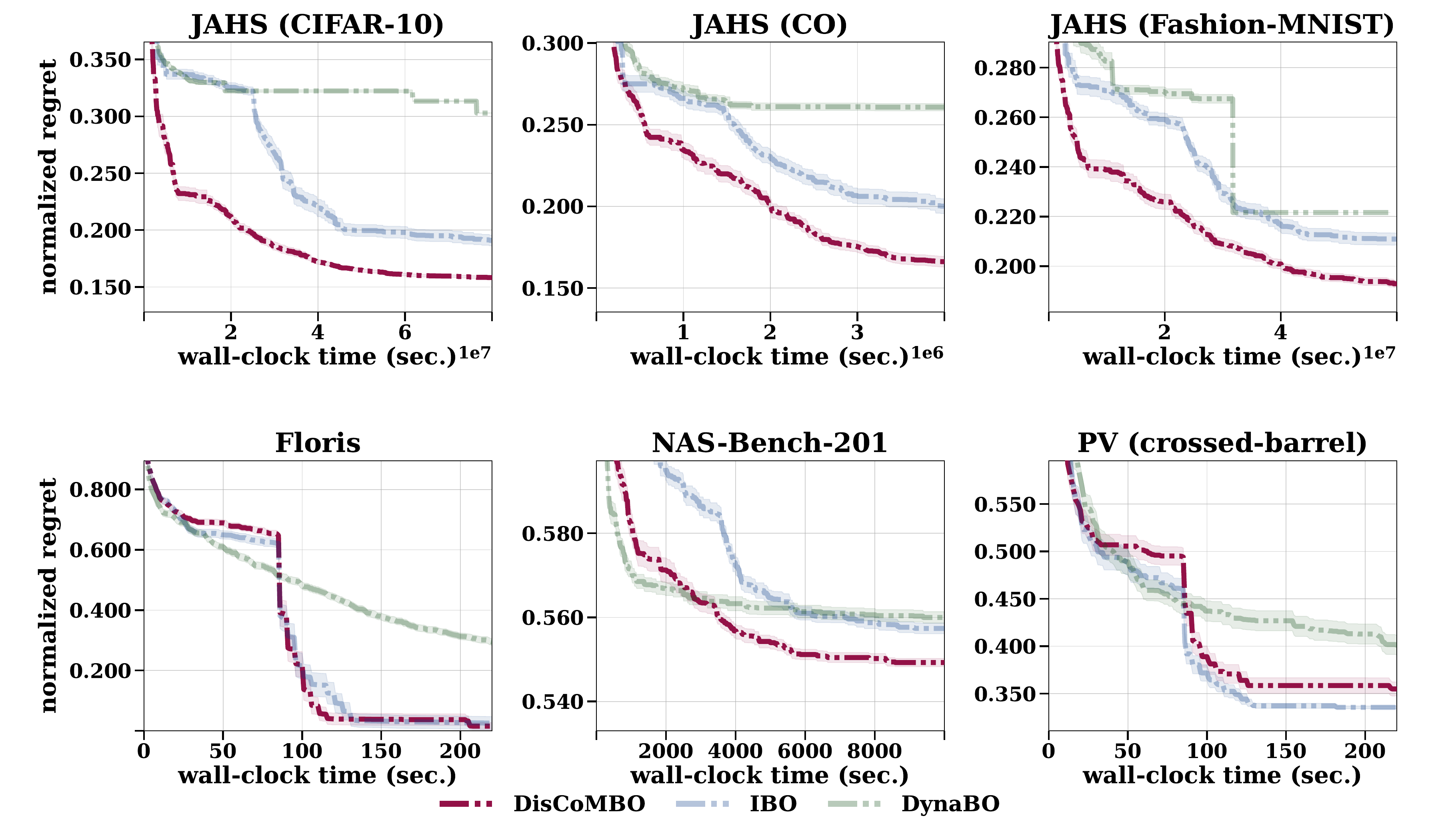}
    \caption{\textbf{DisCoMBO effectively handles sequences of priors.} We demonstrate that DisCoMBO (\dashedline{DisCoMBO}) effectively handles sequences of priors, even if they contain contradicting information. We first provide a misleading prior, followed by a beneficial prior, followed by a misleading prior, and finally a beneficial prior again. It can be observed that DisCoMBO leverages beneficial priors effectively while not suffering from misleading ones. $\pi$BO does not allow for multiple priors.}
    \label{fig:full_regret_many}
\end{figure}
\begin{figure}
    \centering
    \includegraphics[width=1.0\linewidth]{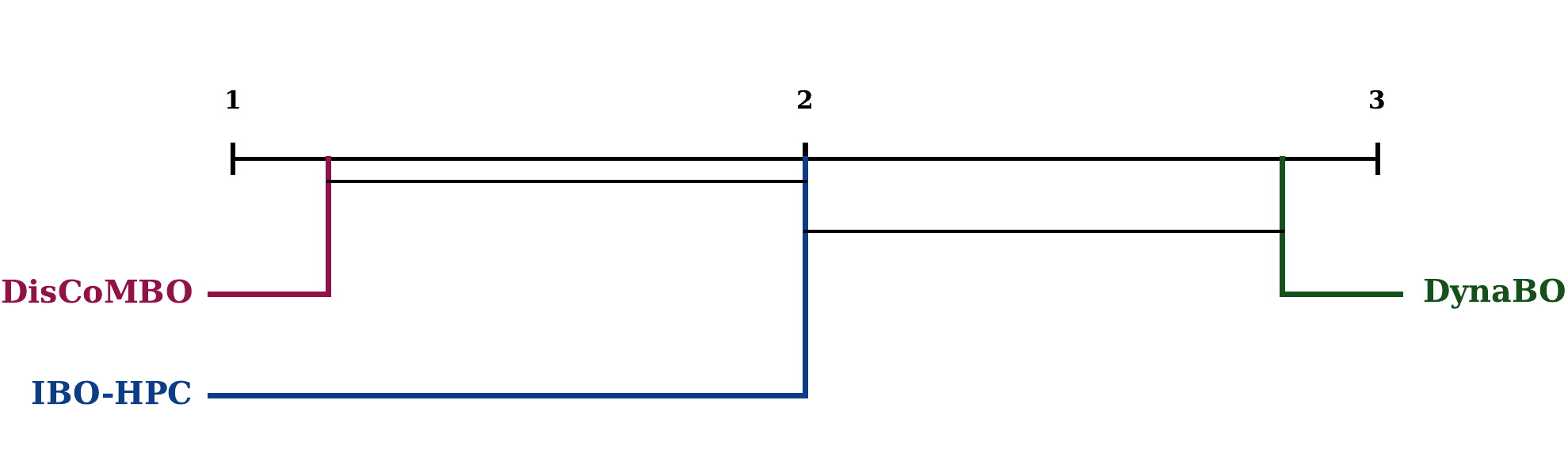}
    \caption{\textbf{DisCoMBO effectively handles sequences of priors.} DisCoMBO ranks \textbf{first} across 9 tasks when \textbf{a sequence} of priors is given. This demonstrates that DisCoMBO successfully handles multiple, even contradicting priors. Assessment is based on Wilcoxon signed-rank tests ($p=0.05$) and corrected for multiple tests.}
    \label{fig:cd_many_app}
\end{figure}

\begin{figure}
    \centering
    \includegraphics[width=1.0\linewidth]{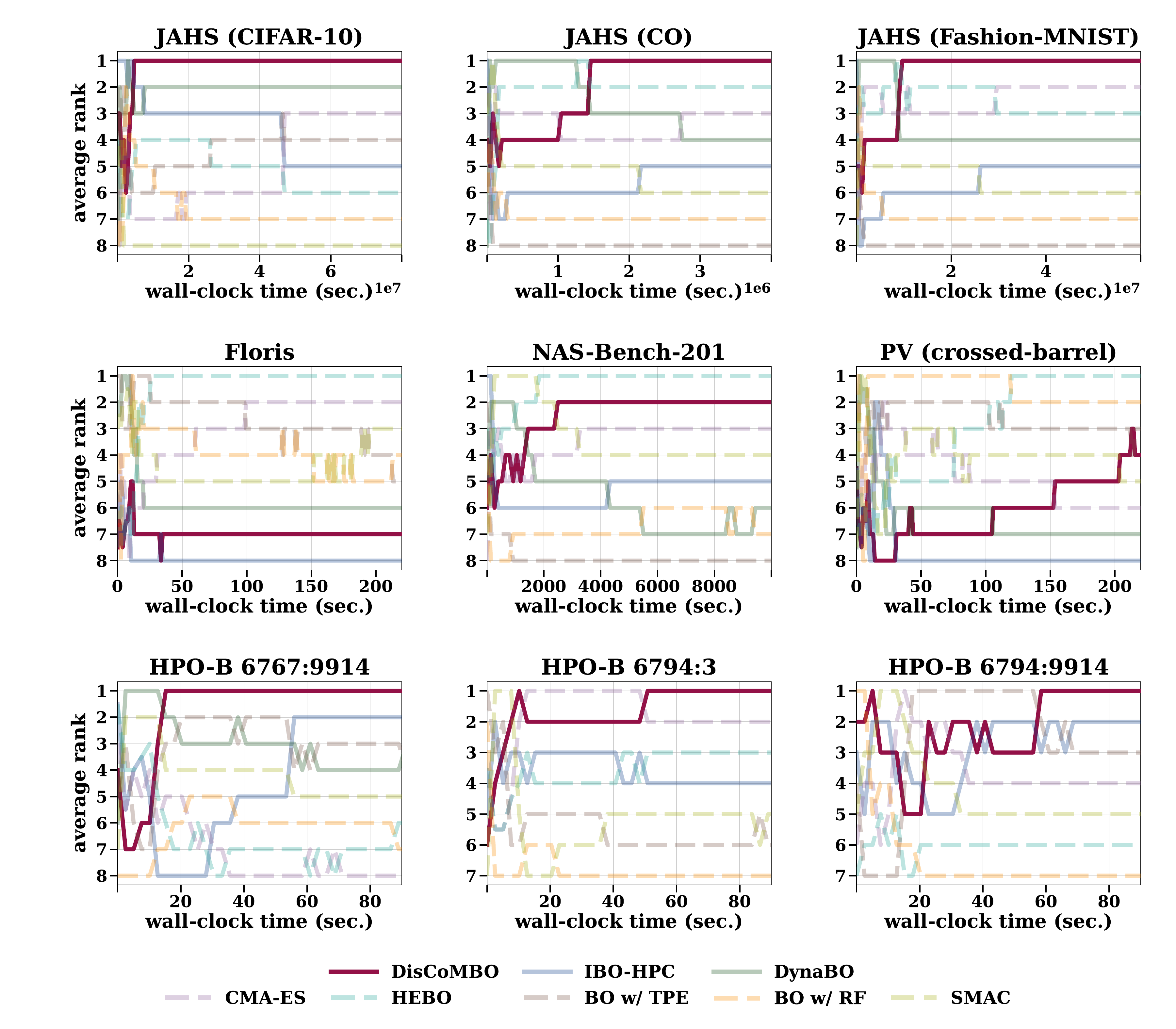}
    \caption{\textbf{Average algorithmic rank over time across 50 independent seeds.} DisCoMBO (\solidline{DisCoMBO}) achieves the top final rank on 6/9 tasks while remaining highly competitive on the remaining landscapes, with the exception of Floris (see App.~\ref{app:floris_discussion}). On the AutoML suite, DisCoMBO quickly identifies high-quality solutions early in the optimization trajectory. On PV (crossed-barrel), it benefits from structured exploration to effectively exploit the search space in later iterations.}
    \label{fig:full_ranking_standard}
\end{figure}
\begin{figure}
    \centering
    \includegraphics[width=1.0\linewidth]{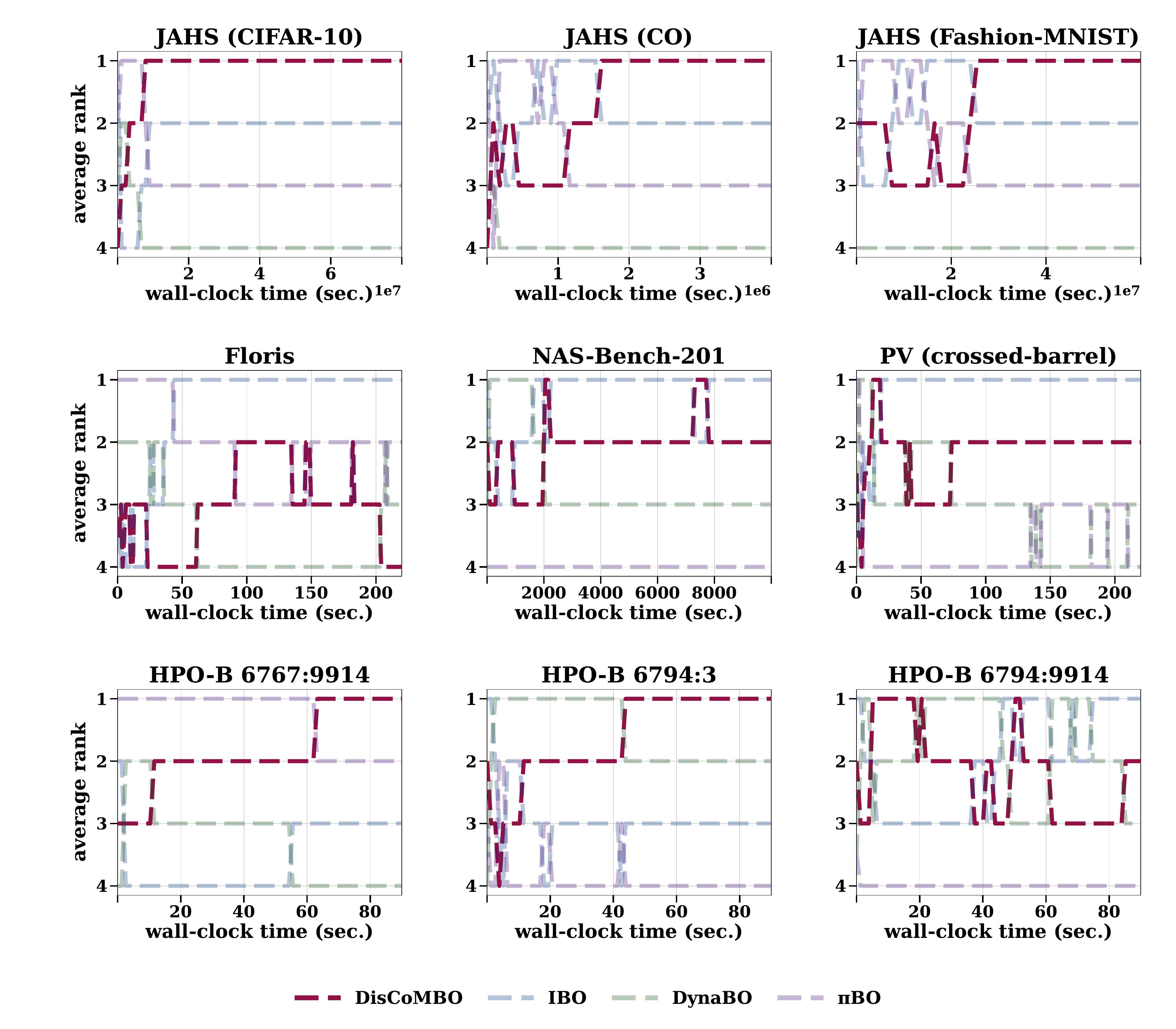}
    \caption{\textbf{Average algorithmic rank over time given a beneficial prior at iteration 5 (averaged across 50 seeds).} DisCoMBO (\dashedline{DisCoMBO}) achieves the top final rank on \textbf{7/9} tasks. Across all benchmarks except FLORIS, the framework successfully capitalizes on the injected prior, demonstrating that DisCoMBO retains or enhances the targeted knowledge integration capabilities of purely generative approaches like IBO-HPC.}
    \label{fig:full_ranking_early}
\end{figure}
\begin{figure}
    \centering
    \includegraphics[width=1.0\linewidth]{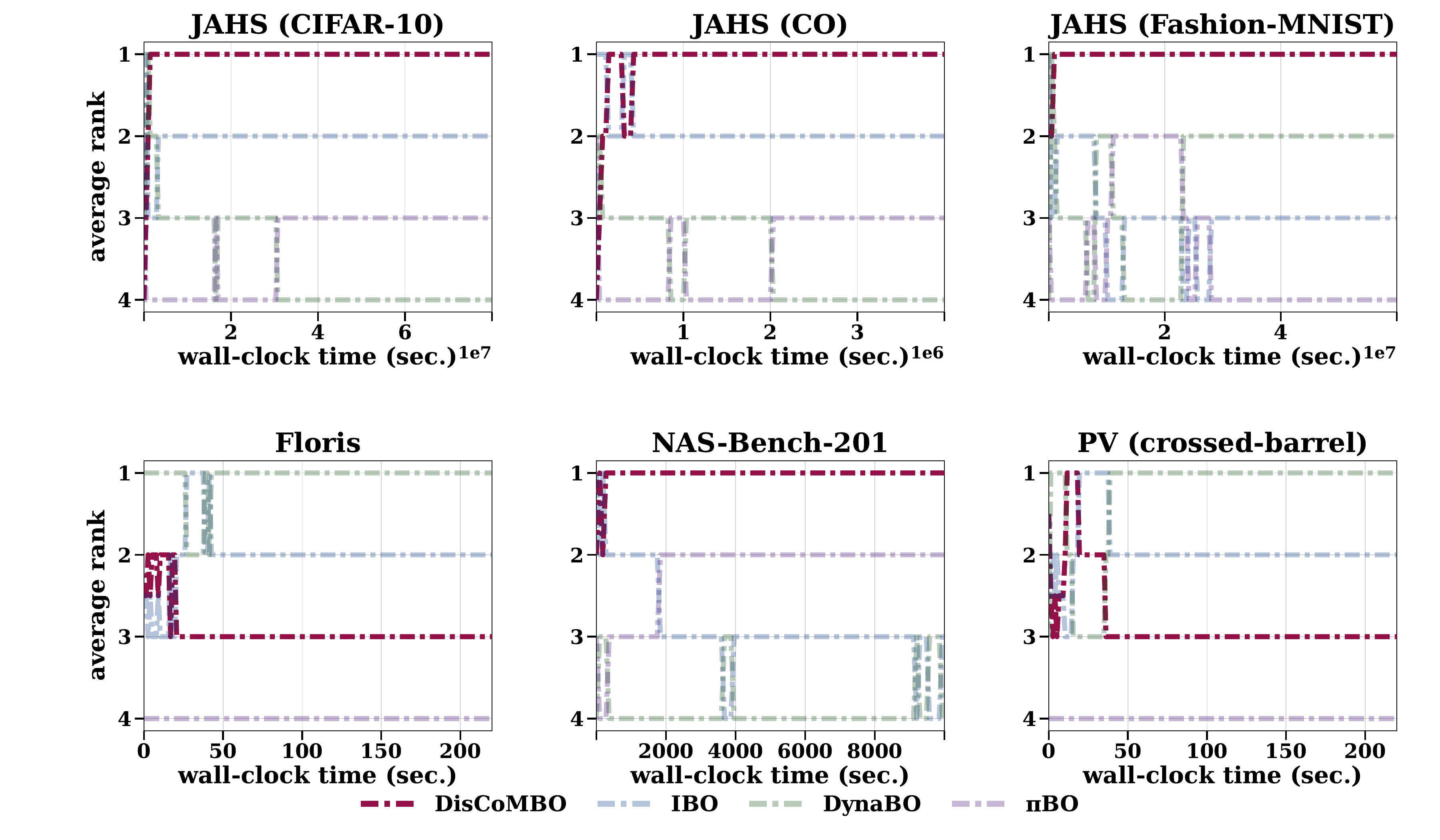}
    \caption{\textbf{Average algorithmic rank over time given a misleading prior at iteration 5 (averaged across 50 seeds).} DisCoMBO (\dashdottedline{DisCoMBO}) achieves the top final rank on \textbf{4/6} tasks. Across all benchmarks except Floris and PV (crossed-barrel), the framework successfully recovers from a misleading prior given after 5 iterations, demonstrating that DisCoMBO retains or enhances the targeted knowledge integration capabilities of purely generative approaches like IBO-HPC.}
    \label{fig:full_ranking_recover}
\end{figure}
\begin{figure}
    \centering
    \includegraphics[width=1.0\linewidth]{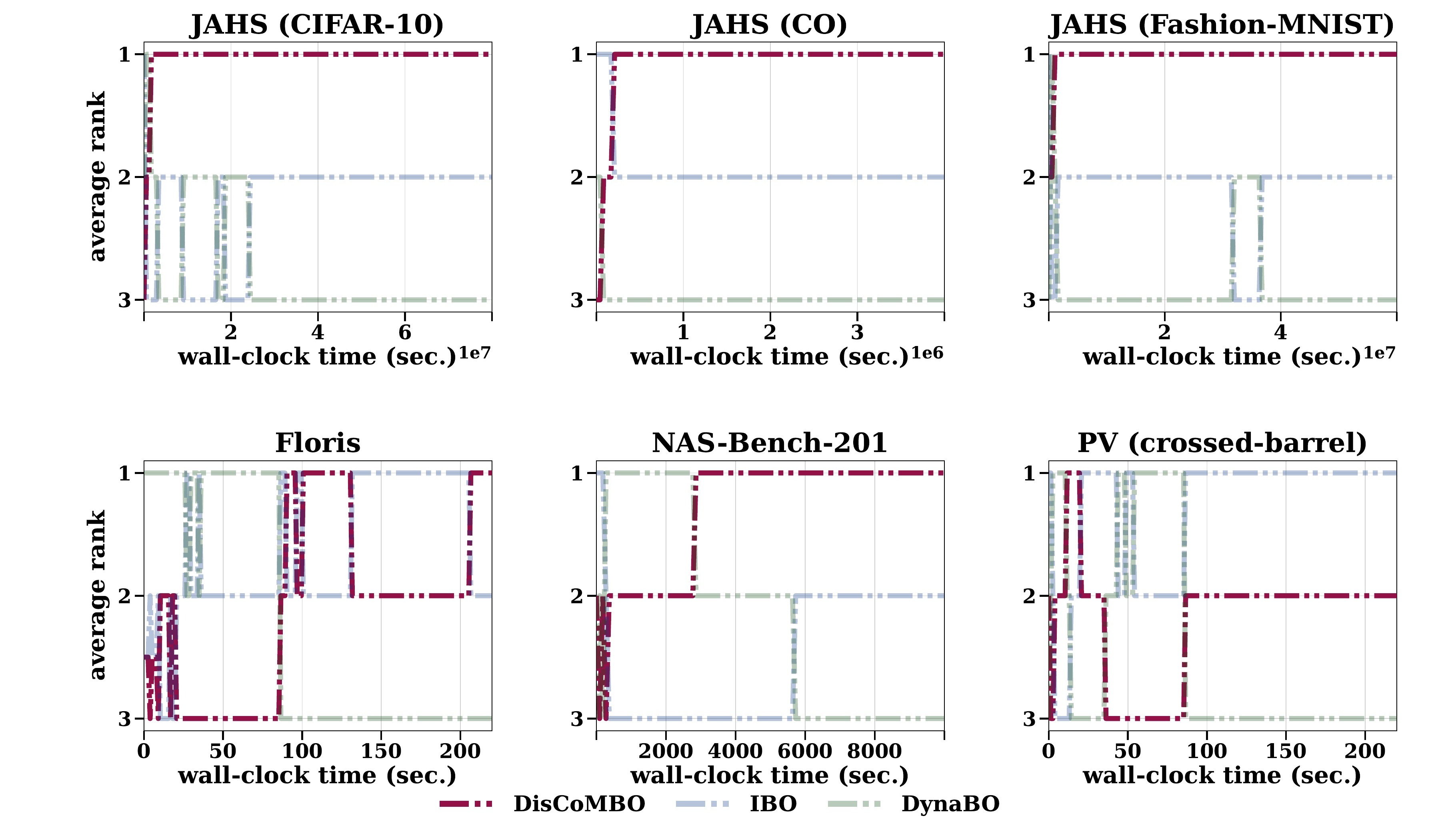}
    \caption{\textbf{Average algorithmic rank over time given a sequence of priors (averaged across 50 seeds).} DisCoMBO (\dashdottedline{DisCoMBO}) achieves the top final rank on \textbf{5/6} tasks. Across all benchmarks except PV (crossed-barrel), the framework successfully recovers from a misleading prior given after 5 iterations, demonstrating that DisCoMBO retains or enhances the targeted knowledge integration capabilities of purely generative approaches like IBO-HPC.}
    \label{fig:full_ranking_many}
\end{figure}

\begin{figure}
    \centering
    \includegraphics[width=1.0\linewidth]{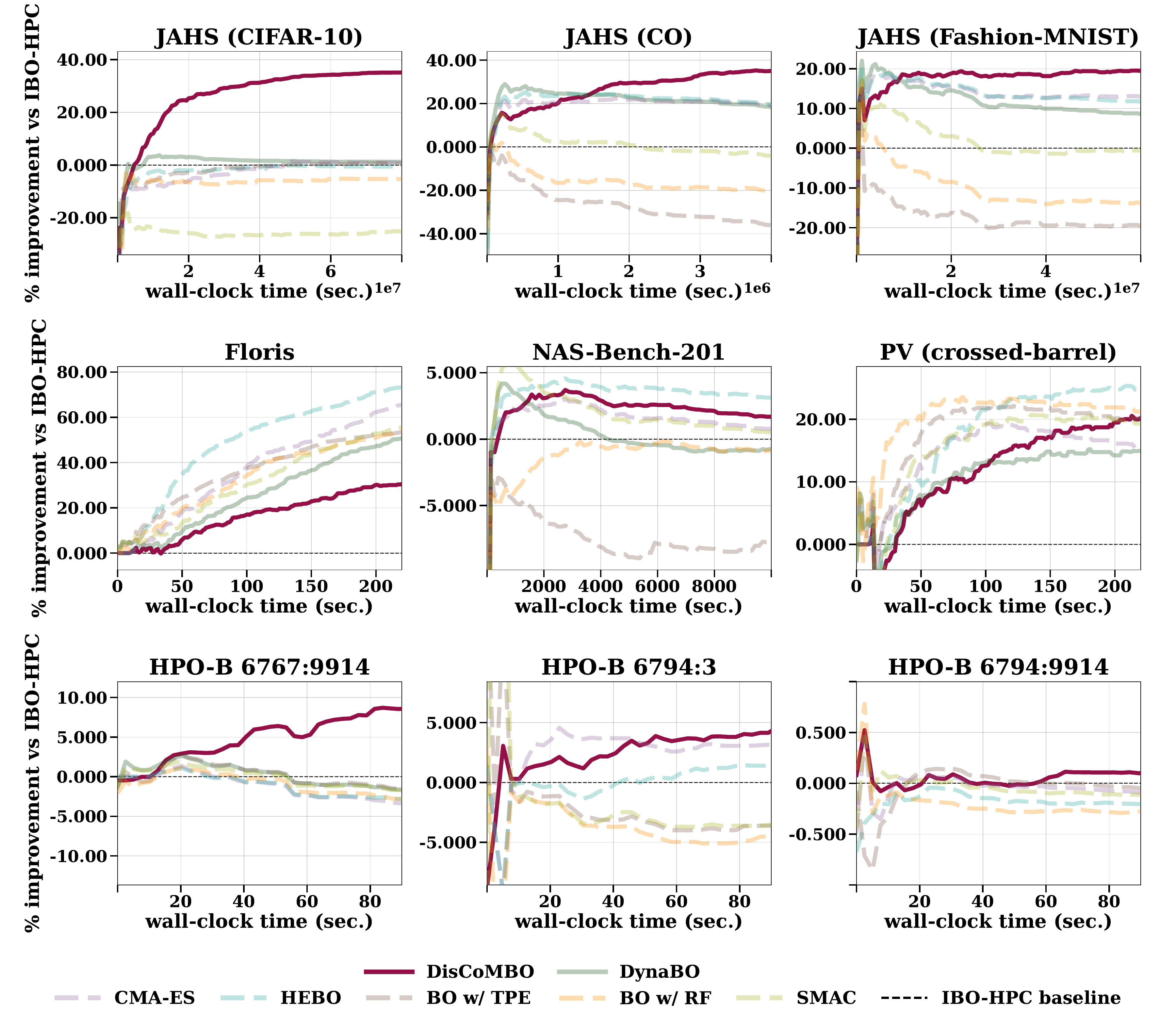}
    \caption{\textbf{Relative improvement of DisCoMBO against IBO-HPC over time without priors.} We observe that DisCoMBO (\solidline{DisCoMBO}) consistently achieves a significant improvement over its fully generative counterpart IBO-HPC on \textbf{8/9} tasks, demonstrating the effectiveness of our exploration-exploitation strategy driven by DisCo.}
    \label{fig:full_diff2pc_standard}
\end{figure}
\begin{figure}
    \centering
    \includegraphics[width=1.0\linewidth]{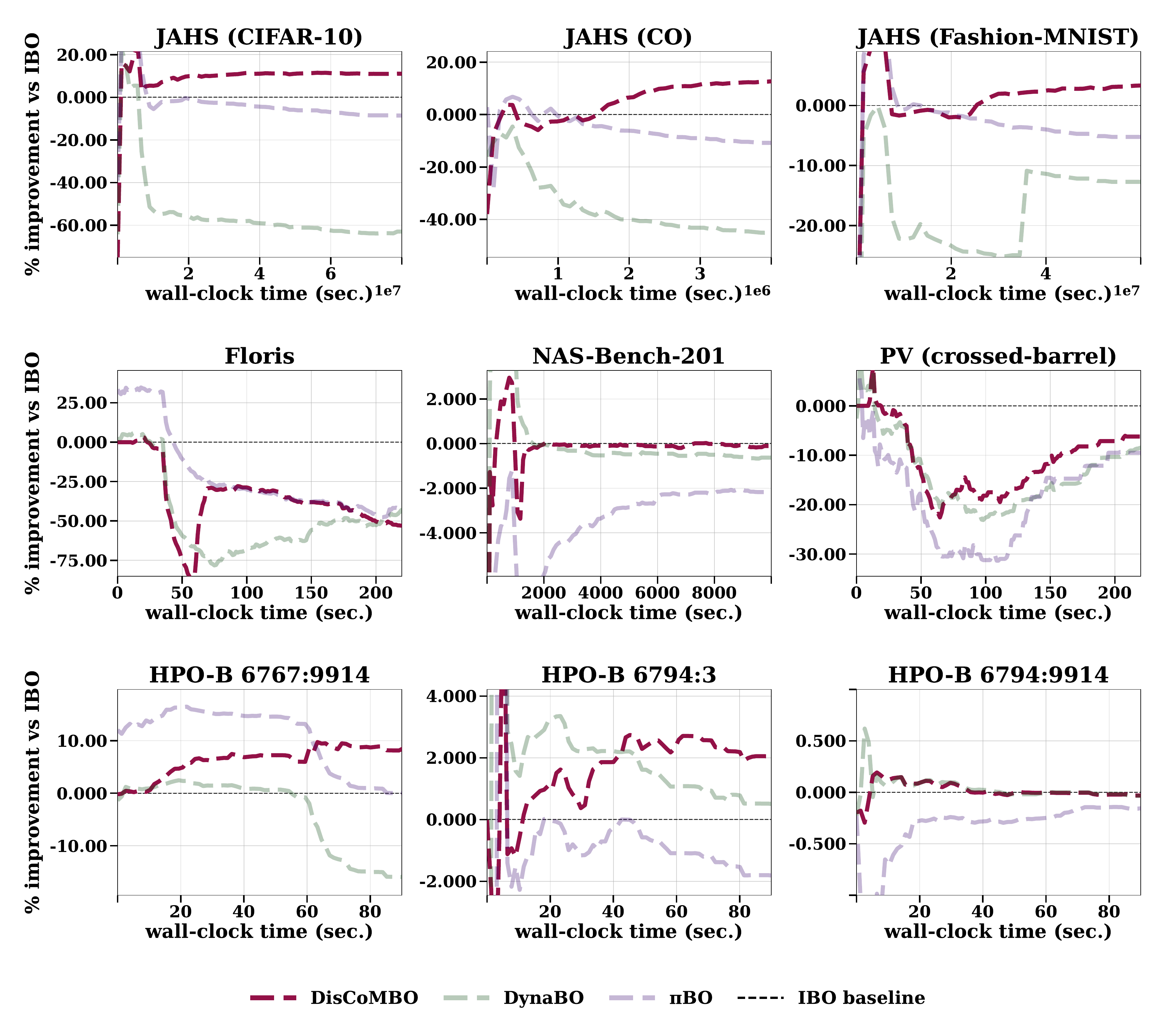}
    \caption{\textbf{Relative improvement of DisCoMBO against IBO-HPC over time with beneficial prior at 5 iterations.} We observe that DisCoMBO (\dashedline{DisCoMBO}) consistently achieves a significant improvement over its fully generative counterpart IBO-HPC or matches its performance on \textbf{7/9} tasks when beneficial priors are provided early during optimization, demonstrating that DisCoMBO effectively incorporates priors.}
    \label{fig:full_diff2pc_early}
\end{figure}
\begin{figure}
    \centering
    \includegraphics[width=1.0\linewidth]{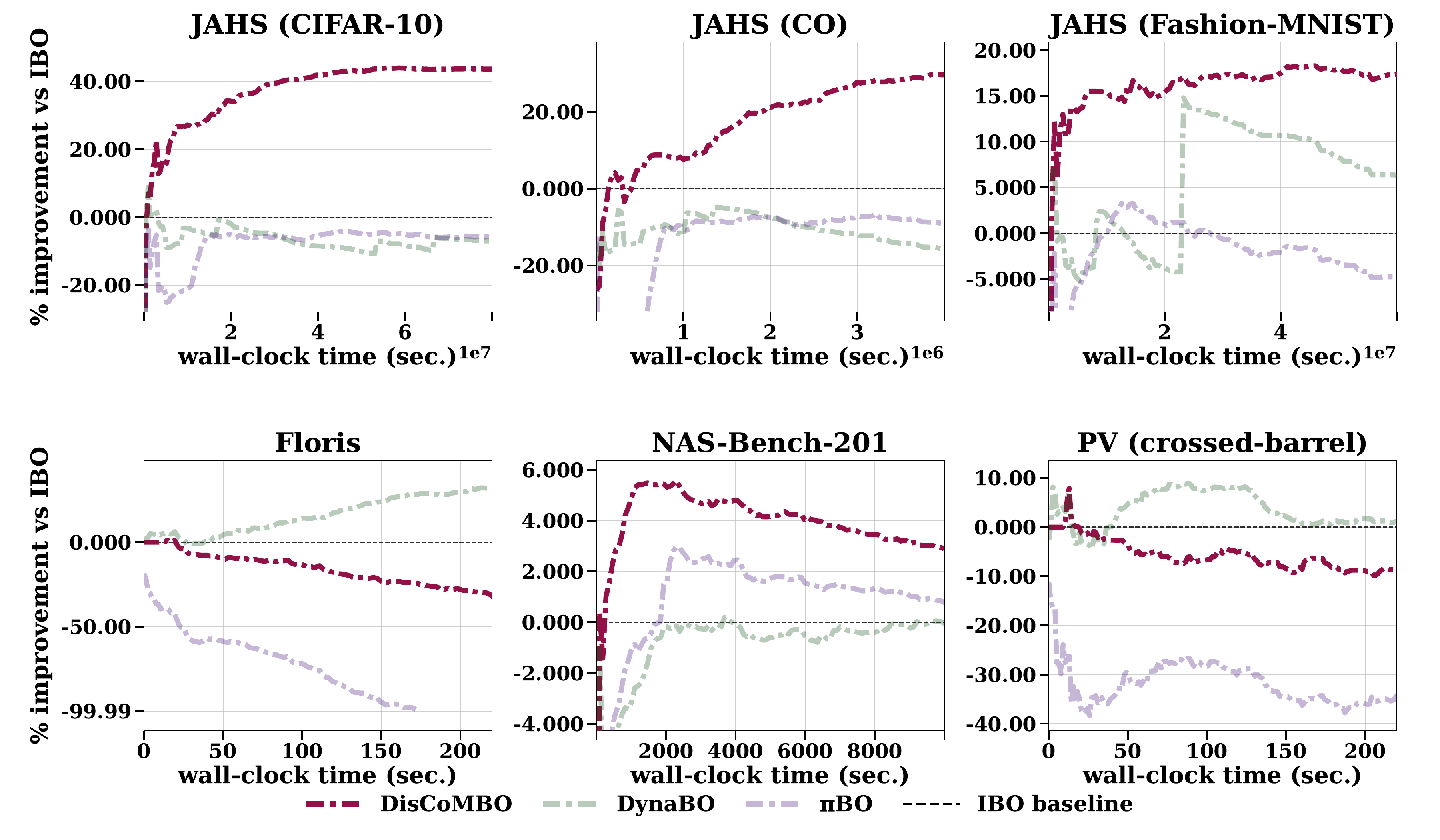}
    \caption{\textbf{Relative improvement of DisCoMBO against IBO-HPC over time with misleading prior at 5 iterations.} We observe that DisCoMBO (\dashdottedline{DisCoMBO}) consistently achieves a significant improvement over its fully generative counterpart IBO-HPC or matches its performance on \textbf{4/6} tasks when misleading priors are provided early during optimization, demonstrating that DisCoMBO effectively recovers from misleading priors.}
    \label{fig:full_diff2pc_recover}
\end{figure}
\begin{figure}
    \centering
    \includegraphics[width=1.0\linewidth]{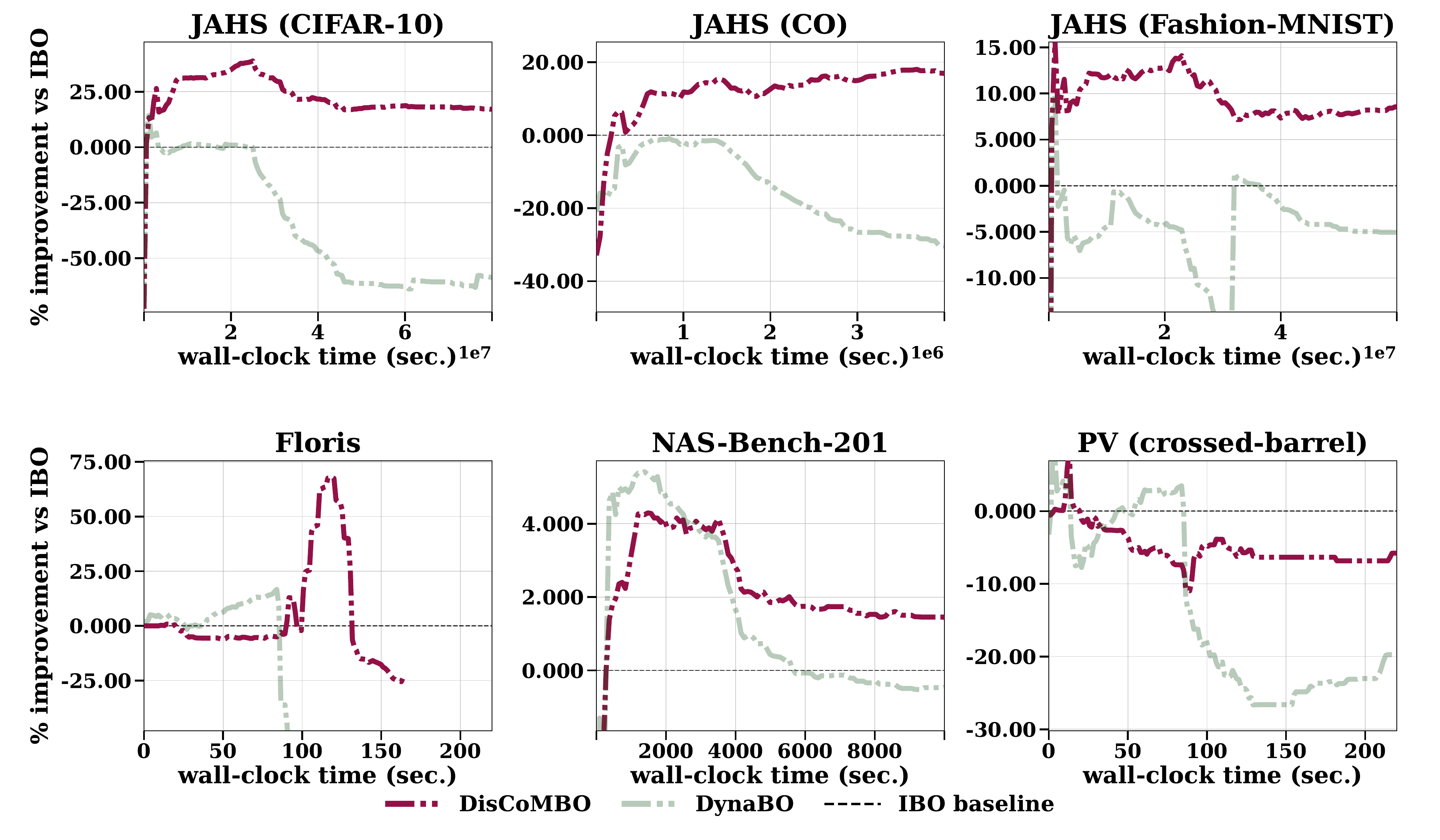}
    \caption{\textbf{Relative improvement of DisCoMBO against IBO-HPC over time with a sequence of priors given.} We observe that DisCoMBO (\dashdottedline{DisCoMBO}) consistently achieves a significant improvement over its fully generative counterpart IBO-HPC or matches its performance on \textbf{4/6} tasks when a sequence of priors with contradicting information is provided during optimization, demonstrating that DisCoMBO effectively handles multiple priors.}
    \label{fig:full_diff2pc_many}
\end{figure}

\begin{figure}
    \centering
    \includegraphics[width=1.0\linewidth]{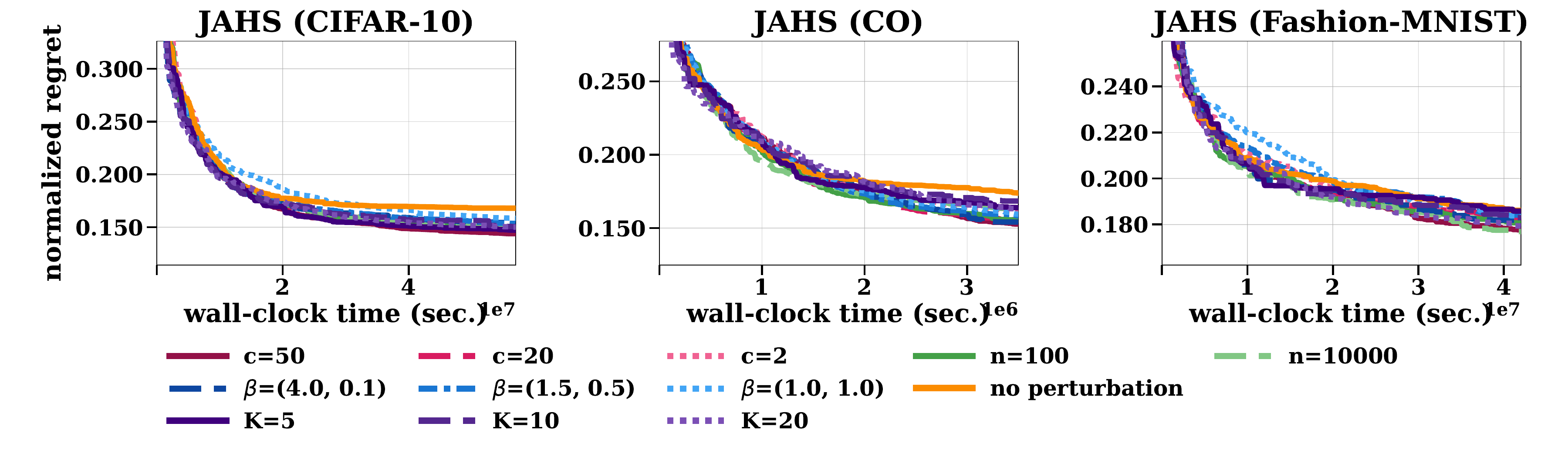}
    \caption{\textbf{DisCoMBO is robust across different hyperparameter settings.} It can be seen that DisCoMBO achieves similar performances regardless of how hyperparameters $c$, $n$, $\beta$, and $k$ are set. This demonstrates that no careful tuning of DisCoMBO's hyperparameters is necessary to achieve strong performance.}
    \label{fig:ablation_jahs}
\end{figure}

\begin{figure}
    \centering
    \includegraphics[width=1.0\linewidth]{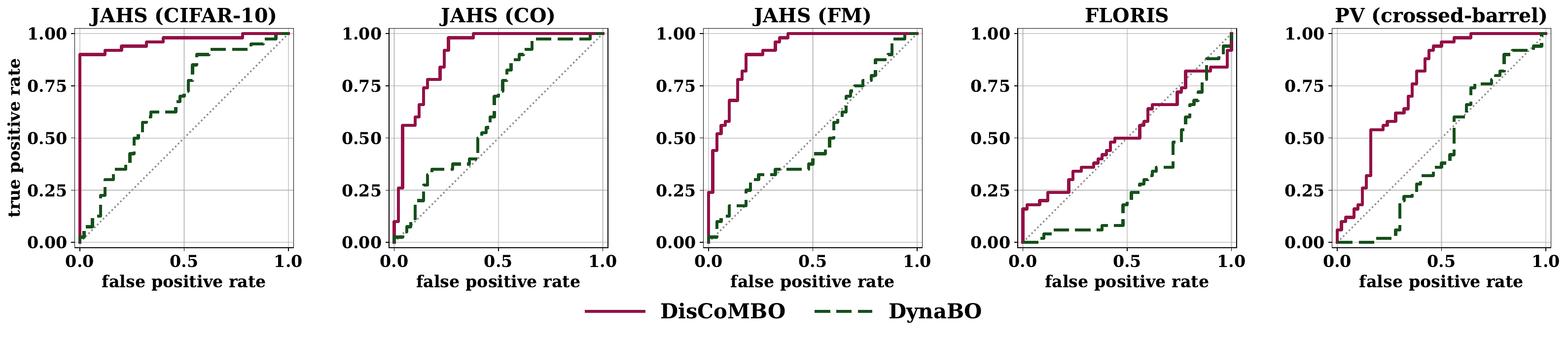}
    \caption{\textbf{DisCoMBO can discriminate between beneficial and adversarial priors.} On \textbf{4/5} datasets, our method (\solidline{DisCoMBO}) is able to identify most of the 10 good and 10 bad interventions provided to the model correctly as such. On \textbf{5/5}, DisCoMBO directly outperforms DynaBO (\dashedline{dynabo}), indicating it is well-suited for human-in-the-loop optimization scenarios even without extensive domain knowledge. }
    \label{fig:mis_prior_detection}
\end{figure}

\begin{table}[t]
\centering
\caption{\textbf{DisCoMBO is effective with broader priors.} When priors are not as specific as in our main results, DisCoMBO still clearly benefits from provided priors in terms of average speed-up. A value $> 1.0$ means, the algorithm was $(x - 1.0) \cdot 100$\% faster compared to DisCoMBO with no prior provided.}
\label{tab:speedup_results}
\begin{tabular}{lcc}
\toprule
\textbf{Benchmark/task} & \textbf{DisCoMBO (no interaction)} & \textbf{DisCoMBO (broad prior)} \\
\midrule
JAHS/cifar10               & 1.0000 & 2.1603 \\
JAHS/colorectal\_histology & 1.0000 & 1.9342 \\
JAHS/fashion\_mnist        & 1.0000 & 1.7872 \\
HPO-B/6767:31              & 1.0000 & 2.7492 \\
HPO-B/6767:9914            & 1.0000 & 2.1132 \\
HPO-B/6794:3               & 1.0000 & 1.7865 \\
HPO-B/6794:31              & 1.0000 & 1.8305 \\
HPO-B/6794:9914            & 1.0000 & 1.1544 \\
HPO-B/6794:10101           & 1.0000 & 1.5094 \\
\bottomrule
\end{tabular}
\end{table}

\clearpage
\subsection{Hyperparameters of DisCoMBO}
\label{app:hp_ibo}
DisCoMBO introduces a set of hyperparameters governing the annealing schedule, sample selection, and surrogate model updates. For the annealing schedule, which controls the length of exploration and exploitation phases as well as the skew of bucket definitions, we used the periodic function$$s(t) = \frac{\beta_0+\beta_1}{2} + \frac{\beta_0-\beta_1}{2} \cos\left(\frac{\pi t}{c}\right)$$with $\beta_0 = 3$ and $\beta_1 = 0.3$, and a cycle length of $c = 20$. At each iteration, we draw $N = 1000$ candidate samples from the conditional distribution $p(\mathbf{x}\vert{}y=y^*)$ and select $K$ samples, one for each out of the $B = K$ bins. Following \citet{seng2025ihpo}, drawn samples are perturbed by $5\%$. The size of the initial dataset $J$ is set equal to $K$. Specifically, we set $K = J = 20$ for JAHS and Nas201; $K = J = 10$ for HPO-B; and $K = J = 5$ for Floris and PV.For the surrogate model updates, we fix the interval during which the surrogate remains unchanged to $L = 5$ iterations, and set the decay factor to $\gamma = 0.9$. The underlying Probabilistic Circuit (PC) surrogates and their corresponding structure learning algorithm also require specific hyperparameter choices. Structure learning is driven by K-means clustering and Randomized Dependence Coefficient (RDC) independence tests, where we set the RDC threshold for independence detection to $0.3$. To handle varying data regimes during the optimization trajectory, the minimum number of instances per leaf node is adapted dynamically based on the total number of configurations evaluated up to the current iteration.To ensure a fair comparison across all evaluated methods, we fix the maximum optimization budget to $50$, $100$, or $250$ iterations, depending on the benchmark task. For all baseline methods, we set identical hyperparameters if applicable, or set the total number of evaluations equal to the budget of DisCoMBO. Finally, we repeated each experiment across 50 different seeds to ensure statistical significance. Sensitivity analyses and ablations evaluating alternative hyperparameter configurations are provided in App.~\ref{app:further_results}.

\subsection{Hardware and Computational Cost}
\label{app:hw}
Although our experimental setup does not require much compute, we ran all our experiments on DGX-A100 machines to speed up evaluation. We used 10 CPUs for each run, thus parallelizing some sub-routines (e.g., learning of PCs). We did not use any GPUs as we queried the benchmarks employed to provide the performance of configurations. The JAHS benchmark requires a relatively large RAM ($>16GB$) to run smoothly as it loads large ensemble models. 

\subsection{Used Assets}
\label{app:used_assets}
In our experimental evaluation, we used the official implementations and assets for JAHS~\citep{jahsbench201}, NAS-Bench-201~\citep{nasbench201}, HPO-B~\citep{arango2021hpoblargescalereproduciblebenchmark}, and PV (crossed-barrel)~\citep{gongora2020crossedbarrel}. All of these are open source and open for free use under the license CC BY 4.0.
The Floris (see \url{https://github.com/NatLabRockies/floris}) simulator is licensed under the BSD 3-Clause License and is free to use.

\subsection{Released Assets}
\label{app:released_assets}
Besides code, we also release two additional surrogate benchmarks for PV (crossed-barrel) and Floris. 

\paragraph{PV} We used the data provided by~\citet{gongora2020crossedbarrel} and trained a surrogate model on that data to simulate a non-discretized version of the benchmark. Following best practices of surrogate benchmarking, we trained a Gradient Boosting Tree model with 300 trees. To ensure model fitting quality, we applied 5-fold cross-validation. We provide the code for model training in our repository and provide the learned models in \url{https://figshare.com/s/f131750974b537c27808}.

\paragraph{Floris} Floris is a simulation-based benchmark. To decouple our evaluation setup from the simulator, we evaluated $10.000$ random configurations using the simulator provided at \url{https://github.com/NatLabRockies/floris} with default settings. Based on the obtained data, we trained a Gradient Boosting Tree model with 300 trees and applied 5-fold cross-validation for quality assurance. We provide the code for model training in our repository and provide the trained models at \url{https://figshare.com/s/f131750974b537c27808}.

\section{On Fisher's method in DisCo}
\label{app:fisher_discussion}

To quantify the degree to which an observation $\textbf{x}$ conforms to our learned distribution $p$, our score should fulfill the following desiderata: First, it should be bounded in $[0, 1]$, where $1$ indicates conformance, related to $\mathbf{x}$ being close to a mode of $p$, and $0$ indicates surprise, related to $\mathbf{x}$ being truly outside of the critical mass of $p$. Second, surprise in one dimension should translate into an aggregated value representing this surprise accurately. If $\mathbf{x}$ is highly unusual in any dimension, we want to treat it as being nonconformant w.r.t. $p$. Third, it should be symmetric and monotone to enable coherent ranking of observations. Lastly, it should be computationally efficient, i.e., provide linear runtime inference when computing the score in a probabilistic circuit. We found that Fisher's method \citep{fisher1932statisticalmethods}, originally designed to aggregate evidence in hypothesis testing, provides our desired properties. Since its result stems from a cumulative distribution function, it is guaranteed to be bound in $[0, 1]$. Moreover, its results can be recursively combined across arbitrary depths while retaining scale-invariance. Importantly, surprise found in one dimension of $\mathbf{x}$ is correctly reflected in the aggregated value opposed to e.g. mean-based aggregates, i.e., among a growing number of dimensions thar are conformant, the non-conformant magnitude determines the overall reported value. Numeric examples are provided in Tab.~\ref{tab:fishers_comparison}. Since Fisher's method is also symmetric and monotone, it allows for the order-isomorphism property to symmetric kernel-based approaches of our method. Lastly, the evaluation of the $\chi^2$ cumulative distribution function is tractable, preserving the tractability of computing DisCo in PCs. We emphasize that the resulting scores can not be treated as the result of a hypothesis test and have no formal implications in the sense of testing methodology. However, our empirical evaluations revealed the consistency of the score to quantify the conformance of observations to our learned distribution on diverse benchmarks, which is also shown in Fig.~\ref{fig:roc_misleading}.

\begin{table}[]
    \centering
    \begin{tabular}{llrrr}
        \toprule
         & \multicolumn{1}{c}{input} & \multicolumn{1}{c}{multiplication} & \multicolumn{1}{c}{arithmetic mean} & \multicolumn{1}{c}{Fisher $p$} \\
        \midrule
        a) & (1e-10, 1e-10) & $1.00e-20$ & $1.00e-10$ & $4.71e-19$ \\
        b) & (1e-10, 0.01) & $1.00e-12$ & $5.00e-3$ & $2.86e-11$ \\
        c) & (1e-10, 1) & $1.00e-10$ & $5.00e-1$ & $2.40e-9$ \\
        d) & (0.01, 1) & $1.00e-2$ & $5.05e-1$ & $5.61e-2$ \\
        e) & (1, 1) & $1.00e0$ & $1.00e0$ & $1.00e0$ \\
        f) & 1$\times$1e-10, 9$\times$1 & $1.00e-10$ & $9.00e-1$ & $7.93e-4$ \\
        g) & 1$\times$0.01, 9$\times$1 & $1.00e-2$ & $9.01e-1$ & $9.80e-1$ \\
        h) & 9$\times$1e-10, 1$\times$1 & $1.00e-90$ & $1.00e-1$ & $2.03e-75$ \\
        i) & 9$\times$0.01, 1$\times$1 & $1.00e-18$ & $1.09e-1$ & $1.26e-9$ \\
        \bottomrule
        \vspace{0.2cm}
    \end{tabular}
    \caption{Fisher's method ranks cases similarly to raw multiplication (a–c, h, i), but accounts for the number of observations. In g), nine ones and one $10^{-2}$ give the same product as d), yet Fisher's reports $p \approx 0.98$ instead of $\approx 0.06$ — a single weak signal among nine nulls is correctly judged uninformative. The arithmetic mean stays above $0.1$ even when nine of ten observations are tiny (h, i), washing out localized signal.}
    \label{tab:fishers_comparison}
\end{table}

\section{Broader Impact}
\label{app:broader_impact}
As a general-purpose black-box optimization framework, DisCoMBO carries both positive and negative downstream societal and environmental implications depending on its domain of application.

\paragraph{Positive Impacts} The primary utility of DisCoMBO lies in its data efficiency when optimizing complex, mixed-variable systems under expert prior knowledge. By accelerating workflows in scientific discovery and renewable energy—such as optimizing wind farm layouts to maximize power generation or streamlining materials discovery—the framework can directly contribute to resource and cost reductions. Furthermore, by matching or outperforming traditional SMBO baselines with fewer function evaluations, DisCoMBO can reduce the overall computational footprint and associated carbon emissions of intensive hyperparameter tuning and neural architecture search.

\paragraph{Negative Impacts \& Dual-Use Risks} Conversely, as an application-agnostic tool, DisCoMBO introduces dual-use risks. The framework's optimization efficiency could be exploited to enhance the performance of objectionable technologies, such as mass surveillance software or automated weapon systems. Additionally, because DisCoMBO natively integrates external knowledge, malicious actors could deliberately inject biased or adversarial priors to manipulate optimization trajectories toward harmful or discriminatory outcomes in sensitive socio-technical deployments.

\paragraph{Mitigation} We mitigate these risks by releasing our framework under open-source licenses that encourage transparent peer review. We urge practitioners deploying DisCoMBO in high-stakes domains to strictly audit the alignment and validity of external priors before integration.

\end{document}